\newtheorem{assumption}{Assumption}
\newtheorem{lemma}{Lemma}
\newtheorem{theorem}{Theorem}
\newtheorem{proposition}{Proposition}
\newcommand{\bbE}{\mathbb{E}}
\newcommand{\dint}{\mathrm{d}}
\def\eg{e.g.,~}
\def\wrt{w.r.t.~}
\def\resp{resp.~}
\newcommand{\dboijn}{\Delta_{t_2} \beta_{1ij}^{n}}
\newcommand{\daijn}{\Delta a_{ij}^{n}}
\newcommand{\dbijn}{\Delta b_{ij}^{n}}
\newcommand{\dsijn}{\Delta \sigma_{ij}^{n}}
\newcommand{\dbojn}{\Delta_{t_2} \beta_{1j}^{n}}
\newcommand{\dajn}{\Delta a_{j}^{n}}
\newcommand{\dbjn}{\Delta b_{j}^{n}}
\newcommand{\dsjn}{\Delta \sigma_{j}^{n}}
\newcommand{\bzin}{\beta^{n}_{0i}}
\newcommand{\boin}{\beta^{n}_{1i}}
\newcommand{\ain}{a_i^n}
\newcommand{\bin}{b_i^n}
\newcommand{\sigmain}{\sigma_i^n}
\newcommand{\bzjn}{\beta^{n}_{0j}}
\newcommand{\bojn}{\beta^{n}_{1j}}
\newcommand{\ajn}{a_j^n}
\newcommand{\bjn}{b_j^n}
\newcommand{\sigmajn}{\sigma_j^n}
\newcommand{\bzj}{\beta_{0j}^{*}}
\newcommand{\boj}{\beta_{1j}^{*}}
\newcommand{\aj}{a_{j}^{*}}
\newcommand{\bj}{b_{j}^{*}}
\newcommand{\sigmaj}{\sigma_{j}^{*}}
\newcommand{\bzjp}{\beta_{0j'}^{*}}
\newcommand{\bojp}{\beta_{1j'}^{*}}
\newcommand{\zerod}{\mathbf{0}_d}
\newcommand{\brj}{\bar{r}(|\mathcal{A}_j|)}
\newcommand{\daione}{\Delta a_{i1}^{n}}
\newcommand{\dbione}{\Delta b_{i1}^{n}}
\newcommand{\dsione}{\Delta \sigma_{i1}^{n}}
\newcommand{\softmax}{\mathrm{Softmax}}
\DeclareMathOperator*{\argmax}{arg\,max}
\DeclareMathOperator*{\argmin}{arg\,min}
\title{Demystifying Softmax Gating Function in\\ Gaussian Mixture of Experts}
\author{%
  Huy Nguyen \\
  Department of Statistics and Data Sciences\\
  The University of Texas at Austin\\
  Austin, TX 78712 \\
  \texttt{huynm@utexas.edu} \\
  \And
  TrungTin Nguyen \\
  Univ. Grenoble Alpes, Inria \\ CNRS, Grenoble INP, LJK,\\ 38000 Grenoble, France\\
  \texttt{trung-tin.nguyen@inria.fr} \\
  \AND
  Nhat Ho \\
  Department of Statistics and Data Sciences\\
  The University of Texas at Austin\\
  Austin, TX 78712 \\
  \texttt{minhnhat@utexas.edu} \\
}
\begin{document}

\maketitle

\begin{abstract}
Understanding the parameter estimation of softmax gating Gaussian mixture of experts has remained a long-standing open problem in the literature. It is mainly due to three fundamental theoretical challenges associated with the softmax gating function: (i) the identifiability only up to the translation of parameters; (ii) the intrinsic interaction via partial differential equations between the softmax gating and the expert functions in the Gaussian density; (iii) the complex dependence between the numerator and denominator of the conditional density of softmax gating Gaussian mixture of experts. We resolve these challenges by proposing novel Voronoi loss functions among parameters and establishing the convergence rates of maximum likelihood estimator (MLE) for solving parameter estimation in these models. When the true number of experts is unknown and over-specified, our findings show a connection between the convergence rate of the MLE and a solvability problem of a system of polynomial equations. 
\end{abstract}

\section{Introduction}
Softmax gating Gaussian mixture of experts~\cite{Jacob_Jordan-1991, Jordan-1994}, a class of statistical machine learning models that combine multiple simpler models, known as expert functions of the covariates, via softmax gating networks to form more complex and accurate models, has found widespread use in various applications, including speech recognition~\cite{peng_bayesian_1996, You_Speech_MoE, You_Speech_MoE_2}, natural language processing~\cite{do_hyperrouter_2023,Eigen_learning_2014, Du_Glam_MoE, Quoc-conf-2017, Shazeer_JMLR}, computer vision~\cite{Ruiz_Vision_MoE,bao_vlmo_2022,dosovitskiy_image_2021,liang_m3vit_2022}, and other applications~\cite{hazimeh_dselect_k_2021, Mustafa_multimodal_MoE,chamroukhi_time_2009,chamroukhi_hidden_2010,montuelle_mixture_2014,chamroukhi_regularized_2018,chamroukhi_regularized_2019}. 
Regarding the applications of the softmax gating Gaussian mixture of experts in medicine \cite{li2019drug} and physical sciences \cite{Kuusela2011SemisupervisedAD}, the parameters of each expert function play an important role in capturing the heterogeneity of data. Thus, the main objective of these works is to conduct statistical inference for those parameters, which leads to a need for convergence rates of parameter estimation in the softmax gating Gaussian mixture of experts. However, a comprehensive theoretical understanding of parameter estimation in that model has still remained a long-standing open problem in the literature. 

Parameter estimation has been studied quite extensively in standard mixture models. In his seminal work, Chen et al.~\cite{Chen1992} established the convergence rate $\mathcal{O}(n^{-1/4})$ of parameter estimation in over-fitted univariate mixture models, namely, the settings when the number of true components is unknown and over-specified, and the family of distributions is strongly identifiable in the second order, e.g., location Gaussian distributions. That slow and non-standard rate is due to the collapse of some parameters into single pararameter or the convergence of weights to zero, which leads to the singularity of Fisher information matrix around the true parameters. Then, Nguyen et al.~\cite{Nguyen-13} and Ho et al.~\cite{Ho-Nguyen-EJS-16} utilized Wasserstein metrics to achieve this rate under the multivariate settings of second-order strongly identifiable mixture models. Recently, Ho et al.~\cite{Ho-Nguyen-Ann-16} demonstrated that rates of the MLE can strictly depend on the number of over-specified components when the mixture models are not strongly identifiable, such as location-scale Gaussian mixtures. The minimax optimal behaviors of parameter estimation were studied in~\cite{heinrich2018, Manole_2020}. From the computational side, the statistical guarantee of the expectation-maximization (EM), \eg~\cite{dempster_maximum_1977}, and moment methods had also been studied under both exact-fitted~\cite{Siva_2017, Anandkumar_moment_method, Hardt_mixture} and over-fitted settings~\cite{Raaz_Ho_Koulik_2020, Raaz_Ho_Koulik_2018_second, wu2020a, doss_optimal_2023, Wu_minimax_EM} of mixture models.

Compared to mixture models, there has been less research on parameter estimation of mixture of experts. When the gating networks are independent of the covariates, Ho et al.~\cite{ho2022gaussian} employed the generalized Wasserstein loss function \cite{Villani-09} to study the convergence rates of parameter estimation in Gaussian mixture of experts. They proved that these rates are determined by the algebraic independence of the expert functions and the partial differential equations with respect to the parameters. Later, Do et al.~\cite{do_strong_2022} extended these results to general mixture of experts with covariate-free gating network. Statistical guarantees of optimization methods for solving parameter estimation in Gaussian mixture of experts with covariate-free gating functions were studied in~\cite{Chen_mixed_regression, Zhong_mixed_regression, Kwon_minimax_EM, Yi_mixed_regression}. When the gating networks are softmax functions, parameter estimation becomes more challenging to understand due to the complex structures of the softmax gating function in the Gaussian mixture of experts. Before describing these phenomena in further details, we begin by formally introducing the softmax gating Gaussian mixture of experts and related notions. 

\textbf{Problem setting:} Assume that $(X_{1}, Y_{1}), \ldots, (X_{n}, Y_{n}) \in \mathbb{R}^{d} \times \mathbb{R}$ are i.i.d. samples drawn from the softmax gating Gaussian mixture of experts of order $k_{*}$ whose conditional density function $g_{G_{*}}(Y|X)$ is given by:
\begin{align}
    g_{G_{*}}(Y|X):=\sum_{i = 1}^{k_{*}} \frac{\exp((\beta_{1i}^{*})^{\top} X + \beta_{0i}^{*})}{\sum_{j = 1}^{k_{*}} \exp((\beta_{1j}^{*})^{\top} X + \beta_{0j}^{*})} f(Y|(a_{i}^{*})^{\top} X + b_{i}^{*}, \sigma_{i}^{*}), \label{eq:mixture_expert}
\end{align}
where $f(.|\mu, \sigma)$ is a Gaussian density function with mean $\mu$ and variance $\sigma$. Here, we define $G_{*} : = \sum_{i = 1}^{k_{*}} \exp(\beta_{0i}^{*}) \delta_{(\beta_{1i}^{*}, a_{i}^{*}, b_{i}^{*}, \sigma_{i}^{*})}$ as a true but unknown \emph{mixing measure}, that is, a combination of Dirac measures $\delta$ associated with true parameters $\theta^*_i:=(\beta_{0i}^{*}, \beta_{1i}^{*}, a_{i}^{*}, b_{i}^{*}, \sigma_{i}^{*})$. Notably, $G_{*}$ is not necessarily a probability measure as the summation of its weights can be different from one. For the purpose of the theory, we assume that $\theta^*_i \in \Theta \subset \mathbb{R} \times \mathbb{R}^{d} \times \mathbb{R}^{d} \times \mathbb{R} \times \mathbb{R}_{+}$ where $\Theta$ is a compact set, and $X \in \mathcal{X} \subset \mathbb{R}^{d}$ where $\mathcal{X}$ is a bounded set. Furthermore, we let $(a_{1}^{*}, b_{1}^{*}, \sigma_{1}^{*}), \ldots, (a_{k_{*}}^{*}, b_{k_{*}}^{*}, \sigma_{k_{*}}^{*})$ be pairwise distinct and at least one among $\beta_{11}^{*}, \ldots, \beta_{1k_{*}}^{*}$ be non-zero to guarantee the dependence of softmax gating function on the covariate $X$. Finally, we assume that the covariate $X$ follows a continuous distribution to ensure that the softmax gating Gaussian mixture of experts is at least identifiable up to translations (see Proposition~\ref{prop_identify}).

\textbf{Maximum likelihood estimation.} Since the value of true order $k_{*}$ is unknown in practice, to estimate the unknown parameters in the softmax gating Gaussian mixture of experts~\eqref{eq:mixture_expert}, we consider using maximum likelihood estimation (MLE) within a class of all mixing measures with at most $k$ components, which is defined as follows:
\begin{align}
    \widehat{G}_{n} \in \argmax_{G \in \mathcal{O}_{k}(\Theta)} \frac{1}{n} \sum_{i = 1}^{n} \log(g_{G}(Y_{i}|X_{i})), \label{eq:MLE}
\end{align}
where $\mathcal{O}_{k}(\Theta) : = \{G = \sum_{i = 1}^{k'} \exp(\beta_{0i}) \delta_{(\beta_{1i}, a_{i}, b_{i}, \sigma_{i})}: 1\leq k' \leq k \ \text{and} \ (\beta_{0i}, \beta_{1i}, a_{i}, b_{i}, \sigma_{i}) \in \Theta \}$. To guarantee that the MLE $\widehat{G}_{n}$ is a consistent estimator of $G_{*}$, we need $k \geq k_{*}$. 
In this paper, we study the convergence rate of the MLE $\widehat{G}_{n}$ to the true mixing measure $G_{*}$ under both the exact-fitted settings, namely when $k=k_*$, and the over-fitted settings, namely when $k>k_*$, of the softmax gating Gaussian mixture of experts.

\textbf{Fundamental challenges from the softmax gating function:} There are three fundamental challenges arising from the softmax gating function that create various obstacles in our convergence analysis: 

\textbf{(i)} Firstly, parameters $\beta^*_{1i}, \beta^*_{0i}$ of the softmax gating function are not identifiable as those of the covariate-independent gating function in previous work. Instead, they are identifiable up to translations, that is, the softmax gating value does not change when we translate  $\beta_{0i}^{*}$ to $\beta_{0i}^{*} + t_{1}$ and $\beta_{1i}^{*}$ to $\beta_{1i}^{*} + t_{2}$ for any $t_{1}\in\mathbb{R}$ and $t_{2}\in\mathbb{R}^d$. As a consequence, we need to introduce an infimum operator in the Voronoi loss functions (see equations~\eqref{eq:Voronoi_experts_exact} and \eqref{eq:Voronoi_experts_overfit}) to deal with this issue. 

\textbf{(ii)} Secondly, a key step in our proof techniques is to decompose the density discrepancy $g_{\widehat{G}_n}(Y|X)-g_{G_*}(Y|X)$ into a linear combination of linearly independent elements using Taylor expansions. However, since the numerators and denominators of softmax gating functions are dependent, we cannot apply the Taylor expansions directly to that density discrepancy as in previous work \cite{ho2022gaussian,do_strong_2022}.
Moreover, there are two intrinsic interactions between parameters of the softmax gating's numerators and the Gaussian density function via the following partial differential equations (PDEs), which induce a lot of linearly dependent derivative terms in the Taylor expansions:
\begin{align}
    \label{eq:PDE_expert}
    \frac{\partial^2 u}{\partial \beta_1\partial b} = \frac{\partial u}{\partial a}; \qquad \frac{\partial^2 u}{\partial b^2}=2\frac{\partial u}{\partial \sigma},
\end{align}
where $u(Y|X;\beta_1,a,b,\sigma) := \exp(\beta_{1}^{\top}X) \cdot f(Y|a^{\top}X + b, \sigma)$. Therefore, it takes us great effort to group those linearly dependent terms together to obtain the desired linear combination of linearly independent terms. 

\textbf{(iii)} Lastly, given the above linear combination of linearly independent elements, when the density estimation $g_{\widehat{G}_n}(Y|X)$ converges to the true density $g_{G_*}(Y|X)$, the associated coefficients in that combination also go to zero. Then, via some transformations, those limits lead to a system of polynomial equations introduced in equation~\eqref{eq:system_equation}. This system admits a much more complex structure than what considered in previous work \cite{ho2022gaussian,do_strong_2022}.

These fundamental challenges from the softmax gating function suggest that the previous loss functions, such as Wasserstein distance~\cite{Nguyen-13, Ho-Nguyen-Ann-16, ho2022gaussian}, being employed to study parameter estimation in standard mixture models or mixture of experts with covariate-free gating functions are no longer sufficient as they heavily rely on the assumptions that the weights of these models are independent of the covariates.


\textbf{Main contributions:} To tackle these challenges of the softmax gating function, we propose two novel Voronoi losses among parameters and establish the lower bounds of the Hellinger distance, denoted as $h(\cdot,\cdot)$, of the mixing densities of softmax gating Gaussian mixture of experts in terms of these Voronoi losses to capture the behaviors of the MLE. Our results can be summarized as follows (see also Table~\ref{table:parameter_rates}): 

\textbf{1. Exact-fitted settings}: When $k = k_{*}$, we demonstrate that the Hellinger lower bound $\bbE_X[h(g_{G}(\cdot|X), g_{G_{*}}(\cdot|X))] \geq C \cdot \mathcal{D}_{1}(G, G_{*})$ holds for any mixing measure $G \in \mathcal{O}_{k}(\Theta)$, where $C$ is some universal constant and the Voronoi metric $\mathcal{D}_{1}(G, G_{*})$ is defined as:
    \begin{align}
    \mathcal{D}_{1}(G, G_{*}) & : = \inf_{t_{1}, t_{2}}~ \sum_{j = 1}^{k_{*}} \Bigg[\sum_{i \in \mathcal{A}_{j}} \exp(\beta_{0i}) \|(\Delta_{t_{2}} \beta_{1ij}, \Delta a_{ij},  \Delta b_{ij}, \Delta \sigma_{ij})\| 
    \nonumber \\
    & \hspace{16 em}  
    + \Big|\sum_{i \in \mathcal{A}_{j}} \exp(\beta_{0i}) - \exp(\beta_{0j}^{*} + t_{1})\Big|\Bigg],  \label{eq:Voronoi_experts_exact}
\end{align}
    where $\Delta_{t_{2}} \beta_{1ij} : = \beta_{1i} - \beta_{1j}^{*} - t_{2}$, $\Delta a_{ij} : = a_{i} - a_{j}^{*}$, $\Delta b_{ij} : = b_{i} - b_{j}^{*}$, $\Delta \sigma_{ij} : = \sigma_{i} - \sigma_{j}^{*}$. The infimum over $t_{1}\in\mathbb{R}$ and $t_{2}\in\mathbb{R}^d$ is to account for the identifiability up to the translation of $(\beta_{0j}^{*}, \beta_{1j}^{*})_{j = 1}^{k_{*}}$. 
    Furthermore, $\mathcal{A}_j$ is a Voronoi cell of mixing measure $G$ generated by the true component $\omega_{j}^{*} := (\beta^*_{1j},a_{j}^{*}, b_{j}^{*}, \sigma_{j}^{*})$ for all $1 \leq j \leq k_{*}$ \cite{manole22refined}, which is defined as follows:
    \begin{align}
        \mathcal{A}_{j}\equiv\mathcal{A}_j(G) : = 
  \{i \in \{1, 2, \ldots, k\}: \|\omega_{i} - \omega_{j}^{*}\| \leq \|\omega_{i} - \omega_{\ell}^{*}\|, \ \forall \ell \neq j
  \}, \label{eq_Voronoi_definition}
    \end{align}
    where we denote $\omega_{i} := (\beta_{1i},a_{i}, b_{i}, \sigma_{i})$. It is worth noting that the cardinality of each Voronoi cell $\mathcal{A}_j$ indicates the number of components of $G$ approximating the true component $\omega^*_j$ of $G_*$.
    As $\bbE_X[h(g_{\widehat{G}_{n}}(\cdot|X), g_{G_{*}}(\cdot|X))] = \mathcal{O}(n^{-1/2})$, that lower bound of Hellinger distance indicates that $\mathcal{D}_{1}(\widehat{G}_{n}, G_{*}) = \mathcal{O}(n^{-1/2})$. Therefore, the rates of estimating $\exp(\beta_{0j}^{*}), \beta_{1j}^{*}$ (up to translations) and $a_{j}^{*}, \beta_{j}^{*}, \sigma_{j}^{*}$ are of optimal order $\mathcal{O}(n^{-1/2})$.
    
\textbf{2. Over-fitted settings}: When $k > k_{*}$, the lower bound of Hellinger distance in terms of the Voronoi metric $\mathcal{D}_{1}$ in the exact-fitted settings is no longer sufficient due to the collapse of softmax of vectors in possibly $k$ dimensions to softmax of vectors in $k_{*}$ dimensions. Our approach is to define more fine-grained Voronoi metric $\mathcal{D}_{2}(G, G_{*})$ to capture such collapse, which is given by:
\begin{align}
    \mathcal{D}_{2}(G, G_{*}) & : = \inf_{t_{1}, t_{2}} \sum_{j: |\mathcal{A}_{j}| > 1} \sum_{i \in \mathcal{A}_{j}} \exp(\beta_{0i}) \Big(\|(\Delta_{t_{2}} \beta_{1ij},  \Delta b_{ij})\|^{\bar{r}(|\mathcal{A}_{j}|)} + \|(\Delta a_{ij},\Delta \sigma_{ij})\|^{\bar{r}(|\mathcal{A}_{j}|)/2}\Big) \nonumber \\
    & \hspace{- 6 em} + \sum_{j: |\mathcal{A}_{j}| = 1} \sum_{i \in \mathcal{A}_{j}} \exp(\beta_{0i}) \|(\Delta_{t_{2}} \beta_{1ij}, \Delta a_{ij},  \Delta b_{ij}, \Delta \sigma_{ij})\|  
    + \sum_{j = 1}^{k_{*}} \Big|\sum_{i \in \mathcal{A}_{j}} \exp(\beta_{0i}) - \exp(\beta_{0j}^{*} + t_{1})\Big|,  \label{eq:Voronoi_experts_overfit}
\end{align}
for any mixing measure $G \in \mathcal{O}_{k}(\Theta)$. Here, the values of function $\bar{r}(\cdot)$ are determined by the solvability of a system of polynomial equations defined in equation~\eqref{eq:system_equation}. We then show in Lemma~\ref{lemma:value_r} that $\bar{r}(2)=4$, $\bar{r}(3)=6$, and we conjecture that $\bar{r}(m)=2m$ for any $m\geq 2$. 

In high level, the aforementioned system of polynomial equations arises from the PDEs in equation~\eqref{eq:PDE_expert} when we establish the lower bound $\bbE_X[h(g_{G}(\cdot|X), g_{G_{*}}(\cdot|X))] \geq C' \mathcal{D}_{2}(G, G_{*})$ for any $G \in \mathcal{O}_{k}(\Theta)$ for some universal constant $C'$. Since $\bbE_X[h(g_{\widehat{G}_{n}}(\cdot|X), g_{G_{*}}(\cdot|X))] = \mathcal{O}(n^{-1/2})$, we also have $\mathcal{D}_{2}(\widehat{G}_{n}, G_{*}) = \mathcal{O}(n^{-1/2})$ under the over-fitted settings of the softmax gating Gaussian mixture of experts. As a consequence, the rates for estimating true parameters whose Voronoi cells have only one component of the MLE are of order $\mathcal{O}(n^{-1/2})$. On the other hand, for true parameters $\exp(\beta_{0j}^{*}), \beta_{1j}^{*}, a_{j}^{*}, b_{j}^{*}, \sigma_{j}^{*}$ whose Voronoi cells have more than one component of the MLE, the estimation rates are respectively  $\mathcal{O}(n^{-1/2\bar{r}(|\mathcal{A}_{j}|)})$ for $\beta_{1j}^{*}, b_{j}^{*}$, $\mathcal{O}(n^{-1/\bar{r}(|\mathcal{A}_{j}|)})$ for $a_{j}^{*}, \sigma_{j}^{*}$, and $\mathcal{O}(n^{-1/2})$ for $\exp(\beta_{0j}^{*})$. This rich spectrum of parameter estimation rates is due to the complex interaction between the softmax gating and the expert functions. 

\begin{table*}[!ht]
\centering
\begin{tabular}{ | c | c | c | c |c|c|c|c|} 
\hline
\textbf{Setting} & \textbf{Loss Function} & $g_{G_{*}}(Y|X)$ & $\exp(\beta_{0j}^{*})$ & $\beta_{1j}^{*}, b_{j}^{*}$& $a_{j}^{*}, \sigma_{j}^{*}$\\
\hline 
{Exact-fitted} & $\mathcal{D}_{1}$ &$\mathcal{O}(n^{-1/2})$ & $\mathcal{O}(n^{-1/2})$&$\mathcal{O}(n^{-1/2})$ &$\mathcal{O}(n^{-1/2})$ \\
\hline
{Over-fitted} & $\mathcal{D}_{2}$  &$\mathcal{O}(n^{-1/2})$  & $\mathcal{O}(n^{-1/2})$ &$\mathcal{O}(n^{-1/2\bar{r}(|\mathcal{A}_{j}|)})$  & $\mathcal{O}(n^{-1/\bar{r}(|\mathcal{A}_{j}|)})$\\
\hline
\end{tabular}
\caption{Summary of density estimation and parameter estimation rates in the softmax gating Gaussian mixture of experts under both the exact-fitted and over-fitted settings. 
Recall that the cardinality of each Voronoi cell $\mathcal{A}_j$ gives the number of fitted components approximating true component $\omega^*_j=(\beta_{1j}^{*}, a_{j}^{*}, b_{j}^{*}, \sigma_{j}^{*})$ (see equation~\eqref{eq_Voronoi_definition}). Furthermore, the notation $\brj$ stands for the solvability of the system of polynomial equations~\eqref{eq:system_equation}. For instance, if $\omega^*_j$ is fitted by two components, then we have $|\mathcal{A}_j|=2$ and $\brj=4$. Please refer to Lemma~\ref{lemma:value_r} for more details of the values of function $\bar{r}$.
}
\label{table:parameter_rates}
\end{table*}

\textbf{Practical implication:} Although the slow rates of the MLE under the over-fitted settings of the softmax gating Gaussian mixture of experts may seem discouraging, a practical implication of these results is that we should not choose the number of experts $k$ to be very large compared to the true number of experts $k_{*}$. Furthermore, the slow rates can also be useful for post-processing procedures, such as merge-truncate-merge procedure~\cite{guha2021Bernoulli}, with the MLE to reduce the number of experts so as to consistently estimate $k_{*}$ when the number of data is sufficiently large. In particular, an important insight from the theoretical results is that we can merge the MLE parameters that are close and within the range of their rates of convergence or truncate the parameters that lead to small weights of the experts. As the sample size becomes sufficiently large, the reduced number of experts may converge to the true number of experts. We leave an investigation of such model selection with the Gaussian mixture of experts via the rates of MLE for future work.  

\textbf{Organization:} The paper is organized as follows. In Section~\ref{sec:background}, we first provide background on the identifiability and rate of conditional density estimation in the softmax gating Gaussian mixture of experts. Next, we proceed to establish the convergence rate of the MLE under both the exact-fitted and over-fitted settings of these models in Section~\ref{sec:convergence_rate}. Then, we conclude the paper with a few discussions in Section~\ref{sec:discussion}. Finally, full proofs of the results and a simulation study are provided in the Appendices.

\textbf{Notation:} Firstly, we denote $[n]: = \{1, 2, \ldots, n\}$ for any positive integer $n$. Next, for any vector $u \in \mathbb{R}^{d}$ and $z:=(z_1,z_2,\ldots,z_d)\in\mathbb{N}^d$, we denote $u^z=u_{1}^{z_{1}}u_{2}^{z_{2}}\ldots u_{d}^{z_{d}}$, $|u|:=u_1+u_2+\ldots+u_d$ and $z!:=z_{1}!z_{2}!\ldots z_{d}!$, while $\|u\|$ represents for its $2$-norm value.
Additionally, the notation $|A|$ indicates the cardinality of any set $A$. Given any two positive sequences $\{a_n\}_{n\geq 1}$ and $\{b_n\}_{n\geq 1}$, we write $a_n = \mathcal{O}(b_n)$ or $a_{n} \lesssim b_{n}$ if $a_n \leq C b_n$ for all $ n\in\mathbb{N}$, where $C > 0$ is some universal constant. Lastly, for any two probability density functions $p,q$ dominated by the Lebesgue measure $\mu$, we denote $h^2(p,q) = \frac 1 2 \int (\sqrt p - \sqrt q)^2 d\mu$ as the their squared Hellinger distance and $V(p,q) = \frac 1 2 \int |p-q| d\mu$ as their Total Variation distance.
\section{Background}
\label{sec:background}
In this section, we begin with the following result on the identifiability of the softmax gating Gaussian mixture of experts, which was previously studied in~\cite{Jiang-1999}.
\begin{proposition} [Identifiability of the softmax gating Gaussian mixture of experts]
\label{prop_identify}
For any mixing measures $G = \sum_{i = 1}^{k} \exp(\beta_{0i}) \delta_{(\beta_{1i}, a_{i}, b_{i}, \sigma_{i})}$ and $G' = \sum_{i = 1}^{k'} \exp(\beta_{0i}') \delta_{(\beta_{1i}', a_{i}', b_{i}', \sigma_{i}')}$, if we have $g_{G}(Y|X) = g_{G'}(Y|X)$ for almost surely $(X, Y)$, then it follows that $k = k'$ and $G \equiv G'_{t_1,t_2}$ where $G'_{t_1,t_2}:=\sum_{i=1}^{k'}\exp(\beta'_{0i}+t_1)\delta_{(\beta'_{1i}+t_2,a'_i,b'_i,\sigma'_i)}$ for some $t_1\in\mathbb{R}$ and $t_2\in\mathbb{R}^d$.

\end{proposition}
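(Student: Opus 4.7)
The plan is to exploit two layers of identifiability. At the outer layer, for each fixed $X$ the conditional density $g_G(\cdot|X)$ is a finite mixture of univariate Gaussians in $Y$, and Teicher's identifiability of finite Gaussian location-scale mixtures will match the expert triples $(a_i,b_i,\sigma_i)$. At the inner layer, the equality of softmax masses as functions of $X$ is analysed through linear independence of distinct real exponentials, which is where the translation freedom in $(\beta_{0i},\beta_{1i})$ enters.

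First I would consolidate: group the atoms of $G$ by their shared expert triple and write
\[
g_G(Y|X)=\sum_{\theta\in\Theta_G} W_\theta(X)\,f\bigl(Y|a^\top X+b,\sigma\bigr),
\]
where $\Theta_G$ lists the distinct $(a,b,\sigma)$ appearing in $G$ and $W_\theta(X)$ is the total softmax weight on the block. For two distinct triples $\theta\neq\theta'$, either the scales differ or the identity $a^\top X+b=(a')^\top X+b'$ cuts out an affine hyperplane, so outside a finite union of such hyperplanes the $Y$-means attached to different blocks are distinct. For such $X$, Teicher's identifiability of finite Gaussian mixtures produces a bijection $\pi_X:\Theta_G\to\Theta_{G'}$ matching scales, means and total block weights. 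Only finitely many bijections $\pi$ are possible and each is realised on a measurable set of $X$, so some fixed $\pi$ is realised on a set of positive Lebesgue measure; the affine identity $a^\top X+b=(a')^\top X+b'$ on that set extends by analyticity to all of $\mathbb{R}^d$, forcing $a=a'$, $b=b'$, $\sigma=\sigma'$ on matched blocks. Hence $\Theta_G=\Theta_{G'}$ and $W_\theta(X)=W_\theta'(X)$ for every $\theta$ and every $X$.

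Next I would extract the softmax parameters. Setting $F_\theta(X):=\sum_{i\in I_\theta}e^{\beta_{1i}^\top X+\beta_{0i}}$ and analogously $F'_\theta$, and letting $Z,Z'$ denote the two partition functions, the identities $W_\theta=W_\theta'$ rewrite as $F_\theta(X)/F'_\theta(X)=Z(X)/Z'(X)=:R(X)$, a quantity \emph{independent of $\theta$}. Dividing two such identities for distinct blocks cancels $Z,Z'$ and yields the bilinear relation $F_\theta(X)F'_{\theta'}(X)=F_{\theta'}(X)F'_\theta(X)$. Expanding both sides as positive linear combinations of exponentials $e^{\mu^\top X+\nu}$ and invoking linear independence of distinct real exponentials, the two multisets of (exponent, coefficient) pairs must coincide. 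A leading-exponential analysis along generic one-parameter families $X=tv$, $t\to\infty$, then forces $R$ to be a single exponential $e^{t_2^\top X+t_1}$. Once this is in hand, the identity $F_\theta=e^{t_2^\top X+t_1}F'_\theta$ reduces, again by linear independence, to a block-preserving bijection $\pi$ with $\beta'_{1,\pi(i)}=\beta_{1i}-t_2$ and $\beta'_{0,\pi(i)}=\beta_{0i}-t_1$. Combined with the outer matching, this produces $k=k'$ and $G\equiv G'_{t_1,t_2}$.

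The hard part is the step forcing $R(X)$ to be a \emph{single} exponential rather than a genuine ratio of exponential sums: this is the combinatorial core of the argument and the source of the unavoidable translation freedom in Proposition~\ref{prop_identify}. I would attack it by extracting the dominant exponent of each $F_\theta$ and $F'_\theta$ along generic directions $v$, so that the common ratio $R$ has the same leading behaviour no matter which block it is computed from; this pins down the global translations $t_2$ (from the difference of leading $\beta_1$ coefficients) and $t_1$ (from the difference of leading $\beta_0$ coefficients), and iterating the peeling rules out any alternative multiset factorisation.
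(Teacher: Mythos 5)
Your outer layer (Teicher identifiability of finite location--scale Gaussian mixtures for generic $X$, measurable selection of a single matching bijection, and extension of the affine identities off a null set) is sound and in fact more careful than the paper's one-line invocation of Gaussian mixture identifiability. The gap is exactly where you predicted it: the claim that the identities $F_\theta=R\,F'_\theta$ force $R(X)=Z(X)/Z'(X)$ to be a \emph{single} exponential is false in the generality you set up, namely when a block $I_\theta$ may contain several atoms sharing one expert triple. Take $d=1$, two distinct expert triples $\theta_1\neq\theta_2$, and
\begin{align*}
F'_{\theta_1}(X)=1,\quad F'_{\theta_2}(X)=1+e^{X},\qquad
F_{\theta_1}(X)=1+e^{X},\quad F_{\theta_2}(X)=1+2e^{X}+e^{2X}=(1+e^{X})^{2}.
\end{align*}
Every one of these is a finite positive combination of exponentials $e^{\beta_1 X+\beta_0}$ with parameters in a compact set, so both configurations are legitimate mixing measures ($k=5$, $k'=3$), and $F_{\theta_1}/F'_{\theta_1}=F_{\theta_2}/F'_{\theta_2}=Z/Z'=1+e^{X}$, so all block weights coincide and $g_G=g_{G'}$ identically --- yet $R$ is not a single exponential, $k\neq k'$, and no translation $(t_1,t_2)$ relates $G$ to $G'$. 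No leading-exponential peeling can rescue the step, because the statement it is meant to prove is false there; your bilinear relations $F_\theta F'_{\theta'}=F_{\theta'}F'_\theta$ only say that $R$ is a common ratio, and ratios of distinct positive exponential sums can coincide across blocks.

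What saves the proposition (and the paper's proof) is the implicit standing assumption, stated for $G_*$ in the problem setup, that the expert triples $(a_i,b_i,\sigma_i)$ within each mixing measure are pairwise distinct. Under that assumption every block is a singleton, each $F_\theta$ and $F'_\theta$ is a single exponential, $R$ is trivially a single exponential, and your inner layer collapses to reading off the common translation $(t_1,t_2)$ from any one block and checking consistency across blocks --- which is essentially what the paper does via the translation invariance of the softmax. So you should either restrict your block decomposition to singletons by invoking this distinctness hypothesis explicitly, or accept that the ``hard part'' cannot be carried out (and that the proposition needs the hypothesis). As written, the core combinatorial step of your plan does not go through.
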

Proof of Proposition~\ref{prop_identify} is in Appendix~\ref{subjec:proof:prop_identify}. The identifiability of the softmax gating Gaussian mixture of experts guarantees that the MLE $\widehat{G}_{n}$~\eqref{eq:MLE} converges to the true mixing measure $G_{*}$ (up to the translation of the parameters in the softmax gating).  

Given the consistency of the MLE, it is natural to ask about its convergence rate to the true parameters. Our next result establishes the convergence rate of conditional density estimation $g_{\widehat{G}_{n}}(Y|X)$ to the true conditional density $g_{G_{*}}(Y|X)$, which lays an important foundation for the study of MLE's convergence rate.
\begin{proposition}[Density estimation rate]
\label{prop_rate_density} 
Given the MLE in equation~\eqref{eq:MLE}, the conditional density estimation $g_{\widehat{G}_{n}}(Y|X)$ has the following convergence rate:
\begin{align*}
    \mathbb{P}(\bbE_X[h(g_{\widehat{G}_{n}}(\cdot|X), g_{G_{*}}(\cdot|X))] > C (\log(n)/n)^{1/2}) \lesssim \exp(-c \log n),
\end{align*}
where $c$ and $C$ are universal constants. 
\end{proposition}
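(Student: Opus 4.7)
My plan is to invoke the standard bracketing-entropy machinery for MLE convergence in Hellinger distance, going back to Wong--Shen and van de Geer. The target rate $\sqrt{\log n / n}$ is the parametric rate (up to a logarithmic factor) and is exactly what one should expect from a finite-dimensional model class, so the only substantive work is to control the bracketing entropy of the class of conditional densities
\[
\mathcal{P}_{k}(\Theta) := \{(X,Y) \mapsto g_{G}(Y|X) : G \in \mathcal{O}_{k}(\Theta)\}.
\]
I will then combine the entropy bound with a standard Hellinger deviation inequality for the MLE on the conditional model.

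\textbf{Main steps.} First, I would verify the basic regularity of $g_{G}(Y|X)$ on the compact parameter set $\Theta$ and bounded covariate set $\mathcal{X}$: uniform upper and (conditional) lower bounds on each Gaussian expert in $Y$ tails, uniform boundedness of the softmax weights in $(0,1)$, and joint Lipschitzness of $g_{G}(Y|X)$ in the parameters $(\beta_{0i},\beta_{1i},a_{i},b_{i},\sigma_{i})_{i=1}^{k'}$ with a Lipschitz constant polynomial in $\|X\|$ and growing at most polynomially in $Y$. Second, I would use a standard $\varepsilon$-cover of $\Theta^{k}$ (of cardinality $(1/\varepsilon)^{O(k(d+1))}$) to build upper and lower envelopes $\ell_{j}, u_{j}$ with $h(\ell_{j}, u_{j}) \lesssim \varepsilon$, yielding a bracketing-entropy bound
\[
H_{B}(\varepsilon, \mathcal{P}_{k}(\Theta), h) \; \lesssim \; \log(1/\varepsilon).
\]
Third, with this parametric-type entropy bound, the entropy integral
\[
\int_{0}^{\delta} \sqrt{H_{B}(u, \mathcal{P}_{k}(\Theta), h)}\, du \; \lesssim \; \delta \sqrt{\log(1/\delta)},
\]
and the standard fixed-point condition $\sqrt{n}\,\delta_{n}^{2} \asymp \delta_{n} \sqrt{\log(1/\delta_{n})}$ yields $\delta_{n} \asymp \sqrt{\log n / n}$. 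Plugging this into the deviation inequality (e.g., Theorem 7.4 of van de Geer, or Theorem 1 of Wong--Shen), adapted to conditional densities as in Birg\'e or in Le Cam's framework, produces exactly
\[
\PP\!\left(h(g_{\widehat{G}_{n}}, g_{G_{*}}) > C \sqrt{\log n / n}\right) \;\lesssim\; \exp(-c\log n),
\]
as claimed.

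\textbf{Main obstacle.} The only delicate point is the bracketing construction: although the softmax normalization is harmless (its denominator is bounded away from $0$ on compact $\Theta$ and bounded $\mathcal{X}$), the Gaussian experts have unbounded $Y$-support, so the naive sup-norm bracket has infinite integral. I would handle this by partitioning the $Y$-axis into a bounded region $\{|Y| \le R_{n}\}$, where uniform Lipschitz control of $g_{G}(Y|X)$ in the parameters gives a deterministic $\varepsilon$-bracket, and a tail region $\{|Y| > R_{n}\}$, where the uniform Gaussian tail bound $g_{G}(Y|X) \le C_{1}\exp(-c_{1} Y^{2})$ (obtained from the compactness of $\Theta$) makes the Hellinger contribution exponentially small once $R_{n} \asymp \sqrt{\log(1/\varepsilon)}$. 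This truncation only inflates the logarithmic factor in $H_{B}(\varepsilon,\mathcal{P}_{k}(\Theta),h)$ by a bounded power, which is absorbed into the constant $C$. The rest of the argument is bookkeeping with well-known concentration and peeling arguments.
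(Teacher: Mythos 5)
Your proposal follows essentially the same route as the paper's proof: it invokes Theorem 7.4 of van de Geer, reduces the problem to a parametric bracketing-entropy bound $H_{B}(\varepsilon,\mathcal{P}_{k}(\Theta),h)\lesssim\log(1/\varepsilon)$ via a covering of the compact parameter space, and handles the unbounded $Y$-support with a Gaussian tail envelope and truncation at $R\asymp\log(1/\varepsilon)$, exactly as in the paper's Lemma on the bracketing entropy. The fixed-point computation giving $\delta_{n}\asymp\sqrt{\log n/n}$ and the resulting deviation bound are likewise identical.
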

Proof of Proposition~\ref{prop_rate_density} is in Appendix~\ref{subsec:proof:prop_rate_density}. The result of Proposition~\ref{prop_rate_density} indicates that under either the exact-fitted or over-fitted settings of the softmax gating Gaussian mixture of experts, the rate of the conditional density function $g_{\widehat{G}_{n}}(Y|X)$ to the true one $g_{G_{*}}(Y|X)$ under Hellinger distance is of order $\mathcal{O}(n^{-1/2})$ (up to some logarithmic factors), which is parametric on the sample size.

\textbf{From density estimation to parameter estimation:} The parametric rate of the conditional density estimation in Proposition~\ref{prop_rate_density} suggests that as long as we can establish the Hellinger lower bound $\bbE_X[h(g_{G}(\cdot|X), g_{G_{*}}(\cdot|X))] \gtrsim \mathcal{D}(G, G_{*})$ for any mixing measure $G \in \mathcal{O}_{k}(\Theta)$ for some metric $\mathcal{D}$ among the parameters, then we obtain directly the parametric convergence rate of the MLE under the metric $\mathcal{D}$. Therefore, the main focus of the next section is to determine such metric $\mathcal{D}$ and to establish that lower bound under either exact-fitted or over-fitted settings of the Gaussian mixture of experts.
\section{Convergence Rate of the Maximum Likelihood Estimation}
\label{sec:convergence_rate} 
In this section, we first study the convergence rate of the MLE under the exact-fitted settings of the softmax gating Gaussian mixture of experts in Section~\ref{subsec:exact_fit}. Then, we move to the over-fitted settings in Section~\ref{subsec:over_fit}. Finally, we provide a proof sketch of the theories in Section~\ref{subsec:proof_sketch}. 
\subsection{Exact-fitted Settings}
\label{subsec:exact_fit}
For the exact-fitted settings, namely, when the chosen number of experts $k$ is equal to the true number of experts $k_{*}$, as we mentioned in the introduction, the proper metric between the MLE and the true mixing measure is the metric $\mathcal{D}_{1}$ defined in equation~\eqref{eq:Voronoi_experts_exact}, which is given by:
\begin{align*}
    \mathcal{D}_{1}(G, G_{*}) & : = \inf_{t_{1}, t_{2}} ~\sum_{j = 1}^{k_{*}} \Bigg[\sum_{i \in \mathcal{A}_{j}} \exp(\beta_{0i}) \|(\Delta_{t_{2}} \beta_{1ij}, \Delta a_{ij},  \Delta b_{ij}, \Delta \sigma_{ij})\| 
    \nonumber \\
    & \hspace{16 em}  
    + \Big|\sum_{i \in \mathcal{A}_{j}} \exp(\beta_{0i}) - \exp(\beta_{0j}^{*} + t_{1})\Big|\Bigg],
\end{align*}
where $\Delta_{t_{2}} \beta_{1ij} : = \beta_{1i} - \beta_{1j}^{*} - t_{2}$, $\Delta a_{ij} : = a_{i} - a_{j}^{*}$, $\Delta b_{ij} : = b_{i} - b_{j}^{*}$, $\Delta \sigma_{ij} : = \sigma_{i} - \sigma_{j}^{*}$. Here, $\mathcal{A}_{j}$ is a Voronoi cell of $G$ generated by $(\beta^*_{1j},a_{j}^{*}, b_{j}^{*}, \sigma_{j}^{*})$ 
for all $1 \leq j \leq k_{*}$. Furthermore, the infimum is taken with respect to $(t_{1}, t_{2}) \in \mathbb{R} \times \mathbb{R}^{d}$ such that $\beta_{0j}^{*} + t_{1}$ and $\beta_{1j}^{*} + t_{2}$ still lie inside the domain of the parameter space $\Theta$. 

It is clear that $\mathcal{D}_{1}(G, G_{*}) = 0$ if and only if $G \equiv G_{*}$ (up to translation). When $\mathcal{D}_{1}(G, G_{*})$ is sufficiently small, there exist $t_{1}, t_{2}$ such that all of $\Delta_{t_{2}} \beta_{1ij}$, $\Delta a_{ij}$, $\Delta b_{ij}$, $\Delta \sigma_{ij}$, and $\sum_{i \in \mathcal{A}_{j}} \exp(\beta_{0i}) - \exp(\beta_{0j}^{*} + t_{1})$ are sufficiently small as well. Therefore, the loss function $\mathcal{D}_{1}$ provides a useful metric to measure the difference between the MLE and the true mixing measure. For any fixed $t_{1}, t_{2}$, the computation of the summations in $\mathcal{D}_{1}$ only has the complexity of the order $\mathcal{O}(k_{*}^2)$. To solve the optimization with respect to $t_{1}, t_{2}$ in the metric $\mathcal{D}_{1}$, we can utilize the projected subgradient method with fixed step size~\cite{Boyd_optimization}, which has the complexity of the order $\mathcal{O}(\varepsilon^{-2})$ as the functions of $t_{1}$ and $t_{2}$ are convex where $\varepsilon$ is a desired tolerance. Therefore, the total computational complexity of approximating the value of the Voronoi loss function $\mathcal{D}_{1}$ is at the order of $\mathcal{O}(k_{*}^2/ \varepsilon^2)$. 

The following result establishes the lower bound of the Hellinger distance between the conditional densities in terms of the loss function $\mathcal{D}_{1}$ between corresponding mixing measures, which in turn leads to the convergence rate of the MLE.
\begin{theorem}
\label{theorem:exact_fit}
Given the exact-fitted settings of the softmax gating Gaussian mixture of experts~\eqref{eq:mixture_expert}, i.e., $k = k_{*}$, we find that
\begin{align}
    \bbE_X[h(g_{G}(\cdot|X) , g_{G_{*}}(\cdot|X))] \geq C_1 \cdot \mathcal{D}_{1}(G, G_{*}), \label{eq:bound_exact_fit}
\end{align}
for any $G \in \mathcal{E}_{k_*}(\Theta):=\mathcal{O}_{k_*}(\Theta)\setminus\mathcal{O}_{k_*-1}(\Theta)$ where $C_1$ is some universal constant depending only on $G_{*}$ and $\Theta$. As a consequence, there exist universal constants $C'_1$ and $c_1$ such that the convergence rate of the MLE $\widehat{G}_{n}$ under the exact-fitted settings satisfies:
\begin{align}
\label{eq:exact_fit_parameter_rate}
\mathbb{P}(\mathcal{D}_{1}(\widehat{G}_{n}, G_{*}) > C'_1 (\log(n)/n)^{1/2}) \lesssim \exp(-c_1 \log n).
\end{align}
\end{theorem}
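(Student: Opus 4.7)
The second assertion (the parameter rate) follows immediately from the first by Proposition~\ref{prop_rate_density}: the bound $h(g_{\widehat{G}_n}, g_{G_*}) \lesssim \sqrt{\log n / n}$ holds with exponentially high probability, and substituting into~\eqref{eq:bound_exact_fit} yields~\eqref{eq:exact_fit_parameter_rate}. The substance of the proof is therefore the Hellinger lower bound $h(g_G, g_{G_*}) \geq C \cdot \mathcal{D}_1(G, G_*)$ for $G \in \mathcal{E}_{k_*}(\Theta)$, which I split into a \emph{local} part (for $\mathcal{D}_1(G, G_*)$ small) and a \emph{global} part (for $\mathcal{D}_1$ bounded away from zero).

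The global part is a standard compactness-plus-identifiability argument: if a sequence $G_n$ with $\mathcal{D}_1(G_n, G_*) \geq \varepsilon$ violated the bound, then $h(g_{G_n}, g_{G_*}) \to 0$; extracting a subsequence on which the parameters converge produces a limit $G' \in \mathcal{O}_{k_*}(\Theta)$ with $g_{G'} = g_{G_*}$ (by Fatou and the dominated convergence theorem applied to $g_G$ as a function of $G$), so Proposition~\ref{prop_identify} forces $G' \equiv G_*$ up to translation, which in turn forces $\mathcal{D}_1(G_n, G_*) \to 0$, a contradiction.

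For the local part I argue by contradiction: suppose a sequence $G_n \in \mathcal{E}_{k_*}(\Theta)$ satisfies $D_n := \mathcal{D}_1(G_n, G_*) \to 0$ but $h(g_{G_n}, g_{G_*}) / D_n \to 0$. Letting $(t_1^n, t_2^n)$ be the minimizer in~\eqref{eq:Voronoi_experts_exact} and using translation invariance of softmax, I replace $(\beta_{0j}^*, \beta_{1j}^*)$ by $(\beta_{0j}^* + t_1^n, \beta_{1j}^* + t_2^n)$ without changing $g_{G_*}$. For $n$ large each Voronoi cell $\mathcal{A}_j$ contains exactly one atom of $G_n$ whose parameters lie near those of the (translated) $j$th atom of $G_*$. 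Writing $g_G = \mathrm{Num}_G/h_G$ and multiplying through by $h_{G_n}(X)\,h_{G_*}(X)$ (bounded above and below on $\mathcal{X}$), I first-order Taylor-expand the difference $h_{G_*}\,\mathrm{Num}_{G_n} - h_{G_n}\,\mathrm{Num}_{G_*}$ in the parameter gaps $\exp(\beta_{0j}^n) - \exp(\beta_{0j}^* + t_1^n)$, $\beta_{1j}^n - \beta_{1j}^* - t_2^n$, $a_j^n - a_j^*$, $b_j^n - b_j^*$, $\sigma_j^n - \sigma_j^*$, and divide by $D_n$. On a subsequence where the normalized gaps $p_{*j}^n$ converge to limits $p_{*j}$ (which cannot all vanish, since the $|p_{*j}^n|$ sum to $1$ by construction), the assumption $h(g_{G_n},g_{G_*})/D_n \to 0$ forces a nontrivial linear combination of terms of the form $\exp((\beta_{1j}^* + \beta_{1\ell}^*)^\top X)$ multiplied by $f(Y \mid a_j^{*\top}X + b_j^*, \sigma_j^*)$ and its first-order derivatives in $(a,b,\sigma)$ to vanish identically on $\mathcal{X} \times \mathbb{R}$.

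The main obstacle is to deduce from this vanishing that every $p_{*j}$ is zero, contradicting the normalization. This is a linear independence statement carried out in two stages: first, at each fixed $X$, the functions $f,\,\partial_\mu f,\,\partial_\sigma f$ evaluated at the distinct parameters $(a_j^{*\top}X+b_j^*,\sigma_j^*)$ are linearly independent in $Y$ (they are polynomials of degrees $0,1,2$ in $Y$ times Gaussian densities with distinct mean/variance pairs); second, after collecting the resulting $X$-coefficients, the exponentials $\exp((\beta_{1j}^* + \beta_{1\ell}^*)^\top X)$ appearing in the expansion are themselves linearly independent functions of $X$, so the scalar coefficients must vanish. The one subtle point is the translation invariance of softmax, which could in principle create a spurious null direction; this is precisely absorbed by the infimum over $(t_1,t_2)$ in the definition of $\mathcal{D}_1$, so only the directions already quotiented out of the loss are lost. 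Because Voronoi cells have size one in the exact-fitted regime, only first-order expansions are needed and the PDE~\eqref{eq:PDE_expert} does not produce any cancellation among the derivative terms — that obstruction is deferred to the over-fitted analysis of Section~\ref{subsec:over_fit}.
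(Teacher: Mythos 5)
Your proposal is correct in substance and follows the same overall architecture as the paper (local/global split, proof by contradiction, first--order Taylor expansion, normalization by the loss, Fatou's lemma, and a linear--independence lemma), but it handles the numerator--denominator coupling — the third ``fundamental challenge'' the paper flags — by a genuinely different decomposition. You clear \emph{both} denominators, working with $h_{G_*}\,\mathrm{Num}_{G_n}-h_{G_n}\,\mathrm{Num}_{G_*}$, so your limiting basis consists of cross--product exponentials $X^{\ell_1}\exp((\beta^*_{1j}+\beta^*_{1\ell})^{\top}X)\,\partial^{\ell_2}f(Y|(a^*_m)^{\top}X+b^*_m,\sigma^*_m)$. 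The paper instead multiplies $g_{G_n}-g_{G_*}$ by only the (translated) true denominator and absorbs the dependence on $h_{G_n}$ into the auxiliary function $v(X,Y|\beta_1)=\exp(\beta_1^{\top}X)g_{G_n}(Y|X)$, which telescopes exactly against the $u$--terms; its basis $\mathcal{W}_1$ then contains single--index exponentials together with the extra functions $X^{\gamma}\exp((\beta^*_{1j})^{\top}X)g_{G_*}(Y|X)$, and Lemma~\ref{lemma:set_W1} separates first in $X$ (by the $\beta^*_{1j}$) and then in $Y$. Your route buys a more symmetric, perhaps more obviously motivated algebraic identity at the price of a slightly more delicate independence argument: as literally stated, ``the exponentials $\exp((\beta^*_{1j}+\beta^*_{1\ell})^{\top}X)$ are linearly independent'' is false (the pairs $(j,\ell)$ and $(\ell,j)$ coincide, and accidental equalities $\beta^*_{1j}+\beta^*_{1\ell}=\beta^*_{1j'}+\beta^*_{1\ell'}$ can occur). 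This is rescued precisely by the order of operations you prescribe — separate in $Y$ by the Gaussian component $m$ first, after which every surviving exponential shares the common factor $\exp((\beta^*_{1m})^{\top}X)$ and the family reduces to $\{X^{\ell_1}\exp((\beta^*_{1\ell})^{\top}X)\}_{\ell}$ — so you should make that reduction explicit rather than assert independence of the raw cross--sums. With that clarification, and passing to a subsequence on which your minimizers $(t_1^n,t_2^n)$ converge (as the paper does to fix $t_1,t_2$ independent of $n$), your argument goes through and yields the same conclusion.
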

Proof of Theorem~\ref{theorem:exact_fit} is in Appendix~\ref{subsec:proof:theorem:exact_fit}. The parametric convergence rate of the MLE to $G_{*}$ under the metric $\mathcal{D}_{1}$ suggests that the rates of estimating the true parameters $\exp(\beta_{0j}^{*}), \beta_{1j}^{*}$ (up to translation), $a_{j}^{*}$, $b_{j}^{*}, \sigma_{j}^{*}$ for $j \in [k_{*}]$ are of order $\mathcal{O}(n^{-1/2})$, which are optimal up to logarithmic factors. 
\subsection{Over-fitted Settings}
\label{subsec:over_fit}
We now consider the over-fitted settings of the softmax gating Gaussian mixture of experts. Different from the exact-fitted settings, the softmax weights associated with the MLE collapse to the softmax weights of the mixture of true experts as long as the MLE approaches the true mixing measure $G_{*}$. More concretely, we can relabel the supports of the MLE $\widehat{G}_{n}$ with $\hat{k}_{n}$ components ($\hat{k}_{n} \leq k$) based on the Voronoi cells $\mathcal{A}^n_j:=\mathcal{A}_j(\widehat{G}_n)$ such that we can rewrite it as $\widehat{G}_{n} = \sum_{j = 1}^{k_{*}} \sum_{i\in\mathcal{A}^{n}_{j}} \exp(\widehat{\beta}_{0i}^{n}) \delta_{(\widehat{\beta}_{1i}^{n}, \widehat{a}_{i}^{n}, \widehat{b}_{i}^{n}, \widehat{\sigma}_{i}^{n})}$ where $\sum_{j = 1}^{k_{*}} |\mathcal{A}^n_j| = \hat{k}_{n}$, $(\widehat{a}_{i}^{n}, \widehat{b}_{i}^{n}, \widehat{\sigma}_{i}^{n}) \to (a_{j}^{*}, b_{j}^{*}, \sigma_{j}^{*})$, $$\sum_{i\in\mathcal{A}^n_j} \frac{\exp((\widehat{\beta}_{1i}^{n})^{\top} X + \widehat{\beta}_{0i}^{n})}{\sum_{j' = 1}^{k_{*}} \sum_{i'\in\mathcal{A}^n_{j'}} \exp((\widehat{\beta}_{1i'}^{n})^{\top} X + \widehat{\beta}_{0i'}^{n})} \to \frac{\exp((\beta_{1j}^{*})^{\top} X + \beta_{0j}^{*})}{\sum_{j' = 1}^{k_{*}} \exp((\beta_{1j'}^{*})^{\top} X + \beta_{0j'}^{*})}$$ as $n$ approaches infinity for all $1 \leq i \leq \mathcal{A}^n_j$ and $j \in [k_{*}]$.


The collapse of the softmax weights along with the PDEs~\eqref{eq:PDE_expert} between the softmax gating and the expert functions in the Gaussian density create a complex interaction among the estimated parameters. To disentangle such interaction, we rely on the solvability of a novel system of polynomial equations defined in equation~\eqref{eq:system_equation}. In particular, for any $m \geq 2$, we define $\bar{r}(m)$ as the smallest natural number $r$ such that the following system of polynomial equations:
\begin{align}
    \label{eq:system_equation}
    \sum_{j = 1}^{m} \sum_{(\alpha_{1}, \alpha_{2}, \alpha_{3}, \alpha_{4}) \in \mathcal{I}_{\ell_{1}, \ell_{2}}} \dfrac{p_{5j}^2 ~p_{1j}^{\alpha_{1}} ~p_{2j}^{\alpha_{2}} ~p_{3j}^{\alpha_{3}} ~p_{4j}^{\alpha_{4}}}{\alpha_1!~\alpha_2!~\alpha_3!~\alpha_4!} = 0,
\end{align}
for any $(\ell_1,\ell_2)\in\mathbb{N}^d\times\mathbb{N}$ such that $0 \leq |\ell_{1}| \leq r$, $0 \leq \ell_{2} \leq r - |\ell_{1}|$ and $|\ell_1|+\ell_2\geq 1$, does not have any non-trivial solution for the unknown variables $\{p_{1j},p_{2j},p_{3j}, p_{4j}, p_{5j}\}_{j = 1}^{m}$, namely, all of $p_{5j}$ are non-zero and at least one among $p_{3j}$ is different from zero. The ranges of $\alpha_{1}, \alpha_{2}, \alpha_{3}, \alpha_{4}$ in the above sum satisfy $\mathcal{I}_{\ell_{1}, \ell_{2}} = \{\alpha = (\alpha_{1}, \alpha_{2}, \alpha_{3}, \alpha_{4}) \in \mathbb{N}^{d} \times \mathbb{N}^{d} \times \mathbb{N} \times \mathbb{N}: \ \alpha_{1} + \alpha_{2} = \ell_{1}, \ |\alpha_{2}| + \alpha_{3} + 2 \alpha_{4} = \ell_{2}\}$. When $d = 1$ and $r = 2$, that system of equations becomes
\begin{align*}
    \sum_{j = 1}^{m} p_{5j}^2 p_{1j} = 0, \ \sum_{j = 1}^{m} p_{5j}^2 p_{1j}^2 = 0, \ \sum_{j = 1}^{m} p_{5j}^2 (p_{1j} p_{3j} + p_{2j}) = 0, \nonumber \\
    \sum_{j = 1}^{m} p_{5j}^2 p_{3j} = 0, \ \sum_{j = 1}^{m} p_{5j}^2 \Big(\frac{1}{2} p_{3j}^2 + p_{4j}\Big) = 0.
\end{align*}
It is clear that we have non-trivial solutions $p_{5j} = 1$, $p_{1j} = 0$ for all $j \in [m]$, $|p_{21}| = p_{31} = 1$, $|p_{22}| = p_{32} = -1$, $p_{41} = p_{42} = -1/2$, $p_{2j} = p_{3j} = p_{4j} = 0$ for $3\leq j \leq m$. 

When $d = 1$ and $r=3$, the system of equations can be written as follows:
\begin{align*}
    \sum_{j=1}^{m}p_{5j}^2p_{1j}=0,\quad \sum_{j=1}^mp_{5j}^2p_{3j}=0, \quad \sum_{j=1}^mp_{5j}^2(p_{2j}+p_{1j} p_{3j})=0, \nonumber\\
    \sum_{j=1}^{m}p_{5j}^2 p_{1j}^2 = 0,\quad \sum_{j=1}^{m}p_{5j}^2\Big(\frac{1}{2}p_{3j}^2+p_{4j}\Big)=0,\quad \sum_{j=1}^{m}p_{5j}^2\Big(\frac{1}{3!}p_{3j}^3+p_{3j}p_{4j}\Big)=0,\nonumber\\
    \sum_{j=1}^{m}p_{5j}^2 p_{1j}^3 =0,\quad \sum_{j=1}^{m}p_{5j}^2\Big(\frac{1}{2} p_{1j}^2 p_{3j}+ p_{1j} p_{2j}\Big)=0 ,\nonumber\\
    \sum_{j=1}^{m}p_{5j}^2\Big(\frac{1}{2} p_{1j}\cdot p_{3j}^2+p_{1j} p_{4j}+p_{2j} p_{3j}\Big)=0.
\end{align*}
It can be seen that the following is a non-trivial solution of the above system: $p_{5j}=1$, $p_{1j} = p_{2j} = 0$ for all $j\in[m]$, $p_{31}=\frac{\sqrt{3}}{3}$, $p_{32}=-\frac{\sqrt{3}}{3}$, $p_{41}=p_{42}=-\frac{1}{6}$, $p_{3j} = p_{4j} = 0$ for $3\leq j \leq m$. Therefore, we obtain that $\Bar{r}(m) \geq 4$ when $m \geq 2$ and $d = 1$.

In general, when $d = 1$, the system of equations has $(r^2+ 3r)/2$ equations. Intuitively, when $m$ is sufficiently larger than $(r^2 + 3r)/2$, the system may not have a non-trivial solution. For general dimension $d$ and parameter $m \geq 2$, finding the exact value of $\bar{r}(m)$ is a non-trivial central problem in algebraic geometry~\cite{Sturmfels_System}. When $m$ is small, the following lemma provides specific values for $\bar{r}(m)$.
\begin{lemma}
\label{lemma:value_r}
For any $d \geq 1$, when $m = 2$, $\bar{r}(m) = 4$. When $m = 3$, $\bar{r}(m) = 6$. 
\end{lemma}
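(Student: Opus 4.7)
The plan is to reduce the system \eqref{eq:system_equation} to the well-studied location-scale Gaussian mixture identifiability system of \cite{Ho-Nguyen-Ann-16} and inherit its known critical orders. The key observation is that the $\ell_1 = 0$ sub-equations of \eqref{eq:system_equation} at order $r$ are precisely the Taylor-vanishing conditions $G^{(\ell)}(0) = 0$ for $\ell = 1, \ldots, r$, where
\[
G(s) := \sum_{j=1}^{m} p_{5j}^2 \exp(p_{3j} s + p_{4j} s^2),
\]
since a direct expansion identifies the coefficient of $s^{\ell_2}$ in $G(s)$ with $\sum_j p_{5j}^2 \sum_{\alpha_3 + 2\alpha_4 = \ell_2} p_{3j}^{\alpha_3} p_{4j}^{\alpha_4}/(\alpha_3!\alpha_4!)$, matching exactly the $\ell_1 = 0$ instance of \eqref{eq:system_equation} since $\mathcal{I}_{0,\ell_2}$ forces $\alpha_1 = \alpha_2 = 0$. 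Let $\bar{r}_{LS}(m)$ denote the corresponding critical order for the Taylor-vanishing system of $G$ under the same non-triviality conditions (all $p_{5j} \neq 0$ and at least one $p_{3j} \neq 0$).

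The next step is to establish $\bar{r}(m) = \bar{r}_{LS}(m)$ via two implications. First, any non-trivial LS solution at order $r$ lifts to a non-trivial solution of \eqref{eq:system_equation} at order $r$ by setting $p_{1j} = 0 \in \mathbb{R}^d$ and $p_{2j} = 0 \in \mathbb{R}^d$ for all $j$: whenever $\ell_1 \neq 0$ the constraint $\alpha_1 + \alpha_2 = \ell_1$ forces either $|\alpha_1| > 0$ or $|\alpha_2| > 0$, so the factor $p_{1j}^{\alpha_1} p_{2j}^{\alpha_2}$ vanishes in every term of the sum; when $\ell_1 = 0$ the equation collapses to the LS one. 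Conversely, restricting any non-trivial solution of \eqref{eq:system_equation} to its $\ell_1 = 0$ sub-equations yields a non-trivial LS solution in the same $(p_{3j}, p_{4j}, p_{5j})$, since the conditions on $p_{3j}$ and $p_{5j}$ are preserved.

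It then remains to verify $\bar{r}_{LS}(2) = 4$ and $\bar{r}_{LS}(3) = 6$. For the lower bounds, I would exhibit explicit configurations. For $m = 2$ at $r = 3$, take $p_{51} = p_{52} = 1$, $p_{31} = -p_{32} = 1$, $p_{41} = p_{42} = -1/2$, so that $G(s) = 2 e^{-s^2/2}\cosh(s) = 2 - s^4/6 + O(s^6)$, whose first three derivatives at $0$ vanish. For $m = 3$ at $r = 5$, use the symmetric ansatz $p_{31} = -p_{32} = 1$, $p_{33} = 0$, $p_{51} = p_{52} = 1$, $p_{41} = p_{42} = w$, together with a third component parametrized by $(p_{43}, p_{53}^2) = (u, c^2)$; by the $p_{3j}$-antisymmetry, all odd-order Taylor coefficients of $G(s)$ vanish automatically, and the conditions on $s^2$ and $s^4$ reduce to $2w + 1 + c^2 u = 0$ and $w^2 + w + 1/12 + c^2 u^2/2 = 0$, yielding a quadratic in $(w+1/2)^2$ with positive discriminant for any $c \neq 0$, hence a real non-trivial solution. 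The upper bounds $\bar{r}_{LS}(2) \leq 4$ and $\bar{r}_{LS}(3) \leq 6$ follow from the identifiability results for location-scale Gaussian mixtures in \cite{Ho-Nguyen-Ann-16}.

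The main obstacle is the upper bound direction at the critical $r$: the number of scalar equations just barely exceeds the number of truly free configuration parameters, and ruling out exotic non-trivial solutions requires the Hankel-type determinantal arguments of \cite{Ho-Nguyen-Ann-16}. A self-contained alternative proceeds by noting that $G(s) - G(0)$ is a linear combination of $m$ analytic functions $\exp(p_{3j} s + p_{4j} s^2)$, and a careful Vandermonde-style analysis of their Taylor coefficients shows that matching zero through order $2m$ forces every $p_{3j} = 0$, contradicting non-triviality.
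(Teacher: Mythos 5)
Your proposal is correct and reaches the same two pillars as the paper's proof --- the upper bounds come from restricting to the $\ell_1=\mathbf{0}_d$ sub-equations and invoking Proposition~2.1 of~\cite{Ho-Nguyen-Ann-16}, and the lower bounds come from explicit witnesses with $p_{1j}=p_{2j}=\mathbf{0}_d$ --- but you package these more cleanly and, in one place, more carefully. Your two-way argument that $\bar{r}(m)=\bar{r}_{LS}(m)$ (lifting an LS solution by zeroing $p_{1j},p_{2j}$, and restricting a full solution to the $\ell_1=\mathbf{0}_d$ block; both directions are valid because the non-triviality conditions only constrain $p_{3j}$ and $p_{5j}$) lets you avoid ever writing out the large $d$-dimensional systems that the paper displays for $r=3$ and $r=5$, and the generating-function formulation $G(s)=\sum_j p_{5j}^2\exp(p_{3j}s+p_{4j}s^2)$ reduces each verification to a two-line Taylor computation. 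The notable divergence is at $m=3$: the paper reuses its two-atom witness $p_{31}=-p_{32}=\sqrt{3}/3$, $p_{41}=p_{42}=-1/6$ and asserts it also solves the order-$5$ system, but a direct check of the order-$4$ equation $\sum_j p_{5j}^2(p_{3j}^4/4!+p_{3j}^2p_{4j}/2+p_{4j}^2/2)$ gives $-1/54\neq0$, so that witness does not extend; your genuinely three-atom construction $G(s)=2e^{ws^2}\cosh(s)+c^2e^{us^2}$ does work --- the two surviving conditions reduce to $(w+1/2)^2\,(1+2/c^2)=1/6$, which has a real solution for every $c\neq0$ --- so on this point your route is the more solid one. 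Two small blemishes: the equation you obtain is linear, not quadratic, in $(w+1/2)^2$ (the conclusion is unaffected), and your closing ``self-contained alternative'' for the upper bound is only a sketch; since you, like the paper, ultimately lean on~\cite{Ho-Nguyen-Ann-16} for $\bar{r}_{LS}(2)\leq4$ and $\bar{r}_{LS}(3)\leq6$, that sketch is not needed and should either be fleshed out or dropped.
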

Proof of Lemma~\ref{lemma:value_r} is in Appendix~\ref{subsec:proof:lemma:value_r}. As $m$ increases, so does the value of $\bar{r}(m)$. We conjecture that $\bar{r}(m) = 2m$ and leave the proof of that conjecture to future work.

By constructing the Voronoi loss function:
\begin{align*}
    \mathcal{D}_{2}(G, G_{*}) & : = \inf_{t_{1}, t_{2}} \sum_{j: |\mathcal{A}_{j}| > 1} \sum_{i \in \mathcal{A}_{j}} \exp(\beta_{0i}) \Big(\|(\Delta_{t_{2}} \beta_{1ij},  \Delta b_{ij})\|^{\bar{r}(|\mathcal{A}_{j}|)} + \|(\Delta a_{ij},\Delta \sigma_{ij})\|^{\bar{r}(|\mathcal{A}_{j}|)/2}\Big) \nonumber \\
    & \hspace{- 6 em} + \sum_{j: |\mathcal{A}_{j}| = 1} \sum_{i \in \mathcal{A}_{j}} \exp(\beta_{0i}) \|(\Delta_{t_{2}} \beta_{1ij}, \Delta a_{ij},  \Delta b_{ij}, \Delta \sigma_{ij})\|  
    + \sum_{j = 1}^{k_{*}} \Big|\sum_{i \in \mathcal{A}_{j}} \exp(\beta_{0i}) - \exp(\beta_{0j}^{*} + t_{1})\Big|,
\end{align*}
the following result demonstrates that the convergence rates of the MLE under the over-fitted settings of the softmax gating Gaussian mixture of experts are determined by $\bar{r}(\cdot)$.
\begin{theorem}
\label{theorem:over_fit}
Under the over-fitted settings of the softmax gating Gaussian mixture of experts~\eqref{eq:mixture_expert}, namely, when $k > k_{*}$, we obtain that
\begin{align}
    \bbE_X[h(g_{G}(\cdot|X) , g_{G_{*}}(\cdot|X))] \geq C_2 \cdot \mathcal{D}_{2}(G, G_{*}), \label{eq:bound_overfit}
\end{align}
for any $G \in \mathcal{O}_{k}(\Theta)$ where $C_2$ is some universal constant depending only on $G_{*}$ and $\Theta$. Therefore, that lower bound leads to the following convergence rate of the MLE:
\begin{align}
\label{eq:overfit_parameter_rate}
\mathbb{P}(\mathcal{D}_{2}(\widehat{G}_{n}, G_{*}) > C'_2 (\log(n)/n)^{1/2}) \lesssim \exp(-c_2 \log n),
\end{align}
where $C'_2$ and $c_2$ are some universal constants.
\end{theorem}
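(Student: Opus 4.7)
The plan is to establish the Hellinger lower bound~\eqref{eq:bound_overfit} via the standard two-step decomposition used for rates in mixture models: a local bound in a neighborhood of $G_*$ and a global bound away from it. The parameter rate~\eqref{eq:overfit_parameter_rate} is then a direct consequence of combining this bound with Proposition~\ref{prop_rate_density}, so the mathematical content lies entirely in the Hellinger lower bound. Throughout, I work modulo the translation ambiguity $(t_1,t_2)$ from Proposition~\ref{prop_identify}: by a compactness argument the infimum in $\mathcal{D}_2$ is attained, and we may assume WLOG the optimal translations have been absorbed into $G_*$.

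For the local bound I would argue by contradiction, assuming a sequence $G_n \in \mathcal{O}_k(\Theta)$ with $\mathcal{D}_2(G_n, G_*) \to 0$ but $h(g_{G_n}, g_{G_*})/\mathcal{D}_2(G_n, G_*) \to 0$. Relabel the support of each $G_n$ by the Voronoi cells $\mathcal{A}_j$ of $G_*$, write the difference of conditional densities over a common denominator, and Taylor-expand the numerator $u(X,Y) = \exp(\beta_1^\top X + \beta_0) f(Y|a^\top X + b, \sigma)$ at each true atom to order $\bar r(|\mathcal{A}_j|)$ for multi-element cells and to order $1$ for singleton cells. Here the Gaussian heat-equation identity $\partial_b^2 f = 2\partial_\sigma f$, together with the softmax–expert PDE~\eqref{eq:PDE_expert}, collapses the mixed partials $\partial^{\alpha_1}_{\beta_1}\partial^{\alpha_2}_a \partial^{\alpha_3}_b \partial^{\alpha_4}_\sigma u$ onto the index set $\mathcal{I}_{\ell_1,\ell_2}$ via the balances $\alpha_1+\alpha_2 = \ell_1$ and $|\alpha_2|+\alpha_3+2\alpha_4 = \ell_2$ that appear in equation~\eqref{eq:system_equation}. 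After normalizing by $\mathcal{D}_2(G_n, G_*)$, the rescaled parameter differences extract convergent subsequences that play the roles of $p_{1j},p_{2j},p_{3j},p_{4j}$, and the weights provide $p_{5j}$; linear independence in $(X,Y)$ of the functions $X^{\ell_1}(Y - a_j^{*\top}X - b_j^*)^{\ell_2} f(Y|a_j^{*\top}X+b_j^*,\sigma_j^*)$ forces the coefficients to vanish. By the definition of $\bar r(|\mathcal{A}_j|)$, the resulting system has only trivial solutions, contradicting the fact that at least one $p_{3j}$ must be nonzero since the normalized increments have unit $\mathcal{D}_2$-norm.

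The global bound follows from a compactness/continuity argument: the map $G \mapsto g_G(\cdot|\cdot)$ is continuous on the compact set $\mathcal{O}_k(\Theta)$, and by Proposition~\ref{prop_identify} we have $g_G = g_{G_*}$ only on the translation-orbit of $G_*$, which $\mathcal{D}_2$ already quotients out. Hence $h(g_G,g_{G_*})/\mathcal{D}_2(G,G_*)$ is bounded below on the complement of any neighborhood of $G_*$. Combining the two bounds yields~\eqref{eq:bound_overfit}, and~\eqref{eq:overfit_parameter_rate} follows by substituting the Hellinger rate of Proposition~\ref{prop_rate_density}.

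The main obstacle will be the bookkeeping in the Taylor expansion of the softmax-weighted mixture. Because the softmax denominator $\sum_{j'} \exp((\beta_{1j'})^\top X + \beta_{0j'})$ couples all components, expanding $g_G - g_{G_*}$ produces cross-terms that mix contributions from different Voronoi cells; one must regroup them cell-by-cell before the PDE reduction can be applied, and the structure of the softmax is precisely what necessitates the infimum over $t_1,t_2$ in $\mathcal{D}_2$. Equally delicate is the simultaneous treatment of singleton cells (which need only first-order expansion and produce linear constraints on $\Delta_{t_2}\beta_{1ij}, \Delta a_{ij},\Delta b_{ij},\Delta \sigma_{ij}$) together with multi-element cells (which require the higher-order system~\eqref{eq:system_equation}), verifying that both sets of coefficients vanish jointly and thereby recovering every term of $\mathcal{D}_2$ in the lower bound.
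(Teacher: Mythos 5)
Your proposal follows essentially the same route as the paper: reduce to a total-variation lower bound, prove a local version by contradiction using Voronoi cells and Taylor expansions of order $\bar r(|\mathcal{A}_j|)$ on multi-element cells (order $1$ on singletons), extract the polynomial system~\eqref{eq:system_equation} from the normalized limits, conclude via Fatou's lemma plus linear independence, and handle the global part by compactness and Proposition~\ref{prop_identify}. The one detail worth noting is that the paper resolves the denominator coupling you flag as the "main obstacle" by multiplying $g_{G_n}-g_{G_*}$ by the \emph{true} softmax denominator and introducing the auxiliary terms $v(X,Y|\beta_1)=\exp(\beta_1^{\top}X)g_{G_n}(Y|X)$, so the linear-independence family must also include the functions $X^{\gamma}\exp((\beta_{1j}^{*})^{\top}X)g_{G_*}(Y|X)$, not only the derivative terms you list.
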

Proof of Theorem~\ref{theorem:over_fit} is in Appendix~\ref{subsec:proof:theorem:over_fit}. A few comments with the result of Theorem~\ref{theorem:over_fit} are in order. 

\textbf{(i) Rates of individual parameters:} The convergence rate $\mathcal{O}(n^{-1/2})$ (up to some logarithmic term) of the MLE under the loss function $\mathcal{D}_{2}$ implies that for the true parameters $\exp(\beta_{0j}^{*}), \beta_{1j}^{*}, a_{j}^{*}, b_{j}^{*}, \sigma_{j}^{*}$ whose Voronoi cells have only one component of the MLE, the rates for estimating them are $\mathcal{O}(n^{-1/2})$ up to some logarithmic factor. On the other hand, for true parameters with greater than one component in their Voronoi cells, the rates for estimating $\beta_{1j}^{*}$, $b_{j}^{*}$ are $\mathcal{O}(n^{-1/2\bar{r}(|\mathcal{A}^n_{j}|)})$ while those for $a_{j}^{*}, \sigma_{j}^{*}$ are $\mathcal{O}(n^{-1/\bar{r}(|\mathcal{A}^n_{j}|)})$ (up to logarithmic factors). As the maximum value of $|\mathcal{A}^n_{j}|$ is $\hat{k}_n - k_{*} + 1$, it indicates that these rates (up to logarithmic factors) can be as worse as $\mathcal{O}(n^{-1/\bar{r}(\hat{k}_n - k_{*} + 1)})$ for estimating $a_{j}^{*}, \sigma_{j}^{*}$ and $\mathcal{O}(n^{-1/2\bar{r}(\hat{k}_n - k_{*} + 1)})$ for estimating $\beta_{1j}^{*}, b_{j}^{*}$. 

\textbf{(ii) Computation of Voronoi loss function $\mathcal{D}_{2}$:} Similar to the Voronoi loss function $\mathcal{D}_{1}$ in the exact-fitted setting, the loss function $\mathcal{D}_{2}$ is also computationally efficient. In particular, for any fixed $t_{1}, t_{2}$, the computation of the summations in the formulation of $\mathcal{D}_{2}$ is at the order $\mathcal{O}(k \times k_{*})$, which is linear on $k$ when $k_{*}$ is fixed. Furthermore, we can solve the convex optimization problem with respect to $t_{1}, t_{2}$ with computational complexity at the order of $\mathcal{O}(\varepsilon^{-2})$ via the projected gradient descent method with fixed step size where $\varepsilon$ is the error. Therefore, the total computational complexity of approximating the Voronoi loss function $\mathcal{D}_{2}$ is at the order of $\mathcal{O}(k \times k_{*}/ \varepsilon^{2})$. 

\textbf{(iii) Comparison with covariate-free gating network:} We would like to remark that the results being established for parameter estimation under the softmax gating network settings of over-fitted Gaussian mixture of experts are in stark difference from those under the covariate-free gating network settings of these models~\cite{ho2022gaussian}, namely, when the gating function is independent of the covariates $X$. In particular, Theorem 2 in~\cite{ho2022gaussian} shows that when the gating networks are independent of the covariates, the convergence rates of estimating $a_{j}^{*}$ are at the order of $\mathcal{O}(n^{-1/4})$ (up to some logarithmic factor), which are independent of the number of over-fitted components. It is different from the rates of $a_{j}^{*}$ whose Voronoi cells have more than one component in the softmax gating settings, which depends on the number of components that we over-fit the Gaussian mixture of experts (see discussion (i) after Theorem~\ref{theorem:over_fit}). Furthermore, the rates of estimating $b_{j}^{*}, \sigma_{j}^{*}$ when the gating networks are independent of covariates are determined by a system of polynomial equations that is much simpler than the system of equations~\eqref{eq:system_equation} when the gating networks are softmax function. These differences are mainly due to the intrinsic interaction characterized by partial differential equations with respect to the parameters between the softmax gating networks and the expert functions in Gaussian distribution. 

\subsection{Proof Sketch}
\label{subsec:proof_sketch}
In this section, we provide a proof sketch for Theorems~\ref{theorem:exact_fit} and~\ref{theorem:over_fit}. To simplify the ensuing discussion, the loss function $\mathcal{D}$ in the proof sketch is implicitly understood as either the loss function $\mathcal{D}_{1}$ or $\mathcal{D}_{2}$ depending on the settings of the softmax gating Gaussian mixture of experts. To obtain the bound of Hellinger distance between $g_{G}$ and $g_{G_{*}}$ in terms of $\mathcal{D}(G, G_{*})$, it is sufficient to consider the lower bound of the Total Variation distance $\bbE_X[V(g_{G}(\cdot|X) , g_{G_{*}}(\cdot|X))]$ in terms of $\mathcal{D}(G, G_{*})$. To establish this bound, we respectively prove its local and global versions by contradiction as follows:

\textbf{Local version}: In this part, we aim to show the following local inequality:
\begin{align}
    \label{eq:sketch_local}
    \lim_{\varepsilon\to 0}\inf_{G\in\mathcal{O}_{k}(\Theta),\mathcal{D}(G,G_{*})\leq\varepsilon}\bbE_X[V(g_{G}(\cdot|X) , g_{G_{*}}(\cdot|X))]/\mathcal{D}(G,G_{*})>0.
\end{align}
Assume that this claim does not hold true, that is, there exists a sequence $G_n = \sum_{i=1}^{k_n}\exp(\bzin)\delta_{(\boin,\ain,\bin,\sigmain)} \in\mathcal{O}_{k}(\Theta)$ such that both $\bbE_X[V(g_{G_n}(\cdot|X),g_{G_{*}}(\cdot|X))]/\mathcal{D}(G_n,G_{*})$ and $\mathcal{D}(G_n,G_{*})$ approach zero as $n$ tends to infinity. This implies that for any $j\in[k_{*}]$, we have $\sum_{i\in\mathcal{A}_j}\exp(\bzin)\to\exp(\bzj)$ and $(\boin,\ain,\bin,\sigmain)\to(\boj,\aj,\bj,\sigmaj)$ and  for all $i\in\mathcal{A}_j$.
For the sake of presentation, we simplify the loss function $\mathcal{D}$ by assuming that it is minimized when $t_1=0$ and $t_2=\zerod$. Now, we decompose the quantity $Q_n =[\sum_{j'=1}^{k_{*}}\exp((\bojp)^{\top}X+\bzjp)]\cdot[g_{G_n}(Y|X)-g_{G_{*}}(Y|X)]$ as follows:  
\begin{align*}
    Q_n&=\sum_{j=1}^{k_{*}}\sum_{i\in\mathcal{A}_j}\exp(\bzin)\Big[u(Y|X;\boin,\ain,\bin,\sigmain)-u(Y|X;\boj,\aj,\bj,\sigmaj)-v(Y|X;\boin)\\
    &+v(Y|X;\boj)\Big]+\sum_{j=1}^{k_{*}}\Big(\sum_{i\in\mathcal{A}_j}\exp(\bzin)-\exp(\bzj)\Big)\Big[u(Y|X;\bzj,\aj,\bj,\sigmaj)-v(Y|X;\boj)\Big],
\end{align*}
where we define $u(Y|X;\beta_{1},a,b,\sigma):=\exp(\beta_1^{\top}X)f(Y|a^{\top}X+b,\sigma)$ and $v(Y|X;\beta_{1}):=\exp(\beta_{1}^{\top}X)g_{G_{n}}(Y|X)$. Next, for each $j\in[k_{*}]$ and $i\in\mathcal{A}_j$, we denote $h_1(X,\aj,\bj):=(\aj)^{\top}X+\bj$ and then apply the Taylor expansions to the functions $u(Y|X;\boin,\ain,\bin,\sigmain)$ and $v(Y|X;\boin)$ up to orders $r_{1j}$ and $r_{2j}$ (which we will choose later), respectively, as follows:
\begin{align*}
    &u(Y|X;\boin,\ain,\bin,\sigmain)-u(Y|X;\boj,\aj,\bj,\sigmaj)\\
    &=\sum_{|\ell_1|+\ell_2=1}^{2r_{1j}}T^{n}_{\ell_1,\ell_2}(j)X^{\ell_1}\exp((\boj)^{\top}X)\frac{\partial^{\ell_2} f}{\partial h_1^{\ell_2}}(Y|(\aj)^{\top}X+\bj,\sigmaj) + R_{1ij}(X,Y),\\
    &v(Y|X;\boin)-v(Y|X;\boj)=\sum_{|\gamma|=1}^{r_{2j}}S^{n}_{\gamma}(j)X^{\gamma}\exp((\boj)^{\top}X)g_{G_n}(Y|X) + R_{2ij}(X,Y),
\end{align*}
where $R_{1ij}(X,Y)$ and $R_{2ij}(X,Y)$ are Taylor remainders such that $R_{\rho ij}(X,Y)/\mathcal{D}(G_n,G_{*})$ vanishes as $n\to\infty$ for $\rho\in\{1,2\}$. As a result, the limit of $Q_n/\mathcal{D}(G_n,G_{*})$ when $n$ goes to infinity can be seen as a linear combination of elements of the following set:
\begin{align*}
    \mathcal{W}:&=\left\{{X^{\ell_1}\exp((\boj)^{\top}X)\frac{\partial^{\ell_2}f}{\partial h_1^{\ell_2}}(Y|(\aj)^{\top}X+\bj,\sigmaj)}:j\in[k_{*}],~0\leq2|\ell_1|+\ell_2\leq 2r_{1j}\right\}\\
    &\cup\left\{{X^{\gamma}\exp((\boj)^{\top}X)g_{G_{*}}(Y|X)}:j\in[k_{*}],~0\leq|\gamma|\leq r_{2j}\right\},
\end{align*}
which is shown to be linearly independent. By the Fatou's lemma, we demonstrate that $Q_n/\mathcal{D}(G_n,G_{*})$ goes to zero as $n\to\infty$, implying that all the coefficients in the representation of $Q_n/\mathcal{D}(G_n,G_{*})$, denoted by $T^{n}_{\ell_1,\ell_2}(j)/\mathcal{D}(G_n,G_{*})$ and $S^{n}_{\gamma}(j)/\mathcal{D}(G_n,G_{*})$, vanish when $n\to\infty$. Given that result, we aim to select the Taylor orders $r_{1j}$ and $r_{2j}$ such that at least one among the limits of $T^{n}_{\ell_1,\ell_2}(j)/\mathcal{D}(G_n,G_{*})$ and $S^{n}_{\gamma}(j)/\mathcal{D}(G_n,G_{*})$ is different from zero, which leads to a contradiction. Hence, we obtain the local version of the desired inequality. Below are the details of choosing appropriate Taylor orders in each setting.

\textbf{Exact-fitted settings:} Under this setting, since $k_{*}$ is known, each of the Voronoi cells $\mathcal{A}_j$ for $j\in[k_*]$ has only one element. Thus, for any $i\in\mathcal{A}_j$, we have $\exp(\beta^n_{0i})\to\exp(\beta^*_{0j})$ and $(\beta^n_{1i},a^n_i,b^n_i,\sigma^n_i)\to(\beta^*_{1j},a^*_{j},b^*_j,\sigma^*_j)$. Given that result, we will select $r_{1j}=r_{2j}=1$ for all $j\in[k_*]$ as it suffices to show that at least one among the limits of $T^{n}_{\ell_1,\ell_2}(j)/\mathcal{D}(G_n,G_{*})$ and $S^{n}_{\gamma}(j)/\mathcal{D}(G_n,G_{*})$ is different from zero. In particular, if all of them vanished, we would take the sum of all the limits of $T^{n}_{\ell_1,\ell_2}(j)/\mathcal{D}(G_n,G_{*})$ for $(\ell_1,\ell_2)$ such that $0\leq 2|\ell_1|+\ell_2\leq 2$, which leads to a contradiction that $1=\mathcal{D}(G_n,G_{*})/\mathcal{D}(G_n,G_{*})\to 0$.  

\textbf{Over-fitted settings:} 
As $k_{*}$ becomes unknown in this scenario, we need higher Taylor orders to obtain the same result as in the exact-fitted setting. We will reuse the proof by contradiction method to find out those orders. More specifically, assume that all the limits of $T^{n}_{\ell_1,\ell_2}(j)/\mathcal{D}(G_n,G_{*})$ and $S^{n}_{\gamma}(j)/\mathcal{D}(G_n,G_{*})$ equal zero. After some steps of considering typical limits as in the previous setting which requires $r_{2j}=2$ for all $j\in[k_*]$, we encounter the following system of polynomial equations: 
\begin{align*}
    \sum_{i\in\mathcal{A}_j} \sum_{(\alpha_{1}, \alpha_{2}, \alpha_{3}, \alpha_{4}) \in \mathcal{I}_{\ell_{1}, \ell_{2}}} \dfrac{p_{5i}^2 ~p_{1i}^{\alpha_{1}} ~p_{2i}^{\alpha_{2}} ~p_{3i}^{\alpha_{3}} ~p_{4i}^{\alpha_{4}}}{\alpha_1!~\alpha_2!~\alpha_3!~\alpha_4!} = 0,
\end{align*}
for all $(\ell_1,\ell_2)\in\mathbb{N}^d\times\mathbb{N}$ such that $0 \leq |\ell_{1}| \leq r_{1j}$, $0 \leq \ell_{2} \leq r_{1j} - |\ell_{1}|$ and $|\ell_1|+\ell_2\geq 1$ for some $j\in[k_*]$. Due to the construction of this system, it must have at least one non-trivial solution. Therefore, if we choose $r_{1j}=\brj$ for all $j\in[k_*]$, then the above system does not admit any non-trivial solutions, which leads to a contradiction. Hence, we obtain the local inequality in equation~\eqref{eq:sketch_local}, which suggests that we can find a positive constant $\varepsilon'$ such that $\inf_{G\in\mathcal{O}_{k}(\Theta),\mathcal{D}(G,G_{*})\leq\varepsilon'}\bbE_X[V(g_{G}(\cdot|X) , g_{G_{*}}(\cdot|X))]/\mathcal{D}(G,G_{*})>0$.

\textbf{Global version:} Therefore, it is sufficient to demonstrate the following global inequality:
\begin{align}
    \inf_{G\in\mathcal{O}_{k}(\Theta),\mathcal{D}(G,G_{*})>\varepsilon'}\bbE_X[V(g_{G}(\cdot|X) , g_{G_{*}}(\cdot|X))]/\mathcal{D}(G,G_{*})>0.
\end{align}
Assume that this claim is not true, then we can find a mixing measure $G'\in\mathcal{O}_k(\Theta)$ such that $g_{G'}(Y|X)=g_{G_*}(Y|X)$ for almost surely $(X,Y)$. According to Proposition~\ref{prop_identify}, we get that $\mathcal{D}(G',G_*)=0$, which contradicts the hypothesis $\mathcal{D}(G',G_*)>\varepsilon'$. These arguments hold for both exact-fitted and over-fitted settings up to some changes of notations. 

Hence, the proof sketch is completed.

\section{Discussion}
\label{sec:discussion}
In the paper, we study the convergence rates of parameter estimation under both the exact-fitted and over-fitted settings of the softmax gating Gaussian mixture of experts. We introduce novel Voronoi loss functions among parameters to resolve fundamental theoretical challenges posed by the softmax gating function, including identifiability up to the translation of parameters, the interaction between softmax weights and expert functions, and the dependence between the numerator and denominator of the conditional density function. When the true number of experts is known, we demonstrate that the rates for estimating true parameters are parametric on the sample size. On the other hand, when the true number of experts is unknown and over-specified, these estimation rates turn out to be determined by the solvability of a system of polynomial equations.  

There are a few natural directions arising from the paper that we leave for furture work:

First, our work does not consider the top-K sparse softmax gating function, which has been widely used to scale up massive deep learning architectures~\cite{zhou_mixture_experts_2022,Quoc-conf-2017, Shazeer_JMLR}. It is practically important to extend the current theories to establish the convergence rates of parameter estimation in the Gaussian mixture of experts with that gating function. 

Second, the paper only takes into account the regression settings, namely when the distribution of $Y$ is assumed to be continuous. Given that mixture of experts has also been used in classification settings~\cite{gormley_mixture_2008,huynh_estimation_2019,ruiz_hierarchical_2002,jiang_hierarchical_1999,jiang_approximation_1999,waterhouse_classification_1994}, namely when $Y$ is a discrete response variable, it is desirable to establish a comprehensive theory for parameter estimation under these settings of mixtures of experts. 

Third, the theories developed in the paper lay an important foundation for understanding parameter estimation in more complex models, including hierarchical mixture of experts~\cite{jacobs_bayesian_1997,peng_bayesian_1996,Jordan-1994,zhao_hierarchical_1994} and multigate mixture of experts~\cite{liang_m3vit_2022,hazimeh_dselect_k_2021,ma_modeling_2018}. 

Finally, the convergence rates of the MLE in this work are established under the well-specified settings, namely when the data are drawn from the softmax gating Gaussian mixture of experts. Nevertheless, the convergence analysis of the MLE under the misspecified settings, namely when the data are not necessarily generated from that model, has remained poorly understood. Under those settings, the MLE $\widehat{G}_{n}$ converges to the mixing measures $\overline{G} \in \argmin_{G \in \mathcal{O}_{k}(\Theta)} \text{KL}(g_{G}(Y|X), p(Y|X))$ where $p(Y|X)$ is the true conditional density function of $Y$ given $X$, and it is not a softmax gating Gaussian mixture of experts. Additionally, the notation \text{KL} stands for the Kullback-Leibler divergence. The insights from our theories in the well-specified setting indicate that the Voronoi loss functions can be used to obtain the precise rates of individual parameters of the MLE $\widehat{G}_{n}$ to those of $\overline{G}$. 

\section*{Acknowledgements}
 NH acknowledges support from the NSF IFML 2019844 and the NSF AI Institute for Foundations of Machine Learning.

\bibliography{neurips_ref}
\bibliographystyle{abbrv}
\appendix

\newpage

In this supplementary material, we present the proofs of Theorems~\ref{theorem:exact_fit} and~\ref{theorem:over_fit} in Appendix~\ref{sec:main_proofs}, and then provide proofs for the remaining results in Appendix~\ref{sec:auxilliary_proofs}. Finally,  we carry out a simulation study to illustrate the various convergence rates that were derived in Theorems~\ref{theorem:exact_fit} and~\ref{theorem:over_fit} in Appendix~\ref{sec_experiments}.

\section{Proofs of Main Results}
\label{sec:main_proofs}
In this appendix, we provide proof for Theorem~\ref{theorem:exact_fit} in Appendix~\ref{subsec:proof:theorem:exact_fit}, while leave that for Theorem~\ref{theorem:over_fit} in Appendix~\ref{subsec:proof:theorem:over_fit}. Prior to discussing in more detail, let us recall some notations for high-dimensional settings that we will use in our proofs. Firstly, for any vector $u:=(u_1,u_2,\ldots,u_d)\in\mathbb{R}^d$ and $z:=(z_1,z_2,\ldots,z_d)\in\mathbb{N}^d$, we denote $u^z=u_{1}^{z_{1}}u_{2}^{z_{2}}\ldots u_{d}^{z_{d}}$, $|u|:=u_1+u_2+\ldots+u_d$ and $z!:=z_{1}!z_{2}!\ldots z_{d}!$. Additionally, $\mathbf{0}_{d}$ denotes the vector zero in $\mathbb{R}^d$, whereas the notation $\mathbf{1}$ stands for the indicator function. Finally, we denote $h_1(X,a,b)=a^{\top}X+b$ and $h_2(X,\sigma)=\sigma$ as mean and variance expert functions in this work for any $(a,b,\sigma)\in\mathbb{R}^d\times\mathbb{R}\times\mathbb{R}_+$ and $X\in\mathcal{X}$.

\subsection{Proof of Theorem~\ref{theorem:exact_fit}}
\label{subsec:proof:theorem:exact_fit}

\textbf{General Picture}: 
In this proof, we focus mainly on establishing the following bound:
\begin{align}
    \label{eq:exact_fit_tv_bound}
    \bbE_X[V(g_{G}(\cdot|X) , g_{G_{*}}(\cdot|X))]\geq C_1\cdot \mathcal{D}_1(G,G_{*}).
\end{align}
Then, the above bound together with the result of Proposition~\ref{prop_rate_density} lead to the conclusion of Theorem~\ref{theorem:exact_fit}.

\textbf{Local version}: Firstly, we prove the local version of the inequality~\eqref{eq:exact_fit_tv_bound}:
\begin{align}
    \label{eq:exact_fit_local}
    \lim_{\varepsilon\to 0}\inf_{\substack{G\in\mathcal{E}_{k_*}(\Theta),\\\mathcal{D}_1(G,G_{*})\leq\varepsilon}}{\bbE_X[V(g_{G}(\cdot|X) , g_{G_{*}}(\cdot|X))]}/{\mathcal{D}_1(G,G_{*})}>0.
\end{align}
Suppose that the inequality in equation~\eqref{eq:exact_fit_local} does not hold true, then we can find a sequence $G_n:=\sum_{i=1}^{k_*}\exp(\bzin)\delta_{(\boin,\ain,\bin,\sigmain)}\in\mathcal{E}_{k_*}(\Theta)$ such that 
\begin{align*}
    \bbE_X[V(g_{G_n}(\cdot|X),g_{G_{*}}(\cdot|X))]/\mathcal{D}_1(G_n,G_{*})&\to 0,\\
    \mathcal{D}_1(G_n,G_{*})&\to 0,
\end{align*}
as $n\to\infty$. Next, we consider the Voronoi cells $\mathcal{A}^n_j:=\mathcal{A}_j(G_n)$, for $j\in[k_*]$, of the mixing measure $G_n$ generated by the true components of $G_*$. Since the argument in this proof is asymptotic, we assume without loss of generality (WLOG) that those Voronoi cells are independent of $n$ for all $n\in\mathbb{N}$, i.e. $\mathcal{A}_j=\mathcal{A}^n_j$. Additionally, since $k_*$ is known under the exact-fitted settings and $\mathcal{D}_1(G_n,G_{*})\to 0$, the Voronoi cell $\mathcal{A}_j$ has only one element for any $j\in[k_*]$. WLOG, we assume that $\mathcal{A}_{j} = \{j\}$ for all $j \in [k_{*}]$, which follows that $(a_{j}^{n}, b_{j}^{n}, \sigma_{j}^{n}) \to (a_{j}^{*}, b_{j}^{*}, \sigma_{j}^{*})$ as $n \to \infty$. Furthermore, there exist $t_{1}\in\mathbb{R}$ and $t_{2}\in\mathbb{R}^d$ independent of $n$ such that $\exp(\beta_{0j}^{n}) \to \exp(\beta_{0j}^{*} + t_{1})$ and $\beta_{1j}^{n} \to \beta_{1j}^{*} + t_{2}$ as $n$ approaches infinity for all $j \in [k_{*}]$. It indicates that we can upper bound the Voronoi loss function $\mathcal{D}_1$ as $\mathcal{D}_{1}(G_{n}, G_{*})\leq \mathcal{D}_{1}'(G_{n}, G_{*})$, where 
\begin{align*}
    \mathcal{D}_{1}'(G_{n}, G_{*}):= \sum_{j = 1}^{k_{*}}  \left[\exp(\beta_{0j}^{n}) \|(\Delta_{t_2} \beta_{1j}^{n}, \Delta a_{j}^{n},  \Delta b_{j}^{n}, \Delta \sigma_{j}^{n})\| + \left|\exp(\beta_{0j}^{n}) - \exp(\beta_{0j}^{*}+t_1)\right|\right],
\end{align*}
in which 
\begin{align*}
   \Delta_{t_2} \beta^n_{1j} : = \beta^n_{1j} - \beta_{1j}^{*}-t_2,& \quad \Delta a^n_{j} : = a^n_{j} - a_{j}^{*}, \\
   \Delta b^n_{j} : = b^n_{j} - b_{j}^{*},& \quad \Delta \sigma^n_{j} : = \sigma^n_{j} - \sigma_{j}^{*}.
\end{align*}
As $\bbE_X[V(g_{G_n}(\cdot|X),g_{G_{*}}(\cdot|X))]/\mathcal{D}_1(G_n,G_{*})\to 0$ when $n\to\infty$, we also obtain that 
\begin{align*}
    \bbE_X[V(g_{G_n}(\cdot|X),g_{G_{*}}(\cdot|X))]/\mathcal{D}_1'(G_n,G_{*})\to 0.
\end{align*}
\newpage
\textbf{Step 1: Density Decomposition} 

Subsequently, we consider $Q_n:=[\sum_{j=1}^{k_{*}}\exp((\boj+t_2)^{\top}X+\bzj+t_1)]\cdot[g_{G_n}(Y|X)-g_{G_{*}}(Y|X)]$, which can decomposed as
\begin{align}
\label{eq:Qn_decomposition}
Q_n&=\sum_{j=1}^{k_{*}}\exp(\bzjn)\Big[u(Y|X;\bojn,\ajn,\bjn,\sigmajn)-u(Y|X;\boj+t_2,\aj,\bj,\sigmaj)\Big]\nonumber\\
    &-\sum_{j=1}^{k_{*}}\exp(\bzjn)\Big[v(Y|X;\bojn)-v(Y|X;\boj+t_2)\Big]\nonumber\\
    &+\sum_{j=1}^{k_{*}}\Big(\exp(\bzjn)-\exp(\bzj+t_1)\Big)\Big[u(Y|X;\boj+t_2,\aj,\bj,\sigmaj)-v(Y|X;\boj+t_2)\Big],\nonumber\\
    &:=A_n+B_n+E_n,
\end{align}
where we denote $u(Y|X;\beta_{1},a,b,\sigma):=\exp(\beta_{1}^{\top}X)f(Y|a^{\top}X+b,\sigma)$ and $v(Y|X;\beta_{1}):=\exp(\beta_{1}^{\top}X)g_{G_n}(Y|X)$. Next, by means of the first-order Taylor expansion, we rewrite $A_n$ as 
\begin{align}
    \label{eq:exact_fit_Taylor_A}
    A_n&=\sum_{j=1}^{k_{*}}\sum_{|\alpha|=1}\frac{\exp(\bzjn)}{2^{\alpha_4}\alpha!}(\dbojn)^{\alpha_1}(\daijn)^{\alpha_2}(\dbjn)^{\alpha_3}(\dsjn)^{\alpha_4}\nonumber\\
    &\hspace{1.2cm}\times X^{\alpha_1+\alpha_2}\exp((\boj+t_2)^{\top}X)\cdot\frac{\partial^{|\alpha_2|+\alpha_3+2\alpha_4}f}{\partial h_1^{|\alpha_2|+\alpha_3+2\alpha_4}}(Y|(\aj)^{\top}X+\bj,\sigmaj)+R_1(X,Y)\nonumber\\
    &=\sum_{j=1}^{k_{*}}\sum_{2|\ell_1|+\ell_2=1}^{2}\sum_{\alpha\in\mathcal{I}_{\ell_1,\ell_2}}\frac{\exp(\bzjn)}{2^{\alpha_4}\alpha!}(\dbojn)^{\alpha_1}(\dajn)^{\alpha_2}(\dbjn)^{\alpha_3}(\dsjn)^{\alpha_4}\nonumber\\
    &\hspace{1.2cm}\times X^{\ell_1}\exp((\boj+t_2)^{\top}X)\cdot\frac{\partial^{\ell_2}f}{\partial h_1^{\ell_2}}(Y|(\aj)^{\top}X+\bj,\sigmaj)+R_1(X,Y),
\end{align}
where $R_1(X,Y)$ is a Taylor remainder such that $R_1(X,Y)/\mathcal{D}_1'(G_n,G_*)\to0$ as $n\to\infty$. Here, the first equality is due to the following partial differential equation for the univariate Gaussian density:
\begin{align*}
    \frac{\partial^{\alpha_4} f}{\partial h_2^{\alpha_4}}(Y|(\aj)^{\top}X+\bj,\sigmaj)=\frac{1}{2^{\alpha_4}}\cdot\frac{\partial^{2\alpha_4}f}{\partial h_1^{2\alpha_4}}(Y|(\aj)^{\top}X+\bj,\sigmaj),
\end{align*}
while the second equality is obtained by defining $\ell_1=\alpha_1+\alpha_2$, $\ell_2=|\alpha_2|+\alpha_3+2\alpha_4$ and
\begin{align}
     \label{eq:set_I}
    \mathcal{I}_{\ell_1,\ell_2}:=\left\{\alpha=(\alpha_i)_{i=1}^{4}\in\mathbb{N}^d\times\mathbb{N}^d\times\mathbb{N}\times\mathbb{N}:\alpha_1+\alpha_2=\ell_1,~\alpha_3+2\alpha_4=\ell_2-|\alpha_2|\right\},
\end{align}
for all $(\ell_1,\ell_2)\in\mathbb{N}^d\times\mathbb{N}$ such that $1\leq 2|\ell_1|+\ell_2\leq 2$. Analogously, $B_n$ can be rewritten as
\begin{align}
     \label{eq:exact_fit_Taylor_B}
    B_n=-\sum_{j=1}^{k_{*}}\sum_{|\gamma|=1}\frac{\exp(\bzjn)}{\gamma!}(\dbojn)^{\gamma}X^{\gamma}\exp((\boj+t_2)^{\top}X)g_{G_n}(Y|X) + R_2(X,Y),
\end{align}
where $R_2(X,Y)$ is a Taylor remainder such that $R_2(X,Y)/\mathcal{D}_1'(G_n,G_*)\to0$ as $n\to\infty$. From the formulations of $A_n$, $B_n$ and $E_n$, we can represent $Q_n$ as the following linear combination 
\begin{align*}
    Q_n&=\sum_{j=1}^{k_*}\sum_{2|\ell_1|+\ell_2=0}^{2}T^{n}_{\ell_1,\ell_2}(j)\cdot{X^{\ell_1}\exp((\boj+t_2)^{\top}X)\frac{\partial^{\ell_2}f}{\partial h_1^{\ell_2}}(Y|(\aj)^{\top}X+\bj,\sigmaj)}\\
    &+\sum_{j=1}^{k_*}\sum_{|\gamma|=0}^{1}S^{n}_{\gamma}(j)\cdot{X^{\gamma}\exp((\boj+t_2)^{\top}X)g_{G_{n}}(Y|X)},
\end{align*}
with coefficients being denoted by $T^{n}_{\ell_1,\ell_2}(j)$ and $S^{n}_{\gamma}(j)$ for all $j\in[k_*]$, $0\leq 2|\ell_1|+\ell_2\leq 2$ and $0\leq |\gamma|\leq 1$ where
\begin{align}
    \label{eq:T_coefficient}
    T^{n}_{\ell_1,\ell_2}(j)=\begin{cases}
    \sum_{\alpha\in\mathcal{I}_{\ell_1,\ell_2}}\dfrac{\exp(\bzjn)}{2^{\alpha_4}\alpha!}(\dbojn)^{\alpha_1}(\dajn)^{\alpha_2}(\dbjn)^{\alpha_3}(\dsjn)^{\alpha_4}, \hspace{1.2em}(\ell_1,\ell_2)\neq (\zerod,0),\\
    \textbf{}\\
    \exp(\bzjn)-\exp(\bzj+t_1), \hspace{5.1cm} (\ell_1,\ell_2)=(\zerod,0);
    \end{cases}
\end{align}
and
\begin{align}
    \label{eq:S_coefficient}
    S^{n}_{\gamma}(j)=\begin{cases}
        -\dfrac{\exp(\bzjn)}{\gamma!}(\dbojn)^{\gamma}, \hspace{1.65cm} |\gamma|\neq 0,\\
        \textbf{}\\
        -\exp(\bzjn)+\exp(\bzj+t_1), \hspace{2.2em}|\gamma|=0.
    \end{cases}
\end{align}

\textbf{Step 2: Non-vanishing coefficients} 

Now, we will demonstrate by contradiction that at least one among terms of the forms $T^{n}_{\ell_1,\ell_2}(j)/\mathcal{D}_1'(G_n,G_*)$ and $S^{n}_{\gamma}(j)/\mathcal{D}_1'(G_n,G_*)$ does not approach zero. Indeed, assume that all of them vanish when $n\to\infty$, then we get
\begin{align}
    \label{eq:weight_vanish}
    \frac{1}{\mathcal{D}_1'(G_n,G_*)}\cdot\sum_{j=1}^{k_*}\Big|\exp(\bzjn)-\exp(\bzj+t_1)\Big|=\sum_{j=1}^{k_*}\frac{|T^n_{\zerod,0}(j)|}{\mathcal{D}_1'(G_n,G_*)}\to 0.
\end{align}
Similarly, by considering the limits of $T^{n}_{\ell_1,\ell_2}(j)/\mathcal{D}_1'(G_n,G_*)$ for all $j\in[k_*]$ and $1\leq 2|\ell_1|+\ell_2\leq 2$, we obtain that
\begin{align}
    \label{eq:parameter_vanish}
    \frac{1}{\mathcal{D}_1'(G_n,G_*)}\cdot\sum_{j=1}^{k_*}\exp(\bzjn)\|(\dbojn,\dajn,\dbjn,\dsjn)\|\to 0.
\end{align}
Combine the results in equations~\eqref{eq:weight_vanish} and~\ref{eq:parameter_vanish}, we have $1=\mathcal{D}_1'(G_n,G_*)/\mathcal{D}_1'(G_n,G_*)\to 0$, which is a contradiction. As a result, not all the limits of $T^{n}_{\ell_1,\ell_2}(j)/\mathcal{D}_1'(G_n,G_*)$ and $S^{n}_{\gamma}(j)/\mathcal{D}_1'(G_n,G_*)$ equal to zero. 

\textbf{Step 3: Fatou's lemma involvement} 

Thus, let $m_n$ be the maximum of the absolute values of those terms, we have that $1/m_n\not\to\infty$. Then, the Fatou's lemma says that
\begin{align}
    \label{eq:exact_fit_Fatou}
    \lim_{n\to\infty}\dfrac{\bbE_X[V(g_{G_n}(\cdot|X),g_{G_{*}}(\cdot|X))]}{m_n\cdot\mathcal{D}_1'(G_n,G_{*})}\geq \int\liminf_{n\to\infty}\dfrac{|g_{G_n}(Y|X)-g_{G_{*}}(Y|X)|}{2m_n\cdot\mathcal{D}_1'(G_n,G_{*})}~\dint(X,Y).
\end{align}
By assumption, the left-hand side of the above equation equals to zero, therefore, the integrand in the right-hand side also equals to zero for almost surely $(X,Y)$, which leads to the following limit:  ${Q_n}/{[m_n\mathcal{D}'_1(G_n,G_*)]}\to 0$ as $n\to\infty$ for almost surely $(X,Y)$. More specifically, we have
\begin{align*}
    \sum_{j=1}^{k_{*}}~\sum_{2|\ell_1|+\ell_2=0}^{2}\eta_{\ell_1,\ell_2}(j)\cdot &X^{\ell_1}\exp((\boj+t_2)^{\top}X)\frac{\partial^{\ell_2}f}{\partial h_1^{\ell_2}}(Y|(\aj)^{\top}X+\bj,\sigmaj)\\
    &+~\sum_{j=1}^{k_{*}}\sum_{|\gamma|=0}^1\omega_{\gamma}(j)\cdot X^{\gamma}\exp((\boj+t_2)^{\top}X)g_{G_{*}}(Y|X) = 0,
\end{align*}
for almost surely $(X,Y)$, where $\eta_{\ell_1,\ell_2}(j)$ and $\omega_{\gamma}(j)$ are the limits of $T^{n}_{\ell_1,\ell_2}(j)/[m_n\mathcal{D}_1'(G_n,G_*)]$ and $S^{n}_{\gamma}(j)/[m_n\mathcal{D}_1'(G_n,G_*)]$, respectively, for all $j\in[k_*]$ , $0\leq 2|\ell_1|+\ell_2\leq 2$ and $0\leq|\gamma|\leq 1$. Here, at least one among $\eta_{\ell_1,\ell_2}(j)$ and $\omega_{\gamma}(j)$ is different from zero. On the other hand, since the set
\begin{align}
     \mathcal{W}_1:&=\left\{{X^{\ell_1}\exp((\boj+t_2)^{\top}X)\frac{\partial^{\ell_2}f}{\partial h_1^{\ell_2}}(Y|(\aj)^{\top}X+\bj,\sigmaj)}:j\in[k_{*}],~0\leq2|\ell_1|+\ell_2\leq 2\right\}\nonumber\\
     \label{eq:set_W1}
    &\cup\left\{{X^{\gamma}\exp((\boj+t_2)^{\top}X)g_{G_{*}}(Y|X)}:j\in[k_{*}],~0\leq|\gamma|\leq 1\right\},
\end{align}
is linearly independent (see Lemma~\ref{lemma:set_W1} at the end of this proof), we obtain that $\eta_{\ell_1,\ell_2}(j)=\omega_{\gamma}(j)=0$ for all $j\in[k_*]$ , $0\leq 2|\ell_1|+\ell_2\leq 2$ and $0\leq|\gamma|\leq 1$, which is a contradiction. 

Thus, we reach the local inequality in~\ref{eq:exact_fit_local}, that is, there exists $\varepsilon'>0$ that satisfies
\begin{align*}
    \inf_{\substack{G\in\mathcal{E}_{k_*}(\Theta),\\ \mathcal{D}_1(G,G_*)\leq\varepsilon'}}\bbE_X[V(g_{G}(\cdot|X),g_{G_*}(\cdot|X))]/\mathcal{D}_1(G,G_*)>0.
\end{align*}
\textbf{Global version}: Thus, it suffices to prove its following global inequality:
\begin{align}
    \inf_{\substack{G\in\mathcal{E}_{k_*}(\Theta),\\ \mathcal{D}_1(G,G_{*})>\varepsilon'}}\bbE_X[V(g_{G}(\cdot|X) , g_{G_{*}}(\cdot|X))]/\mathcal{D}_1(G,G_{*})>0.
\end{align}
Assume by contrary that there exists a sequence $G'_n\in\mathcal{E}_{k_*}(\Theta)$ that satisfies 
\begin{align*}
    \begin{cases}
    \lim_{n\to\infty}\bbE_X[V(g_{G'_n}(\cdot|X),g_{G_{*}}(\cdot|X))]/\mathcal{D}_1(G'_n,G_{*})= 0,\\
    \mathcal{D}_1(G'_n,G_{*})>\varepsilon'.
\end{cases}
\end{align*}
Therefore, we obtain that $\bbE_{X}[V(g_{G'_n}(\cdot|X),g_{G_*}(\cdot|X))]\to0$ as $n\to\infty$. Since the set $\Theta$ is compact, we are able to replace the sequence $G'_n$ by its subsequence which converges to some mixing measure $G'\in\mathcal{E}_{k_*}(\Theta)$ such that $\mathcal{D}(G',G_{*})>\varepsilon'$. Then, by the Fatou's lemma, we get
\begin{align*}
    \lim_{n\to\infty}\bbE_{X}[V(g_{G'_n}(\cdot|X),g_{G_*}(\cdot|X))]\geq \frac{1}{2}\int\liminf_{n\to\infty}|g_{G'_n}(Y|X)-g_{G_*}(Y|X)|~\dint(X,Y),
\end{align*}
which implies that
\begin{align*}
    \int|g_{G'}(Y|X)-g_{G_*}(Y|X)|\dint (X,Y)=0
\end{align*}
Thus, we obtain that $g_{G'}(Y|X)=g_{G_{*}}(Y|X)$ for almost surely $(X,Y)$. Now that the softmax gating Gaussian mixture of experts is identifiable up to a translation (see  Proposition~\ref{prop_identify}), the mixing measure $G'$ admits the form  $G'=\sum_{i=1}^{k_{*}}\exp(\beta^{*}_{0\tau(i)}+t_1)\delta_{(\beta^{*}_{1\tau(i)}+t_2,a^{*}_{\tau(i)},b^{*}_{\tau(i)},\sigma^{*}_{\tau(i)})}$ for some $t_1\in\mathbb{R}$ and $t_2\in\mathbb{R}^d$, where $\tau$ is some permutation of the set $\{1,2,\ldots,k\}$. This leads to the fact that $\mathcal{D}_1(G',G_{*})=0$, which contradicts the hypothesis $\mathcal{D}_1(G',G_{*})>\varepsilon'>0$. Hence, we obtain the inequality in equation~\eqref{eq:exact_fit_tv_bound}.

To complete the proof, we will show the previous claim regarding the independence of elements in $\mathcal{W}_1$ in the following lemma:
\begin{lemma}
    \label{lemma:set_W1}
    The set $\mathcal{W}_1$ defined in equation~\eqref{eq:set_W1} is linearly independent w.r.t $X$ and $Y$.
\end{lemma}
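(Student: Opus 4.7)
The plan is to establish the claimed linear independence of $\mathcal{W}_1$ by combining two classical facts: linear independence of exponential-polynomial families $\{X^{\alpha}\exp(\mu^{\top}X)\}$ over distinct exponent vectors $\mu$, and linear independence of Gaussian densities together with their partial derivatives with respect to the mean parameter when the location-scale pairs are pairwise distinct. To set up, I would assume a vanishing linear combination $\sum_{j=1}^{k_{*}}\sum_{2|\ell_1|+\ell_2\leq 2}\eta_{\ell_1,\ell_2}(j)X^{\ell_1}\exp((\beta_{1j}^{*})^{\top}X)\frac{\partial^{\ell_2}f}{\partial h_1^{\ell_2}}(Y|(a_{j}^{*})^{\top}X+b_{j}^{*},\sigma_{j}^{*})+\sum_{j=1}^{k_{*}}\sum_{|\gamma|\leq 1}\omega_{\gamma}(j)X^{\gamma}\exp((\beta_{1j}^{*})^{\top}X)g_{G_{*}}(Y|X)=0$ holding for almost every $(X,Y)$, and aim to conclude that every $\eta_{\ell_1,\ell_2}(j)$ and $\omega_{\gamma}(j)$ vanishes. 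The first reduction is to multiply both sides by the strictly positive function $D(X):=\sum_{l=1}^{k_{*}}\exp(\beta_{0l}^{*}+(\beta_{1l}^{*})^{\top}X)$ and substitute the identity $D(X)g_{G_{*}}(Y|X)=\sum_{i=1}^{k_{*}}\exp(\beta_{0i}^{*}+(\beta_{1i}^{*})^{\top}X)f(Y|(a_{i}^{*})^{\top}X+b_{i}^{*},\sigma_{i}^{*})$, converting the equation into a finite combination of elementary terms of the form $X^{\alpha}\exp(\mu^{\top}X)\frac{\partial^{\ell_2}f}{\partial h_1^{\ell_2}}(Y|(a_{i}^{*})^{\top}X+b_{i}^{*},\sigma_{i}^{*})$.

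Next, I would fix a generic $X\in\mathcal{X}$ and view the identity as a function of $Y$. Since $(a_{j}^{*},b_{j}^{*},\sigma_{j}^{*})$ are pairwise distinct, the pairs $((a_{j}^{*})^{\top}X+b_{j}^{*},\sigma_{j}^{*})$ remain pairwise distinct outside a Lebesgue-negligible set of $X$, so the Hermite-polynomial-weighted Gaussians $\{\frac{\partial^{\ell_2}f}{\partial h_1^{\ell_2}}(Y|(a_{j}^{*})^{\top}X+b_{j}^{*},\sigma_{j}^{*}):j\in[k_{*}],\ell_2\in\{0,1,2\}\}$ are linearly independent in $Y$. Separating coefficients by the pair $(j,\ell_2)$ yields identities in $X$ alone. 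For $\ell_2\in\{1,2\}$ only the Type A contributions survive, and dividing by $\exp((\beta_{1j}^{*})^{\top}X)D(X)>0$ reduces the constraint to the polynomial identity $\sum_{\ell_1}\eta_{\ell_1,\ell_2}(j)X^{\ell_1}=0$, forcing $\eta_{\ell_1,\ell_2}(j)=0$ whenever $\ell_2\geq 1$.

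The remaining $\ell_2=0$ case mixes Type A and Type B contributions, and this is where the main obstacle lies. After the previous reductions, the coefficient equation for each $j\in[k_{*}]$ takes the form $\sum_{l,\ell_1}\eta_{\ell_1,0}(j)\exp(\beta_{0l}^{*})X^{\ell_1}\exp((\beta_{1l}^{*})^{\top}X)+\exp(\beta_{0j}^{*})\sum_{l,\gamma}\omega_{\gamma}(l)X^{\gamma}\exp((\beta_{1l}^{*})^{\top}X)=0$. I would invoke linear independence of $\{X^{\alpha}\exp(\mu^{\top}X)\}$ over distinct $\mu$-values by grouping terms according to the exponent vectors $\{\beta_{1l}^{*}\}$: for each distinct exponent, the polynomial-in-$X$ coefficient of the corresponding exponential must vanish, yielding for every $j$ and $l$ the polynomial identity $\exp(\beta_{0l}^{*})\sum_{\ell_1}\eta_{\ell_1,0}(j)X^{\ell_1}+\exp(\beta_{0j}^{*})\sum_{\gamma}\omega_{\gamma}(l)X^{\gamma}\equiv 0$. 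Matching monomial coefficients and letting $(j,l)$ vary produces a coupled linear system; solving it and leveraging the hypothesis that at least one $\beta_{1j}^{*}$ is nonzero (needed to exclude a degenerate common-factor configuration) should pin down every remaining $\eta_{\ell_1,0}(j)$ and $\omega_{\gamma}(l)$ to zero. The delicate bookkeeping required when some $\beta_{1l}^{*}$ coincide, together with the careful use of the nonzero-$\beta_{1j}^{*}$ assumption to rule out the seemingly spurious relation coming from $\sum_{j}\exp(\beta_{0j}^{*})X^{\gamma}\exp((\beta_{1j}^{*})^{\top}X)[f(Y|(a_{j}^{*})^{\top}X+b_{j}^{*},\sigma_{j}^{*})-g_{G_{*}}(Y|X)]$-type combinations, is the technical heart of the lemma.
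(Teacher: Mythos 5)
Your route is genuinely different from the paper's (the paper groups the identity by the bare exponentials $\exp((\beta_{1j}^{*})^{\top}X)$ first, treating $g_{G_{*}}$ as an opaque function of $(X,Y)$, whereas you clear the denominator $D(X):=\sum_{l}\exp(\beta_{0l}^{*}+(\beta_{1l}^{*})^{\top}X)$ and separate in $Y$ first), and your reductions are sound up to the coupled system $\exp(\beta_{0j}^{*})P_{i}(X)+\exp(\beta_{0i}^{*})Q_{j}(X)\equiv 0$ for all $i,j$, where $P_{i}(X)=\sum_{|\ell_1|\leq 1}\eta_{\ell_1,0}(i)X^{\ell_1}$ and $Q_{j}(X)=\sum_{|\gamma|\leq 1}\omega_{\gamma}(j)X^{\gamma}$. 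The gap is the final step: that system does \emph{not} force the coefficients to vanish. Its general solution is $P_{i}=\exp(\beta_{0i}^{*})R$, $Q_{j}=-\exp(\beta_{0j}^{*})R$ for an arbitrary affine $R(X)$, and this family corresponds to combinations of elements of $\mathcal{W}_{1}$ that really are identically zero: by the definition of $g_{G_{*}}$,
\begin{align*}
\sum_{j=1}^{k_{*}}\exp(\beta_{0j}^{*})\Big[\exp((\beta_{1j}^{*})^{\top}X)f\big(Y|(a_{j}^{*})^{\top}X+b_{j}^{*},\sigma_{j}^{*}\big)\Big]
= D(X)\,g_{G_{*}}(Y|X)
= \sum_{j=1}^{k_{*}}\exp(\beta_{0j}^{*})\Big[\exp((\beta_{1j}^{*})^{\top}X)g_{G_{*}}(Y|X)\Big],
\end{align*}
so the choice $\eta_{\ell_1,0}(j)=c_{\ell_1}\exp(\beta_{0j}^{*})$, $\omega_{\gamma}(j)=-c_{\gamma}\exp(\beta_{0j}^{*})$ (all other coefficients zero) gives a nontrivial vanishing combination for any constants $c$. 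This identity holds for every value of the $\beta_{1j}^{*}$, so the hypothesis that at least one $\beta_{1j}^{*}\neq 0$ cannot rescue the step; the relation you call ``seemingly spurious'' is an actual linear dependency, and your plan cannot be completed as written.

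To be fair, your analysis surfaces exactly the point that the paper's own proof elides: the paper invokes linear independence of $\{\exp((\beta_{1j}^{*})^{\top}X)\}$ with coefficients that are non-constant in $X$ (and that contain $g_{G_{*}}(Y|X)$, which itself is built from those same exponentials), which is precisely the situation where that independence yields nothing, and the displayed relation shows $\mathcal{W}_{1}$ as defined is linearly dependent. What the downstream argument in Theorem~\ref{theorem:exact_fit} actually uses is independence modulo this one known relation (note that in Step 2 the coefficients satisfy $S_{0}(j)=-T_{0,0}(j)$, but the dependency additionally requires $(T_{0,0}(j))_{j}$ to be proportional to $(\exp(\beta_{0j}^{*}))_{j}$, which would have to be ruled out separately). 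Neither your proposal nor the paper's proof establishes the lemma as literally stated; a correct treatment must either quotient out, or explicitly exclude, the direction $\eta_{\ell_1,0}(\cdot)\propto\exp(\beta_{0\cdot}^{*})$, $\omega_{\gamma}(\cdot)\propto-\exp(\beta_{0\cdot}^{*})$.
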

\begin{proof}[Proof of Lemma~\ref{lemma:set_W1}]
    Assume that the following holds for almost surely $(X,Y)$:
    \begin{align*}
        &\sum_{j=1}^{k_{*}}~\sum_{2|\ell_1|+\ell_2=0}^{2}\eta_{\ell_1,\ell_2}(j)\cdot X^{\ell_1}\exp((\boj+t_2)^{\top}X)\frac{\partial^{\ell_2}f}{\partial h_1^{\ell_2}}(Y|(\aj)^{\top}X+\bj,\sigmaj)\\
    &+~\sum_{j=1}^{k_{*}}\sum_{|\gamma|=0}^1\omega_{\gamma}(j)\cdot X^{\gamma}\exp((\boj+t_2)^{\top}X)g_{G_{*}}(Y|X) = 0,
    \end{align*}
    where $\eta_{\ell_1,\ell_2}(j)\in\mathbb{R}$ and $\omega_{\gamma}(j)\in\mathbb{R}$. Then, we need to show that $\eta_{\ell_1,\ell_2}(j)=\omega_{\gamma}(j)=0$, for all $j\in[k_*]$, $0\leq 2|\ell_1|+\ell_2\leq 2$ and $0\leq |\gamma|\leq 1$. The above equation is equivalent to
    \begin{align*}
        \sum_{j=1}^{k_*}\sum_{\zeta=0}^{1}\Big[\sum_{\ell_2=0}^{2-2\zeta}\eta_{\zeta,\ell_2}(j)\frac{\partial^{\ell_2}f}{\partial h_1^{\ell_2}}(Y|(\aj)^{\top}X+\bj,\sigmaj)+\omega_{\zeta}(j)g_{G_*}(Y|X)\Big]X^{\zeta}\exp((\boj+t_2)^{\top}X)=0,
    \end{align*}
    for almost surely $(X,Y)$. Since $\beta^*_{11},\ldots,\beta^*_{1k_*}$ are $k_*$ distinct values, we get that the set $\left\{\exp((\boj+t_2)^{\top}X):j\in[k_*]\right\}$ is linearly independent, which implies that
    \begin{align*}
        \sum_{\zeta=0}^{1}\Big[\sum_{\ell_2=0}^{2-2\zeta}\eta_{\zeta,\ell_2}(j)\frac{\partial^{\ell_2}f}{\partial h_1^{\ell_2}}(Y|(\aj)^{\top}X+\bj,\sigmaj)+\omega_{\zeta}(j)g_{G_*}(Y|X)\Big]X^{\zeta}=0,
    \end{align*}
    for all $j\in[k_*]$ for almost surely $(X,Y)$. Obviously, the above equation is a polynomial of $X\in\mathcal{X}$, where $\mathcal{X}$ is a compact subset of $\mathbb{R}^d$. Then, we achieve that
    \begin{align*}
        \sum_{\ell_2=0}^{2-2\zeta}\eta_{\zeta,\ell_2}(j)\frac{\partial^{\ell_2}f}{\partial h_1^{\ell_2}}(Y|(\aj)^{\top}X+\bj,\sigmaj)+\omega_{\zeta}(j)g_{G_*}(Y|X)=0,
    \end{align*}
    for all $j\in[k_*]$ and $\zeta\in\{0,1\}$, for almost surely $(X,Y)$. Again, as $(\aj,\bj,\sigmaj)$ for $j\in[k_*]$ are $k_*$ distinct tuples, we have that $((\aj)^{\top}X+\bj,\sigmaj)$ for $j\in[k_*]$ are also $k_*$ distinct tuples for almost surely $X$. Therefore, $\left\{\frac{\partial^{\ell_2}f}{\partial h_1^{\ell_2}}(Y|(\aj)^{\top}X+\bj,\sigmaj),~g_{G_*}(Y|X)\right\}$ is a linearly independent set. As a result, $\eta_{\ell_1,\ell_2}(j)=\omega_{\gamma}(j)=0$ for all $j\in[k_*]$, $0\leq 2|\ell_1|+\ell_2\leq 2$ and $0\leq |\gamma|\leq 1$. 
    
    Hence, the proof is completed.
\end{proof}
\subsection{Proof of Theorem~\ref{theorem:over_fit}}
\label{subsec:proof:theorem:over_fit}
In this proof, we adapt the framework in Appendix~\ref{subsec:proof:theorem:exact_fit} to the setting of Theorem~\ref{theorem:over_fit}. However, since the arguments utilized for the global version part remain the same (up to some changes of notations) for the over-fitted settings, they will not be presented here again. Thus, we focus only on proving the following local inequality:
\begin{align}
    \label{eq:overfit_tv_bound}
    \lim_{\varepsilon\to0}\inf_{\substack{G\in\mathcal{O}_{k}(\Theta),\\ \mathcal{D}_2(G,G_*)\leq\varepsilon}}\bbE_X[V(g_{G}(\cdot|X),g_{G_*}(\cdot|X))]/\mathcal{D}_2(G,G_*)>0.
\end{align}
Assume that the above claim is not true, then there exists a sequence of mixing measures  $G_n:=\sum_{i=1}^{k_n}\exp(\bzin)\delta_{(\boin,\ain,\bin,\sigmain)}\in\mathcal{O}_{k}(\Theta)$ such that
\begin{align*}
    \bbE_X[V(g_{G_n}(\cdot|X),g_{G_*}(\cdot|X))]/\mathcal{D}_2(G_n,G_*)&\to 0,\\
    \mathcal{D}_2(G_n,G_*)&\to0,
\end{align*}
when $n$ tends to infinity. 
Since the proof argument is asymptotic, we also assume that $k_{n} = k'$ for all $n \geq 1$. Following the proof argument of Theorem~\ref{theorem:exact_fit} in Appendix~\ref{subsec:proof:theorem:exact_fit}, we also assume that the Voronoi cells $\mathcal{A}_j=\mathcal{A}_j^n$ does not change with $n$ for all $j\in[k_*]$. Additionally, since $\mathcal{D}_2(G_n,G_*)\to0$, we have $(a_{i}^{n}, b_{i}^{n}, \sigma_{i}^{n}) \to (a_{j}^{*}, b_{j}^{*}, \sigma_{j}^{*})$ for any $i \in \mathcal{A}_{j}$ as $n$ approaches infinity. Furthermore, there exist $t_{1}\in\mathbb{R}$ and $t_{2}\in\mathbb{R}^d$ such that $\sum_{i\in\mathcal{A}_j}\exp(\beta_{0i}^{n}) \to \exp(\beta_{0j}^{*} + t_{1})$ and $\beta_{1i}^{n} \to \beta_{1j}^{*} + t_{2}$ for any $i \in \mathcal{A}_{j}$ and $j \in [k_{*}]$. Then, we can upper bound the Voronoi loss function $\mathcal{D}_2$ as $\mathcal{D}_{2}(G_{n}, G_{*})\leq \mathcal{D}_{2}'(G_{n}, G_{*})$, where
\begin{align*}
    &\mathcal{D}_{2}'(G_{n}, G_{*}):=\sum_{j: |\mathcal{A}_{j}| > 1} \sum_{i \in \mathcal{A}_{j}} \exp(\beta_{0i}^{n}) \Big(\|(\Delta_{t_{2}} \beta_{1ij}^{n},  \Delta b_{ij}^{n})\|^{\bar{r}(|\mathcal{A}_{j}|)} + \|(\Delta a_{ij}^{n},\Delta \sigma_{ij}^{n})\|^{\bar{r}(|\mathcal{A}_{j}|)/2}\Big) \nonumber \\
    & + \sum_{j: |\mathcal{A}_{j}| = 1} \sum_{i \in \mathcal{A}_{j}} \exp(\beta_{0i}^{n}) \|(\Delta_{t_{2}} \beta_{1ij}, \Delta a_{ij}^{n},  \Delta b_{ij}^{n}, \Delta \sigma_{ij}^{n})\|  
    + \sum_{j = 1}^{k_{*}} \Big|\sum_{i \in \mathcal{A}_{j}} \exp(\beta_{0i}^{n}) - \exp(\beta_{0j}^{*} + t_{1})\Big|,
\end{align*}
in which 
\begin{align*}
   \Delta_{t_2} \beta^n_{1ij} : = \beta^n_{1i} - \beta_{1j}^{*}-t_2,& \quad \Delta a^n_{ij} : = a^n_{i} - a_{j}^{*}, \\
   \Delta b^n_{ij} : = b^n_{i} - b_{j}^{*},& \quad \Delta \sigma^n_{ij} : = \sigma^n_{i} - \sigma_{j}^{*}.
\end{align*}
Recall that $\bbE_X[V(g_{G_n}(\cdot|X),g_{G_*}(\cdot|X))]/\mathcal{D}_2(G_n,G_*) \to 0$ as $n \to \infty$, which leads to
\begin{align*}
    \bbE_X[V(g_{G_n}(\cdot|X),g_{G_*}(\cdot|X))]/\mathcal{D}_2'(G_n,G_*) \to 0.
\end{align*}

\textbf{Step 1: Density Decomposition}

In this step, we decompose the quantity $Q_n=[\sum_{j=1}^{k_{*}}\exp((\boj+t_2)^{\top}X+\bzj+t_1)]\cdot[g_{G_n}(Y|X)-g_{G_{*}}(Y|X)]$ with abuse of notations in Appendix~\ref{subsec:proof:theorem:exact_fit} as follows:
\begin{align*}
Q_n&=\sum_{j=1}^{k_{*}}\sum_{i\in\mathcal{A}_j}\exp(\bzin)\Big[u(Y|X;\boin,\ain,\bin,\sigmain)-u(Y|X;\boj+t_2,\aj,\bj,\sigmaj)\Big]\nonumber\\
    &-\sum_{j=1}^{k_{*}}\sum_{i\in\mathcal{A}_j}\exp(\bzin)\Big[v(Y|X;\boin)-v(Y|X;\boj+t_2)\Big]\nonumber\\
    &+\sum_{j=1}^{k_{*}}\Big(\sum_{i\in\mathcal{A}_j}\exp(\bzin)-\exp(\bzj+t_1)\Big)\Big[u(Y|X;\boj+t_2,\aj,\bj,\sigmaj)-v(Y|X;\boj+t_2)\Big],\nonumber\\
    &:=A_n+B_n+E_n,
\end{align*}
where we denote $u(Y|X;\beta_{1},a,b,\sigma):=\exp(\beta_{1}^{\top}X)f(Y|a^{\top}X+b,\sigma)$ and $v(Y|X;\beta_{1}):=\exp(\beta_{1}^{\top}X)g_{G_n}(Y|X)$.
Since each Voronoi cell $\mathcal{A}_j$ possibly has more than one element, we continue to decompose $A_n$ and $B_n$ as follows:
\begin{align*}
    A_n&=\sum_{j:|\mathcal{A}_j|=1}\sum_{i\in\mathcal{A}_j}\exp(\bzin)\Big[u(Y|X;\boin,\ain,\bin,\sigmain)-u(Y|X;\boj+t_2,\aj,\bj,\sigmaj)\Big]\nonumber\\
    &+\sum_{j:|\mathcal{A}_j|>1}\sum_{i\in\mathcal{A}_j}\exp(\bzin)\Big[u(Y|X;\boin,\ain,\bin,\sigmain)-u(Y|X;\boj+t_2,\aj,\bj,\sigmaj)\Big]\nonumber\\
    :&=A_{n,1}+A_{n,2},
\end{align*}
and
\begin{align*}
    B_n&=-\sum_{j:|\mathcal{A}_j|=1}\sum_{i\in\mathcal{A}_j}\exp(\bzin)\Big[v(Y|X;\boin)-v(Y|X;\boj+t_2)\Big]\nonumber\\
    &\hspace{0.35cm}-\sum_{j:|\mathcal{A}_j|>1}\sum_{i\in\mathcal{A}_j}\exp(\bzin)\Big[v(Y|X;\boin)-v(Y|X;\boj+t_2)\Big]\nonumber\\
    :&=B_{n,1}+B_{n,2}.
\end{align*}
Now, we apply the first-order Taylor expansions to two terms $A_{n,1}$ and $B_{n,1}$ as in equations~\eqref{eq:exact_fit_Taylor_A} and ~\eqref{eq:exact_fit_Taylor_B}, while for $A_{n,2}$ and $B_{n,2}$, we use the Taylor expansions of orders $\brj$ and $2$, respectively, for each $j:|\mathcal{A}_j|>1$ as follows:
\begin{align*}
    A_{n,2}&=\sum_{j:|\mathcal{A}_j|>1}\sum_{i\in\mathcal{A}_j}\sum_{2|\ell_1|+\ell_2=1}^{2\brj}\sum_{\alpha\in\mathcal{I}_{\ell_1,\ell_2}}\frac{\exp(\bzin)}{2^{\alpha_4}\alpha!}(\dboijn)^{\alpha_1}(\daijn)^{\alpha_2}(\dbijn)^{\alpha_3}(\dsijn)^{\alpha_4}\nonumber\\
    &\hspace{1.2cm}\times X^{\ell_1}\exp((\boj+t_2)^{\top}X)\cdot\frac{\partial^{\ell_2}f}{\partial h_1^{\ell_2}}(Y|(\aj)^{\top}X+\bj,\sigmaj)+R_3(X,Y),\\
    B_{n,2}&=-\sum_{j:|\mathcal{A}_j|>1}\sum_{i\in\mathcal{A}_j}\sum_{|\gamma|=1}^{2}\frac{\exp(\bzin)}{\gamma!}(\dboijn)^{\gamma}X^{\gamma}\exp((\boj+t_2)^{\top}X)g_{G_n}(Y|X) + R_4(X,Y).
\end{align*}
Here, $\mathcal{I}_{\ell_1,\ell_2}:=\left\{\alpha=(\alpha_i)_{i=1}^{4}\in\mathbb{N}^d\times\mathbb{N}^d\times\mathbb{N}\times\mathbb{N}:\alpha_1+\alpha_2=\ell_1,~\alpha_3+2\alpha_4=\ell_2-|\alpha_2|\right\}$ for any $(\ell_1,\ell_2)\in\mathbb{N}^2$ such that $1\leq 2|\ell_1|+\ell_2\leq 2\brj$, while $R_3(X,Y)$ and $R_4(X,Y)$ are Taylor remainders such that $R_{p}(X,Y)/\mathcal{D}_2'(G_n,G_*)\to0$ when $n\to\infty$ for $p\in\{3,4\}$. 

As a result, $Q_n$ can be represented as 
\begin{align}
    \label{eq:over_fit_Qn}
    Q_n&=\sum_{j=1}^{k_*}\sum_{2|\ell_1|+\ell_2=0}^{2\brj}T^{n}_{\ell_1,\ell_2}(j)\cdot{X^{\ell_1}\exp((\boj+t_2)^{\top}X)\frac{\partial^{\ell_2}f}{\partial h_1^{\ell_2}}(Y|(\aj)^{\top}X+\bj,\sigmaj)}\nonumber\\
    &+\sum_{j=1}^{k_*}\sum_{|\gamma|=0}^{1+\mathbf{1}_{\{|\mathcal{A}_j|>1\}}}S^{n}_{\gamma}(j)\cdot{X^{\gamma}\exp((\boj+t_2)^{\top}X)g_{G_{n}}(Y|X)},
\end{align}
with coefficients $T^{n}_{\ell_1,\ell_2}(j)$ and $S^{n}_{\gamma}(j)$ being defined for any $j\in[k_*]$, $0\leq 2|\ell_1|+\ell_2\leq 2$ and $0\leq |\gamma|\leq 1$ as
\begin{align*}
    T^{n}_{\ell_1,\ell_2}(j)=\begin{cases}
    \sum_{i\in\mathcal{A}_j}\sum_{\alpha\in\mathcal{I}_{\ell_1,\ell_2}}\dfrac{\exp(\bzin)}{2^{\alpha_4}\alpha!}(\dboijn)^{\alpha_1}(\daijn)^{\alpha_2}(\dbijn)^{\alpha_3}(\dsijn)^{\alpha_4}, \hspace{.3em}(\ell_1,\ell_2)\neq (\zerod,0),\\
    \textbf{}\\
    \sum_{i\in\mathcal{A}_j}\exp(\bzin)-\exp(\bzj+t_1), \hspace{5.1cm} (\ell_1,\ell_2)=(\zerod,0);
    \end{cases}
\end{align*}
and
\begin{align*}
    S^{n}_{\gamma}(j)=\begin{cases}
        -\sum_{i\in\mathcal{A}_j}\dfrac{\exp(\bzin)}{\gamma!}(\dboijn)^{\gamma}, \hspace{1.65cm} |\gamma|\neq 0,\\
        \textbf{}\\
        -\sum_{i\in\mathcal{A}_j}\exp(\bzin)+\exp(\bzj+t_1), \hspace{2.7em}|\gamma|=0.
    \end{cases}
\end{align*}

\textbf{Step 2: Non-vanishing coefficients}

Next, we will show that not all the quantities $T^{n}_{\ell_1,\ell_2}(j)/\mathcal{D}_2'(G_n,G_*)$ and $S^{n}_{\gamma}(j)/\mathcal{D}_2'(G_n,G_*)$ go to 0 as $n\to\infty$. Assume that all of them vanish when $n$ tends to infinity. Then, by arguing similarly as in equations~\eqref{eq:weight_vanish} and~\eqref{eq:parameter_vanish}, we obtain that
\begin{align*}
    \frac{1}{\mathcal{D}_2'(G_n,G_*)}\cdot\Bigg[\sum_{j=1}^{k_*}&\Big|\sum_{i\in\mathcal{A}_j}\exp(\bzin)-\exp(\bzj+t_1)\Big|\\
    &+\sum_{j:|\mathcal{A}_j|=1}\sum_{i\in\mathcal{A}_j}\exp(\bzin)\|(\dboijn,\daijn,\dbijn,\dsijn)\|\Bigg]\to0.
\end{align*}
Putting the above limit and the formulation of $\mathcal{D}_2(G_n,G_*)$ together, we deduce that
\begin{align*}
    \frac{1}{\mathcal{D}_2'(G_n,G_*)}\cdot\sum_{j:|\mathcal{A}_j|>1}\sum_{i\in\mathcal{A}_j}\exp(\beta_{0i}) \Big(\|(\dboijn,  \dbijn)\|^{\bar{r}(|\mathcal{A}_{j}|)} + \|(\daijn,\dsijn)\|^{\bar{r}(|\mathcal{A}_{j}|)/2}\Big)\not\to 0,
\end{align*}
which indicates that there exists some index $j^*\in[k_*]$ such that $|\mathcal{A}_{j^*}|>1$ and
\begin{align*}
    \frac{1}{\mathcal{D}_2'(G_n,G_*)}\cdot\sum_{i\in\mathcal{A}_{j^*}}\exp(\beta_{0i})\Big(\|(\Delta_{t_2} \beta_{1ij^*}^{n},\Delta b_{ij^*}^{n})\|^{\brj}+\|(\Delta a_{ij^*}^{n},\Delta\sigma_{ij^*}^{n})\|^{\brj/2}\Big)\not\to0,
\end{align*}
for all $t_2\in\mathbb{R}^d$. Without loss of generality, we may assume that $j^*=1$. Recall that for $(\ell_1,\ell_2)\in\mathbb{N}^d\times\mathbb{N}$ such that $1\leq |\ell_1|+\ell_2\leq \bar{r}(|\mathcal{A}_1|)$, we have $T^{n}_{\ell_1,\ell_2}(1)/\mathcal{D}_2'(G_n,G_*)\to 0$ as $n\to\infty$. Thus, by dividing this ratio and the left hand side of the above equation and let $t_2=0$, we obtain that
\begin{align}
\label{eq:vanish_ratio}
\dfrac{\sum_{i\in\mathcal{A}_1}\sum_{\alpha\in\mathcal{I}_{\ell_1,\ell_2}}\dfrac{\exp(\bzin)}{2^{\alpha_4}\alpha!}(\Delta_{t_2} \beta_{1i1}^{n})^{\alpha_1}(\daione)^{\alpha_2}(\dbione)^{\alpha_3}(\dsione)^{\alpha_4}}{\sum_{i\in\mathcal{A}_{1}}\exp(\bzin)\Big(\|(\Delta_{t_2} \beta_{1i1}^{n},\Delta b_{i1}^{n})\|^{\bar{r}(|\mathcal{A}_1|)}+\|(\Delta a_{i1}^{n},\Delta\sigma_{i1}^{n})\|^{\bar{r}(|\mathcal{A}_1|)/2}\Big)}\to 0,
\end{align}
for all $(\ell_1,\ell_2)$ such that $1\leq |\ell_1|+\ell_2\leq \bar{r}(|\mathcal{A}_1|)$. 

Let us define $\overline{M}_n:=\max\{\|\Delta_{t_2} \beta_{1i1}^{n}\|,\|\daione\|^{1/2},|\dbione|,|\dsione|^{1/2}:i\in\mathcal{A}_1\}$ and $\overline{\beta}_n:=\max_{i\in\mathcal{A}_1}\exp(\bzin)$. Note that the sequence $\exp(\bzin)/\overline{\beta}_n$ is bounded, therefore, we can replace it by its subsequence that has a positive limit $p^2_{5i}:=\lim_{n\to\infty}\exp(\bzin)/\overline{\beta}_n$. Thus, at least one among $p^2_{5i}$, for $i\in\mathcal{A}_1$, equals 1. 

In addition, we also define
\begin{align*}
    (\Delta_{t_2} \beta_{1i1}^{n})/\overline{M}_n\to p_{1i},& \quad (\daione)/\overline{M}_n\to p_{2i},\\
    (\dbione)/\overline{M}_n \to p_{3i},& \quad (\dsione)/[2\overline{M}_n]\to p_{4i}.
\end{align*}
Here, at least one of $p_{1i}$, $p_{2i}$, $p_{3i}$ and $p_{4i}$ for $i\in\mathcal{A}_1$ equals either 1 or $-1$. Next, we divide both the numerator and the denominator of the ratio in equation~\eqref{eq:vanish_ratio} by $\overline{\beta}_n\overline{M}_n^{\ell_1+\ell_2}$, and then achieve the following system of polynomial equations:
\begin{align*}
    \sum_{i\in\mathcal{A}_1}\sum_{\alpha\in\mathcal{I}_{\ell_1,\ell_2}}\frac{1}{\alpha!}\cdot{p^2_{5i}p_{1i}^{\alpha_1}p_{2i}^{\alpha_2}p_{3i}^{\alpha_3}p_{4i}^{\alpha_4}}=0,
\end{align*}
for all $(\ell_1,\ell_2)\in\mathbb{N}^d\times\mathbb{N}$ such that $1\leq |\ell_1|+\ell_2\leq\bar{r}(|\mathcal{A}_1|)$. However, based on the definition of $\bar{r}(|\mathcal{A}_1|)$, the above system has no non-trivial solutions, which is a contradiction. Thus, not all the quantities $T^{n}_{\ell_1,\ell_2}(j)/\mathcal{D}_2'(G_n,G_*)$ and $S^{n}_{\gamma}(j)/\mathcal{D}_2'(G_n,G_*)$ go to 0 as $n\to\infty$.

\textbf{Step 3: Fatou's lemma involvement}

Subsequently, we denote by $m_n$ be the maximum of the absolute values of those quantities. Based on the result in Step 2, we know that $1/m_n\not\to\infty$. Then, by applying the Fatou's lemma as in equation~\eqref{eq:exact_fit_Fatou}, we get that ${Q_n}/{[m_n\mathcal{D}_2'(G_n,G_*)]}\to 0$ as $n\to\infty$ for almost surely $(X,Y)$. It follows from the decomposition of $Q_n$ in equation~\eqref{eq:over_fit_Qn} that
\begin{align*}
    \sum_{j=1}^{k_{*}}~\sum_{2|\ell_1|+\ell_2=0}^{2\brj}&\eta_{\ell_1,\ell_2}(j)\cdot X^{\ell_1}\exp((\boj+t_2)^{\top}X)\frac{\partial^{\ell_2}f}{\partial h_1^{\ell_2}}(Y|(\aj)^{\top}X+\bj,\sigmaj)\\
    &+~\sum_{j=1}^{k_{*}}\sum_{|\gamma|=0}^{1+\mathbf{1}_{\{|\mathcal{A}_j|>1\}}}\omega_{\gamma}(j)\cdot X^{\gamma}\exp((\boj+t_2)^{\top}X)g_{G_{*}}(Y|X) = 0,
\end{align*}
for almost surely $(X,Y)$, where $\eta_{\ell_1,\ell_2}(j)$ and $\omega_{\gamma}(j)$ denote the limits of $T^{n}_{\ell_1,\ell_2}(j)/[m_n\mathcal{D}_2'(G_n,G_*)]$ and $S^{n}_{\gamma}(j)/[m_n\mathcal{D}_2'(G_n,G_*)]$ as $n\to\infty$, respectively, for all $j\in[k_*]$ , $0\leq 2|\ell_1|+\ell_2\leq 2\brj$ and $0\leq|\gamma|\leq 1+\mathbf{1}_{\{|\mathcal{A}_j|>1\}}$. By definition, at least one among $\eta_{\ell_1,\ell_2}(j)$ and $\omega_{\gamma}(j)$ is different from zero. Nevertheless, as the set
\begin{align}
     \mathcal{W}_2:&=\left\{{X^{\ell_1}\exp((\boj+t_2)^{\top}X)\frac{\partial^{\ell_2}f}{\partial h_1^{\ell_2}}(Y|(\aj)^{\top}X+\bj,\sigmaj)}:j\in[k_{*}],~0\leq2|\ell_1|+\ell_2\leq 2\brj\right\}\nonumber\\
    &\cup\left\{{X^{\gamma}\exp((\boj+t_2)^{\top}X)g_{G_{*}}(Y|X)}:j\in[k_{*}],~0\leq|\gamma|\leq 1+\mathbf{1}_{\{|\mathcal{A}_j|>1\}}\right\},
\end{align}
is linearly independent w.r.t $X$ and $Y$ (proof can be done similarly to Lemma~\ref{lemma:set_W1}), it follows that
\begin{align*}
    \eta_{\ell_1,\ell_2}(j)=\omega_{\gamma}(j)=0,
\end{align*}
for all $j\in[k_*]$ , $0\leq 2|\ell_1|+\ell_2\leq 2\brj$ and $0\leq|\gamma|\leq 1+\mathbf{1}_{\{|\mathcal{A}_j|>1\}}$, which is a contradiction. Hence, we achieve the inequality in equation~\eqref{eq:overfit_tv_bound}, and complete the proof.

\section{Proofs of Auxiliary Results}
\label{sec:auxilliary_proofs}
In this appendix, we provide proofs for the results of  Proposition~\ref{prop_identify} and Proposition~\ref{prop_rate_density} in Appendix~\ref{subjec:proof:prop_identify} and Appendix~\ref{subsec:proof:prop_rate_density}, respectively, while we leave that for Lemma~\ref{lemma:value_r} in Appendix~\ref{subsec:proof:lemma:value_r}. 
\subsection{Proof of Proposition~\ref{prop_identify}}
\label{subjec:proof:prop_identify}
Given the notations in Proposition~\ref{prop_identify}, assume that the equation $g_{G}(Y|X)=g_{G'}(Y|X)$ holds true, that is, 
\begin{align}
    \label{eq:identifiable}    \sum_{i=1}^{k}\frac{\exp((\beta_{1i})^{\top}X+\beta_{0i})}{\sum_{j=1}^{k}\exp((\beta_{1j})^{\top}X+\beta_{0j})}f(Y|(a_i)^{\top}X+b_i,\sigma_i) & \nonumber\\
    & \hspace{- 10 em} =\sum_{i=1}^{k'}\frac{\exp((\beta'_{1i})^{\top}X+\beta'_{0i})}{\sum_{j=1}^{k'}\exp((\beta'_{1j})^{\top}X+\beta'_{0j})}f(Y|(a'_i)^{\top}X+b'_i,\sigma'_i),
\end{align}
for almost surely $(X,Y)$. Then, it follows from the identifiability of the location-scale Gaussian mixtures \cite{Teicher-1960,Teicher-1961} that the number of components and the weight set of the mixing measure $G$ equal to those of its counterpart $G'$, i.e. $k=k'$ and
\begin{align*}
    \left\{\frac{\exp((\beta_{1i})^{\top}X+\beta_{0i})}{\sum_{j=1}^{k}\exp((\beta_{1j})^{\top}X+\beta_{0j})}:i\in[k]\right\}\equiv\left\{\frac{\exp((\beta'_{1i})^{\top}X+\beta'_{0i})}{\sum_{j=1}^{k}\exp((\beta'_{1j})^{\top}X+\beta'_{0j})}:i\in[k]\right\},
\end{align*}
for almost surely $X$. For simplicity, we may assume that 
\begin{align*}
    \frac{\exp((\beta_{1i})^{\top}X+\beta_{0i})}{\sum_{j=1}^{k}\exp((\beta_{1j})^{\top}X+\beta_{0j})}=\frac{\exp((\beta'_{1i})^{\top}X+\beta'_{0i})}{\sum_{j=1}^{k}\exp((\beta'_{1j})^{\top}X+\beta'_{0j})},
\end{align*}
for all $i\in[k]$. Since the softmax function is invariant to translation, we get that $\beta_{0i}=\beta'_{0i}+t_1$ and $\beta_{1i}=\beta'_{1i}+t_2$ for some $t_1\in\mathbb{R}$ and $t_2\in\mathbb{R}^d$. Therefore, equation~\eqref{eq:identifiable} reduces to
\begin{align}
    \label{eq:identifiable_equivalent}
    \sum_{i=1}^{k}\exp(\beta_{0i})u(Y|X;\beta_{1i},a_i,b_i,\sigma_i)=\sum_{i=1}^{k}\exp(\beta_{0i})u(Y|X;\beta_{1i},a'_i,b'_i,\sigma'_i)),
\end{align}
for almost surely $(X,Y)$, where $u(Y|X;\beta_1,a,b,\sigma):=\exp(\beta_1^{\top}X)f(Y|a^{\top}X+b,\sigma)$ for all $i\in[k]$. Next, we will partition the index set $[k]$ into $q$ subsets $U_1,U_2,\ldots,U_q$ such that for each $\ell\in[q]$, we have $\exp(\beta_{0i})=\exp(\beta_{0i'})$ for any $i,i'\in U_{\ell}$. As a result, equation~\eqref{eq:identifiable_equivalent} can be rewritten as
\begin{align*}
    \sum_{\ell=1}^{q}\sum_{i\in U_{\ell}}\exp(\beta_{0i})u(Y|X;\beta_{1i},a_i,b_i,\sigma_i)=\sum_{\ell=1}^{q}\sum_{i\in U_{\ell}}\exp(\beta_{0i})u(Y|X;\beta_{1i},a'_i,b'_i,\sigma'_i),
\end{align*}
for almost surely $(X,Y)$. Given the above equation, for each $\ell\in[q]$, we obtain that
\begin{align*}
    \left\{((a_i)^{\top}X+b_i,\sigma_i):i\in U_{\ell}\right\}\equiv\left\{((a'_i)^{\top}X+b'_i,\sigma'_i):i\in U_{\ell}\right\},
\end{align*}
for almost surely $X$, which directly leads to 
\begin{align*}
    \left\{(a_i,b_i,\sigma_i):i\in U_{\ell}\right\}\equiv\left\{(a'_i,b'_i,\sigma'_i):i\in U_{\ell}\right\}.
\end{align*}
WLOG, we assume that $(a_i,b_i,\sigma_i)=(a'_i,b'_i,\sigma'_i)$ for all $i\in U_{\ell}$. Consequently,
\begin{align*}
    \sum_{\ell=1}^q\sum_{i\in U_{\ell}}\exp(\beta_{0i})\delta_{\{\beta_{1i},a_i,b_i,\sigma_i\}}=\sum_{\ell=1}^q\sum_{i\in U_{\ell}}\exp(\beta'_{0i}+t_1)\delta_{\{\beta'_{1i}+t_2,a'_i,b'_i,\sigma'_i\}},
\end{align*}
or equivalently, $G \equiv G'_{t_1,t_2}$. Hence, the proof is completed.
\subsection{Proof of Proposition~\ref{prop_rate_density}}
\label{subsec:proof:prop_rate_density}

Our proof will be based on the convergence rates of density estimation from MLE in Theorem 7.4 in \cite{vandeGeer-00}. Before stating this result here, let us introduce some necessary notations. Firstly, let $\mathcal{P}_k(\Theta)$ be the set of conditional densities of all mixing measures in $\mathcal{O}_k(\Theta)$, i.e., $\mathcal{P}_k(\Theta):=\{g_{G}(Y|X):G\in\mathcal{O}_{k}(\Theta)\}$. Additionally, we define
\begin{align*}
    \widetilde{\mathcal{P}}_k(\Theta)&:=\{g_{(G+G_*)/2}(Y|X):G\in\mathcal{O}_k(\Theta)\},\\
    \widetilde{\mathcal{P}}^{1/2}_k(\Theta)&:=\{g^{1/2}_{(G+G_*)/2}(Y|X):G\in\mathcal{O}_k(\Theta)\}.
\end{align*}
Next, for each $\delta>0$, the Hellinger ball centered around the conditional density $g_{G_*}(Y|X)$ and intersected with the set $ \widetilde{\mathcal{P}}^{1/2}_k(\Theta)$ is denoted by
\begin{align*}
     \widetilde{\mathcal{P}}^{1/2}_k(\Theta,\delta):=\left\{g^{1/2}\in \widetilde{\mathcal{P}}^{1/2}_k(\Theta):h(g,g_{G_*})\leq\delta\right\}.
\end{align*}
Finally, in order to measure the size of the above set, \cite{vandeGeer-00} proposes using the following quantity:
\begin{align}
    \label{eq:bracket_size}
    \mathcal{J}_B(\delta, \widetilde{\mathcal{P}}^{1/2}_k(\Theta,\delta)):=\int_{\delta^2/2^{13}}^{\delta}H_B^{1/2}(t, \widetilde{\mathcal{P}}^{1/2}_k(\Theta,t),\|\cdot\|)~\dint t\vee \delta,
\end{align}
where $H_B(t, \widetilde{\mathcal{P}}^{1/2}_k(\Theta,t),\|\cdot\|)$ denotes the bracketing entropy \cite{vandeGeer-00} of $ \widetilde{\mathcal{P}}^{1/2}_k(\Theta,u)$ under the $2$-norm, and $t\vee\delta:=\max\{t,\delta\}$. Now, we are ready to recall the statement of Theorem 7.4 in \cite{vandeGeer-00}:
\begin{theorem}[Theorem 7.4, \cite{vandeGeer-00}]
    \label{theorem:vandegeer}
    Take $\Psi(\delta)\geq \mathcal{J}_B(\delta,\widetilde{\mathcal{P}}^{1/2}_k(\Theta,\delta))$ that satisfies $\Psi(\delta)/\delta^2$ is a non-increasing function of $\delta$. Then, for some universal constant $c$ and for some sequence $(\delta_n)$ such that $\sqrt{n}\delta^2_n\geq c\Psi(\delta_n)$, we achieve that
    \begin{align*}
        \mathbb{P}\Big(\bbE_X[h(g_{\widehat{G}_n}(\cdot|X),g_{G_*}(\cdot|X))]>\delta\Big)\leq c\exp\left(-\frac{n\delta^2}{c^2}\right),
    \end{align*}
    for all $\delta\geq \delta_n$.
\end{theorem}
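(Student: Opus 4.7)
The plan is to prove this classical MLE rate theorem via the three-step argument standard for empirical-process proofs: a basic inequality linking the Hellinger risk of the MLE to a centered empirical process, a maximal inequality for that process in terms of the bracketing-entropy integral, and a peeling argument that converts the modulus of continuity into an exponential tail bound. For the first step, I set $\bar{g}_G := (g_G + g_{G_*})/2$; the concavity bound $\log(\bar{g}_G/g_{G_*}) \geq \tfrac{1}{2}\log(g_G/g_{G_*})$ together with $\mathbb{P}_n \log(g_{\widehat{G}_n}/g_{G_*}) \geq 0$ gives $\mathbb{P}_n \log(\bar{g}_{\widehat{G}_n}/g_{G_*})\geq 0$, and combining with the pointwise inequality $\log(x)\leq 2(\sqrt{x}-1)$ and the identity $-\mathbb{P}[2(\sqrt{\bar{g}_G/g_{G_*}}-1)] = h^2(\bar{g}_G,g_{G_*})$ yields the basic inequality
\begin{align*}
h^2(\bar{g}_{\widehat{G}_n}, g_{G_*}) \leq 2\,(\mathbb{P}_n - \mathbb{P})\nu_{\widehat{G}_n}, \qquad \nu_G := \sqrt{\bar{g}_G/g_{G_*}}-1.
\end{align*}
Since $h^2(\bar{g}_G,g_{G_*}) \geq \tfrac{1}{16}h^2(g_G,g_{G_*})$, controlling the right-hand side uniformly over a Hellinger ball controls $h(g_{\widehat{G}_n},g_{G_*})$.

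Second, I would establish a one-sided sub-Gaussian tail bound for the centered empirical process over the class $\mathcal{F}_\delta := \{\nu_G : h(g_G,g_{G_*})\leq\delta\}$. This class has uniformly bounded envelope and $L^2(\mathbb{P})$-radius of order $\delta$. A Bernstein-type chaining argument (Ossiander-style, as in Lemma~5.13 and Theorem~5.11 of \cite{vandeGeer-00}) controls the expected supremum by $\mathcal{J}_B(\delta,\widetilde{\mathcal{P}}^{1/2}_k(\Theta,\delta))/\sqrt{n}\leq \Psi(\delta)/\sqrt{n}$, and a truncation/martingale concentration argument upgrades this to
\begin{align*}
\mathbb{P}\Bigl(\sup_{G\in\mathcal{O}_k(\Theta),\,h(g_G,g_{G_*})\leq\delta}(\mathbb{P}_n-\mathbb{P})\nu_G \geq a\Bigr) \leq c\exp\!\Bigl(-\tfrac{n\,a^2}{c^2\delta^2}\Bigr),
\end{align*}
whenever $a \geq c\Psi(\delta)/\sqrt{n}$. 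The precise form of $\mathcal{J}_B$, in particular its lower cutoff $\delta^2/2^{13}$, is what the chaining naturally produces once increments below this scale are dominated by the variance $\delta^2/n$.

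Third is the peeling step. For $\delta\geq\delta_n$, partition $\{h(g_{\widehat{G}_n},g_{G_*})>\delta\}$ into dyadic shells $S_s := \{2^{s-1}\delta < h(g_{\widehat{G}_n},g_{G_*})\leq 2^s\delta\}$ for $s\geq 1$. On $S_s$, the basic inequality forces $(\mathbb{P}_n-\mathbb{P})\nu_{\widehat{G}_n} \gtrsim (2^{s-1}\delta)^2$ and $\widehat{G}_n$ lies in the Hellinger ball of radius $2^s\delta$, so Step~2 applied with $a\asymp (2^{s-1}\delta)^2$ gives
\begin{align*}
\mathbb{P}(S_s) \leq c\exp\!\Bigl(-\tfrac{n(2^{s-1}\delta)^4}{c^2 (2^s\delta)^2}\Bigr) = c\exp\!\Bigl(-\tfrac{n\,2^{2s-4}\delta^2}{c^2}\Bigr).
\end{align*}
The compatibility condition $a \geq c\Psi(2^s\delta)/\sqrt{n}$ is verified using the monotonicity $\Psi(\delta)/\delta^2\searrow$, which gives $\Psi(2^s\delta)\leq 2^{2s}\Psi(\delta)$ and, together with the hypothesis $\sqrt{n}\delta_n^2\geq c\Psi(\delta_n)$ propagated by the same monotonicity to $\sqrt{n}\delta^2\geq c\Psi(\delta)$ for all $\delta\geq\delta_n$, closes the loop. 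A union bound over $s$ and summation of the resulting geometric series are dominated by the $s=1$ term, which yields the claim.

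The main obstacle is Step~2. The usual sub-Gaussian chaining bound is too weak because the variance of $\nu_G$ scales like $h(g_G,g_{G_*})^2\leq\delta^2$, much smaller than the squared envelope; exploiting this small-variance regime requires a Bernstein-type chaining with adaptive truncation, whose natural output is precisely $\mathcal{J}_B$ as defined, with the peculiar lower limit $\delta^2/2^{13}$ chosen so that below this scale the bracket-approximation error is dominated by the stochastic noise $1/\sqrt{n}$ and can be harmlessly dropped. Getting clean universal constants and the correct $\Psi$-calibration out of this chaining, and verifying that the one-sided deviation inequality with variance proxy $\delta^2/n$ can be iterated through the peeling without accruing $s$-dependent losses, is where essentially all the technical work lives.
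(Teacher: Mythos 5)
The paper does not prove this statement at all: Theorem~\ref{theorem:vandegeer} is imported verbatim as Theorem 7.4 of \cite{vandeGeer-00}, and the text immediately after it simply says the proof can be found there. Your proposal reconstructs the canonical proof of that result, and at the level of an outline it is correct: the basic inequality via the averaged density $(g_G+g_{G_*})/2$, the concavity bound $\log(\bar{g}_G/g_{G_*})\geq\tfrac12\log(g_G/g_{G_*})$, the identity $\mathbb{P}\nu_G=-h^2(\bar{g}_G,g_{G_*})$, the comparison $h^2(\bar{g}_G,g_{G_*})\geq\tfrac{1}{16}h^2(g_G,g_{G_*})$, and the peeling computation (including the verification that $\Psi(\delta)/\delta^2$ non-increasing propagates the calibration $\sqrt{n}\delta^2\geq c\Psi(\delta)$ across shells) are all exactly the ingredients of van de Geer's argument, and your arithmetic in the shells is right up to absorbable constants. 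The one caveat worth stating plainly is that your Step~2 --- the one-sided deviation inequality for $\sup(\mathbb{P}_n-\mathbb{P})\nu_G$ over a Hellinger ball, with variance proxy $\delta^2$ and threshold $c\Psi(\delta)/\sqrt{n}$ --- is itself only invoked by reference to Theorem~5.11 and Lemma~5.13 of \cite{vandeGeer-00} rather than proven; since essentially all of the technical content of Theorem~7.4 lives in that maximal inequality, your writeup relocates the appeal to van de Geer from Theorem~7.4 to Theorem~5.11 rather than eliminating it. That is a perfectly reasonable thing to do for a quoted auxiliary result, and it is no less self-contained than what the paper does, but it should be acknowledged as a citation rather than a proof of the chaining step.
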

The proof of this theorem can be seen in \cite{vandeGeer-00}. 

\begin{proof}[Proof of  Proposition~\ref{prop_rate_density}]
    Back to our main proof, since 
\begin{align*}
    H_B(t, \widetilde{\mathcal{P}}^{1/2}_k(\Theta,t),\|\cdot\|)\leq H_B(t,\mathcal{P}_k(\Theta,t),h)
\end{align*}
for any $t>0$, it follows from equation~\eqref{eq:bracket_size} that 
\begin{align*}
    \mathcal{J}_B(\delta, \widetilde{\mathcal{P}}^{1/2}_k(\Theta,\delta))&\leq \int_{\delta^2/2^{13}}^{\delta}H_B^{1/2}(t, \mathcal{P}_k(\Theta,t),h)~\dint t\vee \delta\lesssim \int_{\delta^2/2^{13}}^{\delta}\log(1/t)dt\vee\delta,
\end{align*}
where we apply the upper bound of a bracketing entropy in Lemma~\ref{lemma:bracket_bound} (cf. the end of this proof) in the second inequality. Let $\Psi(\delta)=\delta\cdot[\log(1/\delta)]^{1/2}$, we have $\Psi(\delta)/\delta^2$ is a non-increasing function of $\theta$. Moreover, the above equation deduces that $\Psi(\delta)\geq \mathcal{J}_B(\delta,\widetilde{\mathcal{P}}^{1/2}_k(\Theta,\delta))$. Additionally, let $\delta_n=\sqrt{\log(n)/n}$, we have that $\sqrt{n}\delta^2_n\geq c\Psi(\delta_n)$ for some universal constant $c$. As all the assumptions are met, Theorem~\ref{theorem:vandegeer} gives us 
\begin{align*}
    \mathbb{P}(\bbE_X[h(g_{\widehat{G}_n}(\cdot|X),g_{G_*}(\cdot|X))]>C(\log(n)/n)^{1/2})\lesssim \exp(-c\log(n)),
\end{align*}
for some universal constant $C$ that depends only on $\Theta$.
\end{proof}
For completion, we will provide the result regarding the upper bound of a bracketing entropy in the following lemma:
\begin{lemma}
    \label{lemma:bracket_bound}
    Assume that $\Theta$ is a bounded set, then the following inequality holds true for any $0\leq\varepsilon\leq 1/2$:
    \begin{align*}
        H_B(\varepsilon,\mathcal{P}_k(\Theta),h)\lesssim \log(1/\varepsilon).
    \end{align*}
\end{lemma}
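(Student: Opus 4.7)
The plan is a standard parameter-covering argument. The mixing measures in $\mathcal{O}_{k}(\Theta)$ are parametrised by at most $k$ atoms from the compact set $\Theta\subset\mathbb{R}^{2d+3}$, so they embed in a bounded subset of $\mathbb{R}^{D}$ with $D=k(2d+3)$. For any $\eta>0$, a Euclidean $\eta$-net of this parameter set has cardinality at most $(C/\eta)^{D}$. It therefore suffices to show that each $\eta$-neighbourhood in parameter space fits inside one Hellinger bracket of width $\lesssim\sqrt{\eta}$; choosing $\eta=\varepsilon^{2}$ will then give $H_{B}(\varepsilon,\mathcal{P}_{k}(\Theta),h)\leq D\log(C/\varepsilon^{2})\lesssim\log(1/\varepsilon)$, since $k$ and $d$ are fixed.

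The main step is a uniform pointwise Lipschitz bound. For any centre $G_{0}$ of the $\eta$-net I would prove that for every $G$ in the Euclidean ball $B(G_{0},\eta)\subset\Theta^{k}$,
\[
|g_{G}(Y|X)-g_{G_{0}}(Y|X)|\;\leq\;C\eta\cdot M(Y,X),
\]
with $M$ an explicit envelope that does not depend on $G_{0}$ and satisfies $\int M\,\mathrm{d}\mu<\infty$. The construction of $M$ rests on two facts. First, the softmax weights $w_{i}(X)=\exp(\beta_{1i}^{\top}X+\beta_{0i})/\sum_{j}\exp(\beta_{1j}^{\top}X+\beta_{0j})$ and their derivatives with respect to $(\beta_{0i},\beta_{1i})$ are uniformly bounded, because $X\in\mathcal{X}$ is bounded and $\Theta$ is compact, so the softmax denominator is bounded away from $0$ and above. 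Second, derivatives of the Gaussian expert $f(Y|a_{i}^{\top}X+b_{i},\sigma_{i})$ in $(a_{i},b_{i},\sigma_{i})$ bring polynomial factors in $(Y-a_{i}^{\top}X-b_{i})/\sigma_{i}$ times $f$, and the compactness of $\Theta$ implies $\sigma_{i}\geq\sigma_{\min}>0$, which lets one dominate these derivatives by a fixed Gaussian-times-polynomial envelope $M(Y,X)$ whose $L^{1}$ norm is finite by standard moment bounds, uniformly in $X\in\mathcal{X}$.

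Given the Lipschitz bound, the bracket attached to each centre $G_{0}$ is
\[
L_{G_{0}}=\max\{0,\,g_{G_{0}}-C\eta M\},\qquad U_{G_{0}}=g_{G_{0}}+C\eta M,
\]
which by construction satisfies $L_{G_{0}}\leq g_{G}\leq U_{G_{0}}$ on all of $B(G_{0},\eta)$. The elementary inequality $(\sqrt{u}-\sqrt{v})^{2}\leq|u-v|$ for $u,v\geq 0$ then yields $h^{2}(U_{G_{0}},L_{G_{0}})\leq\tfrac{1}{2}\|U_{G_{0}}-L_{G_{0}}\|_{L^{1}(\mu)}\leq C\eta\,\|M\|_{L^{1}(\mu)}\lesssim\eta$, so the brackets have Hellinger width $\lesssim\sqrt{\eta}=\varepsilon$. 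Counting the $(C/\varepsilon^{2})^{D}$ brackets produced by the $\varepsilon^{2}$-net delivers the claimed bound $H_{B}(\varepsilon,\mathcal{P}_{k}(\Theta),h)\lesssim\log(1/\varepsilon)$.

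The only genuine obstacle is the construction of the envelope $M$: the softmax gating couples all $k$ components through the denominator, so one must check that differentiation in the gating parameters does not destroy integrability. The uniform positive lower and upper bounds on the softmax normaliser, together with the uniform positive lower bound on the variances $\sigma_{i}$, reduce the problem to a standard Gaussian moment calculation; once this is isolated, the remainder of the argument is routine.
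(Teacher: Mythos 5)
Your proposal is correct and follows essentially the same strategy as the paper: a cover of the compact parameter space of polynomial size $(C/\eta)^{k(2d+3)}$, smoothness of $g_{G}(Y|X)$ in the parameters (exploiting that the softmax normaliser is bounded above and away from zero and that $\sigma_{i}\geq\sigma_{\min}>0$, both consequences of $\Theta$ compact and $\mathcal{X}$ bounded), and the inequality $h^{2}\leq\tfrac12\|\cdot\|_{1}$ to turn $L^{1}$ brackets of width $\varepsilon^{2}$ into Hellinger brackets of width $\varepsilon$. The only organizational difference is that the paper routes through the sup-norm covering number of the density class and controls the $L^{1}$ width of each bracket by capping the upper bracket with a fixed dominating envelope $K(Y|X)$ and splitting the $Y$-integral at a threshold of order $\log(1/\eta)$, whereas you absorb the integrable envelope $M(Y,X)$ into a pointwise Lipschitz bound and read off the $L^{1}$ width directly; your variant is marginally cleaner and makes the $\eta=\varepsilon^{2}$ rescaling explicit, which the paper glosses over when passing from the $\ell_{1}$ bracketing entropy to the Hellinger one.
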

\begin{proof}[Proof of Lemma~\ref{lemma:bracket_bound}]
    Firstly, we will establish an upper bound for the univariate Gaussian density $f(Y|a^{\top}X+b,\sigma)$. Since both $\mathcal{X}$ and $\Theta$ are bounded sets, there exist positive constants $\kappa,u,\ell$ such that $-\kappa\leq a^{\top}X+b\leq\kappa$ and $\ell\leq \sigma\leq u$. As a result,  
    \begin{align*}
        f(Y|a^{\top}X+b,\sigma)=\frac{1}{\sqrt{2\pi h_2}}\exp\Big(-\frac{(Y-h_1)^2}{2h_2}\Big)\leq \frac{1}{\sqrt{2\pi \ell}}.
    \end{align*}
    For any $|Y|\geq 2\kappa$, we have that $\frac{(Y-h_1)^2}{2h_2}\geq \frac{Y^2}{8u}$, which leads to 
    \begin{align*}
        f(Y|a^{\top}X+b,\sigma)\leq \frac{1}{\sqrt{2\pi\ell}}\exp\Big(-\frac{Y^2}{8u}\Big).
    \end{align*}
    Putting the above results together, we obtain that $f(Y|a^{\top}X+b,\sigma)\leq K(Y|X)$, where we define $K(Y|X):=\frac{1}{\sqrt{2\pi\ell}}\exp\Big(-\frac{Y^2}{8u}\Big)$ if $|Y|\geq 2\kappa$, and $K(Y|X):=\frac{1}{\sqrt{2\pi \ell}}$ otherwise. 

    Subsequently, let $\eta\leq\varepsilon$, we assume that the set $\mathcal{P}_k(\Theta)$ has an $\eta$-cover (under $\ell_1$-norm) denoted by $\{\pi_1,\ldots,\pi_N\}$, where $N:={N}(\eta,\mathcal{P}_k(\Theta),\|\cdot\|_1)$ is known as the $\eta$-covering number of $\mathcal{P}_k(\Theta)$. Then, we will build up the brackets of the form $[\nu_i(Y|X),\mu_i(Y|X)]$ for all $i\in[N]$ as follows:
    \begin{align*}
        \nu_i(Y|X)&:=\max\{\pi_i(Y|X)-\eta,0\},\\
        \mu_i(Y|X)&:=\max\{\pi_i(Y|X)+\eta, K(Y|X)\}.
    \end{align*}
    Consequently, it can be checked that $\mathcal{P}_k(\Theta)\subset \bigcup_{i=1}^{N}[\nu_i(Y|X),\mu_i(Y|X)]$ with a note that $\mu_i(Y|X)-\nu_i(Y|X)\leq \min\{2\eta, K(Y|X)\}$. Next, for each $i\in[N]$, we attempt to give an upper bound for 
    \begin{align*}
        \|\mu_i-\nu_i\|_1&= \int_{|Y|<2\kappa}(\mu_i(Y|X)-\nu_i(Y|X))~\dint(X,Y)+\int_{|Y|\geq 2\kappa}(\mu_i(Y|X)-\nu_i(Y|X))~\dint(X,Y)\\
        &\leq R\eta+\exp\Big(-\frac{R^2}{2u}\Big)\leq R'\eta,
    \end{align*}
    where $R:=\max\{2\kappa,\sqrt{8u}\}\log(1/\eta)$ and $R'$ is some positive constant. By definition of the bracketing entropy, since $H_B(R'\eta,\mathcal{P}_k(\Theta),\|\cdot\|_1)$ is the logarithm of the smallest number of brackets of size $R'\eta$ necessary to cover $\mathcal{P}_k(\Theta)$, we achieve that
    \begin{align*}
        H_B(R'\eta,\mathcal{P}_k(\Theta),\|\cdot\|_1)&\leq \log N=\log{N}(\eta,\mathcal{P}_k(\Theta),\|\cdot\|_1).
    \end{align*}
    Assume that the following upper bound for the covering number $\log{N}(\eta,\mathcal{P}_k(\Theta),\|\cdot\|_{1})\lesssim \log(1/\eta)$ holds true (proof is provided below), then the above result leads to 
    \begin{align*}
        H_B(R'\eta,\mathcal{P}_k(\Theta),\|\cdot\|_1)\lesssim \log(1/\eta).
    \end{align*}
    By selecting $\eta=\varepsilon/R'$, we receive that $H_B(\varepsilon,\mathcal{P}_k(\Theta),\|\cdot\|_1)\lesssim\log(1/\varepsilon)$. Furthermore, since the Hellinger distance is upper bounded by the $L^1$-norm, we reach the desired conclusion:
    \begin{align*}
        H_B(\varepsilon,\mathcal{P}_k(\Theta),h)\lesssim\log(1/\varepsilon).
    \end{align*}
    \textbf{Upper bound of the covering number.}
    For completion, we will establish the following upper bound for the covering number, i.e.,
    \begin{align*}
        \log{N}(\eta,\mathcal{P}_k(\Theta),\|\cdot\|_{1})\lesssim \log(1/\eta).
    \end{align*}
    Let us denote $\Delta:=\{(\beta_0,\beta_1)\in\mathbb{R}\times\mathbb{R}^d:(\beta_0,\beta_1,a,b,\sigma)\in\Theta\}$ and $\Omega:=\{(a,b,\sigma)\in\mathbb{R}^d\times\mathbb{R}\times\mathbb{R}_+:(\beta_{0},\beta_{1},a,b,\sigma)\in\Theta\}$. Since $\Theta$ is a compact set, $\Delta$ and $\Omega$ are also compact. Thus, we can find $\eta$-covers $\Delta_{\eta}$ and ${\Omega}_{\eta}$ for $\Delta$ and $\Omega$, respectively. It can be verified that $|\Delta_{\eta}|\leq \mathcal{O}(\eta^{-(d+1)k})$ and $|\Omega_{\eta}|\lesssim \mathcal{O}(\eta^{-(d+2)k})$.
    
    For each mixing measure $G=\sum_{i=1}^k\exp(\beta_{0i})\delta_{(\beta_{1i},a_i,b_i,\sigma_i)}\in\mathcal{O}_k(\Theta)$, we consider another one denoted by $\widetilde{G}:=\sum_{i=1}^k\exp(\beta_{0i})\delta_{({\beta}_{1i},\overline{a}_i,\overline{b}_i,\overline{\sigma}_i)}$, where $(\overline{a}_i,\overline{b}_i,\overline{\sigma}_i)\in{\Omega}_{\eta}$ such that $(\overline{a}_i,\overline{b}_i,\overline{\sigma}_i)$ are the closest to $(a_i,b_i,\sigma_i)$ in that set for all $i\in[k]$. In addition, we also take into account the following mixing measure $\overline{G}:=\sum_{i=1}^k\exp(\overline{\beta}_{0i})\delta_{({\overline{\beta}}_{1i},\overline{a}_i,\overline{b}_i,\overline{\sigma}_i)}$, where $(\overline{\beta}_{0i},\overline{\beta}_{1i})\in\Delta_{\eta}$ are the closest to $(\beta_{0i},\beta_{1i})$ in that set. We can verify that the conditional density $g_{\overline{G}}$ belongs to the following set:
   \begin{align*}
        \mathcal{R}:=\left\{g_{G}\in\mathcal{P}_k(\Theta):(\beta_{0i},\beta_{1i})\in\Delta_{\eta},~(a_i,b_i,\sigma_i)\in\Omega_{\eta}, \ \forall i\in[k]\right\}.
    \end{align*}
    Let us denote $\softmax(\beta_{1i}^{\top}X+\beta_{0i}):=\dfrac{\exp(\beta_{1i}^{\top}X+\beta_{0i})}{\sum_{j=1}^{k}\exp(\beta_{1j}^{\top}X+\beta_{0j})}$.
    From the formulation of $\widetilde{G}$, we get the following bounds:
    \begin{align}
        \label{eq:first_term}
        \|g_{G}-g_{\widetilde{G}}\|_{1}&\leq \sum_{i=1}^{k}\int_{\mathcal{X}}\softmax(\beta_{1i}^{\top}X+\beta_{0i})\Big|f(Y|(a_i)^{\top}X+b_i,\sigma_i)-f(Y|(\overline{a}_i)^{\top}X+\overline{b}_i,\overline{\sigma}_i)\Big|\dint X\nonumber\\
        &\leq \sum_{i=1}^{k}\int_{\mathcal{X}}\Big|f(Y|(a_i)^{\top}X+b_i,\sigma_i)-f(Y|(\overline{a}_i)^{\top}X+\overline{b}_i,\overline{\sigma}_i)\Big|\dint X\nonumber\\
        &\lesssim\sum_{i=1}^{k}(\|a_{i}-\overline{a}_i\|+|b_i-\overline{b}_i|+|\sigma_i-\overline{\sigma}_i|)\nonumber\\
        &\lesssim\eta,
    \end{align}
    where the second inequality follows from the facts that $\mathcal{X}$ is a bounded set. Note that $\softmax$ is a Lipschitz function with Lipschitz constant $L\geq 0$. Additionally, since $\mathcal{X}$ is a bounded set, there exists a constant $B>0$ such that $\|X\|\leq B$ for any $X\in\mathcal{X}$. As a result, we get
    \begin{align}
        \label{eq:second_term}
        \|g_{\widetilde{G}}-g_{\overline{G}}\|_{1}&\leq \sum_{i=1}^k\int_{\mathcal{X}}\Big|\softmax(\beta_{1i}^{\top}X+\beta_{0i})-\softmax(\overline{\beta}_{1i}^{\top}X+\overline{\beta}_{0i})\Big|f(Y|(\overline{a}_i)^{\top}X+\overline{b}_i,\overline{\sigma}_i)\dint X\nonumber\\
        &\lesssim L\sum_{i=1}^{k}\int_{\mathcal{X}}\Big(\|\beta_{1i}-\overline{\beta}_{1i}\|\cdot\|X\|+|\beta_{0i}-\overline{\beta}_{0i}|\Big)\dint X\nonumber\\
        &\leq Lk\eta(B+1),
    \end{align}
    where the second inequality follows from the fact that $\softmax$ is a Lipschitz function and the Gaussian density $f(Y|(\overline{a}_i)^{\top}X+\overline{b}_i,\overline{\sigma}_i)$ is bounded.
    Putting the bounds in equations~\eqref{eq:first_term} and~\eqref{eq:second_term} together with the triangle inequality, we receive that 
    \begin{align*}
        \|g_{G}-g_{\overline{G}}\|_{1}\leq \|g_{G}-g_{\widetilde{G}}\|_{1}+\|g_{\widetilde{G}}-g_{\overline{G}}\|_{1}\lesssim\eta,
    \end{align*}
     which means that $\mathcal{R}$ is an $\eta$-cover (not necessarily smallest) of the metric space $(\mathcal{P}_k(\Theta),\|\cdot\|_1)$. By definition of the covering number, we know that
    \begin{align*}
        N(\eta,\mathcal{P}_k(\Theta),\|\cdot\|_{1})\leq |\mathcal{R}| = |\Delta_{\eta}|\times|\Omega_{\eta}|\leq \mathcal{O}(\eta^{-(d+1)k})\cdot \mathcal{O}(\eta^{(-d+2)k})\leq\mathcal{O}(\eta^{-(2d+3)k}),
    \end{align*}
    which implies that,
    \begin{align*}
        \log  N(\eta,\mathcal{P}_k(\Theta),\|\cdot\|_{\infty})\lesssim \log(1/\eta).
    \end{align*}
    Hence, the proof is completed.
\end{proof}
\subsection{Proof of Lemma~\ref{lemma:value_r}}
\label{subsec:proof:lemma:value_r}
First of all, let us recall the system of polynomial equations of interest here:
\begin{align}
    \label{eq:original_system}
    \sum_{j = 1}^{m} \sum_{\alpha \in \mathcal{I}_{\ell_{1}, \ell_{2}}} \dfrac{p_{5j}^2~ p_{1j}^{\alpha_{1}} ~p_{2j}^{\alpha_{2}} ~p_{3j}^{\alpha_{3}} ~p_{4j}^{\alpha_{4}}}{\alpha_1!~\alpha_2!~\alpha_3!~\alpha_4!} = 0,
\end{align}
where $\mathcal{I}_{\ell_{1}, \ell_{2}} = \{\alpha = (\alpha_{1}, \alpha_{2}, \alpha_{3}, \alpha_{4}) \in \mathbb{N}^{d} \times \mathbb{N}^{d} \times \mathbb{N} \times \mathbb{N}: \ \alpha_{1} + \alpha_{2} = \ell_{1}, \ \alpha_{3} + 2 \alpha_{4} = \ell_{2}- |\alpha_{2}| \}$ for any $(\ell_1,\ell_2)\in\mathbb{N}^d\times\mathbb{N}$ such that $0 \leq |\ell_{1}| \leq r$, $0 \leq \ell_{2} \leq r - |\ell_{1}|$ and $|\ell_1|+\ell_2\geq 1$.

In this proof, we denote $p_{1j} = (p_{1j1}, p_{1j2}, \ldots,p_{1jd})$ and $p_{2j} = (p_{2j1}, p_{2j2}, \ldots,p_{2jd}).$

\textbf{When $m=2$:} By observing a portion of the above system when $\ell_1= \mathbf{0}_{d}$, which is given by
\begin{align}
    \label{eq:reduced_system}
    \sum_{j=1}^{m}\sum_{\alpha_3+2\alpha_4=\ell_2}\dfrac{p_{5j}^{2}~p_{3j}^{\alpha_3}~p_{4j}^{\alpha_4}}{\alpha_3!~\alpha_4!}=0, \quad \ell_2=1,2,\ldots,r.
\end{align}
It follows from Proposition 2.1 in \cite{Ho-Nguyen-Ann-16} that the smallest natural number $r$ such that the system~\eqref{eq:reduced_system} does not have any non-trivial solutions when $m=2$ is $r=4$. It is worth noting that a solution of the system~\ref{eq:reduced_system} is considered non-trivial in \cite{Ho-Nguyen-Ann-16} if all the values of $p_{5j}$ are different from zero, whereas at least one among $p_{3j}$ is non-zero, which aligns with our definition of non-trivial solutions for the system~\eqref{eq:original_system}. Thus, we get $\Bar{r}(m)\leq 4$, and it suffices to demonstrate that $\Bar{r}(m)>3$.
Indeed, when $r=3$, the system in equation~\eqref{eq:original_system} can be written as follows:
\begin{align}
    \label{eq:system_r3}
    \sum_{j=1}^{m}p_{5j}^2p_{1jl}=0 \ \forall l \in [d],\quad \sum_{j=1}^mp_{5j}^2p_{3j}=0, \quad \sum_{j=1}^mp_{5j}^2(p_{2ju}+p_{1jv} p_{3j})=0 \ \forall u,v \in [d], \nonumber\\
    \sum_{j=1}^{m}p_{5j}^2 p_{1ju} p_{1jv}=0 \ \forall u,v \in [d],\quad \sum_{j=1}^{m}p_{5j}^2\Big(\frac{1}{2}p_{3j}^2+p_{4j}\Big)=0,\quad \sum_{j=1}^{m}p_{5j}^2\Big(\frac{1}{3!}p_{3j}^3+p_{3j}p_{4j}\Big)=0,\nonumber\\
    \sum_{j=1}^{m}p_{5j}^2 p_{1ju} p_{1jv} p_{1jl} =0 \ \forall u,v, l \in [d],\quad \sum_{j=1}^{m}p_{5j}^2\Big(\frac{1}{2} p_{1ju} p_{1jv} p_{3j}+ p_{1jl} p_{2j\tau}\Big)=0 \ \forall u,v, l, \tau \in [d] ,\nonumber\\
    \sum_{j=1}^{m}p_{5j}^2\Big(\frac{1}{2} p_{1ju}\cdot p_{3j}^2+p_{1jv} p_{4j}+p_{2jl} p_{3j}\Big)=0 \ \forall u,v,l, \tau \in [d].
\end{align}
It can be seen that the following is a non-trivial solution of the above system: $p_{5j}=1$, $p_{1j} = p_{2j} = \zerod$ for all $j\in[m]$, $p_{31}=1$, $p_{32}=-1$, $p_{41}=p_{42}=-\frac{1}{2}$. Therefore, we obtain that $\Bar{r}(m)>3$, which leads to $\Bar{r}(m)=4$.

\textbf{When $m=3$:} Note that $\Bar{r}(m)$ is a monotonically increasing function of $m$. Therefore, it follows from the previous result that $\Bar{r}(m)>\Bar{r}(2)=4$, or equivalently, $\Bar{r}(m)\geq 5$. Additionally, according to Proposition 2.1 in \cite{Ho-Nguyen-Ann-16}, we deduce that $\Bar{r}(m)\leq 6$ based on the reduced system in equation~\eqref{eq:reduced_system}. Thus, we only need to show that $\Bar{r}(m)>5$. Indeed, the system~\eqref{eq:original_system} when $r=5$ is a combination of the system in equation~\eqref{eq:system_r3} and the following system:
\begin{align*}
    \sum_{j=1}^{m}p_{5j}^2 p_{1ju} p_{1jv} p_{1jl} p_{1j\tau}=0 \ \forall u,v, l, \tau \in [d], \quad \sum_{j=1}^{m}p_{5j}^2\Big(\frac{1}{4!}p_{3j}^4+\frac{1}{2!}p_{3j}^2p_{4j}+\frac{1}{2!}p_{4j}^2\Big)=0,\\
    \sum_{j=1}^{m}p_{5j}^2\Big(\frac{1}{3!} p_{1ju} p_{3j}^3+ p_{1jv} p_{3j} p_{4j}+\frac{1}{2!} p_{2jl} p_{3j}^2+ p_{2j \tau} p_{4j}\Big)=0 \ \forall u,v, l, \tau \in [d],\\
    \sum_{j=1}^{m}p_{5j}^2\Big(\frac{1}{3!} p_{1ju_{1}} p_{1ju_{2}} p_{1ju_{3}} p_{3j}+\frac{1}{2!} p_{1jv_{1}} p_{1jv_{2}} p_{2jv_{3}} \Big)=0 \ \forall (u_{i})_{i = 1}^{3}, (v_{i})_{i = 1}^{3} \in [d]^3,\\
    \sum_{j=1}^{m}p_{5j}^2\Big(\frac{1}{2!2!} p_{1ju_{1}} p_{1ju_{2}} p_{3j}^2+\frac{1}{2!} p_{1ju_{3}} p_{1ju_{4}} p_{4j}+ p_{1ju_{5}}p_{1ju_{6}} p_{3j}\Big)=0 \ \forall (u_{i})_{i = 1}^{6} \in [d]^6, \\
    \sum_{j=1}^{m}p_{5j}^2 \prod_{i = 1}^{5} p_{1ju_{i}}=0 \ \forall (u_{i})_{i = 1}^{5} \in [d]^5,\\
    \sum_{j=1}^{m}p_{5j}^2\Big(\frac{1}{5!}p_{3j}^5+\frac{1}{3!}p_{3j}^3p_{4j}+\frac{1}{2!}p_{3j}p_{4j}^2\Big)=0,\\
    \sum_{j=1}^{m}p_{5j}^2\Big(\frac{1}{4!} p_{1ju_{1}} p_{3j}^4+\frac{1}{2!}p_{1ju_{2}} p_{3j}^2p_{4j}+\frac{1}{2!}p_{1ju_{3}} p_{4j}^2+\frac{1}{3!}p_{2ju_{4}} p_{3j}^3+p_{2ju_{5}} p_{3j}p_{4j}\Big)=0 \\
    \forall (u_{i})_{i = 1}^{5} \in [d]^5,\\
    \sum_{j=1}^{m}p_{5j}^2\Big(\frac{1}{4!} \prod_{i = 1}^{4} p_{1ju_{i}} p_{3j}+\frac{1}{3!} \prod_{i = 5}^{7}p_{1ju_{i}} p_{2ju_{8}} \Big)=0 \ \forall (u_{i})_{i = 1}^{8} \in [d]^8,\\
    \sum_{j=1}^{m}p_{5j}^2\Big(\frac{1}{2!3!} \prod_{i = 1}^2 p_{1ju_{i}} p_{3j}^3+\frac{1}{2!}  \prod_{i = 3}^4 p_{1ju_{i}} p_{3j}p_{4j}+ p_{1ju_{5}} p_{2ju_{6}} (\frac{1}{2}p_{3j}^2+ p_{4j})+\frac{1}{2!} \prod_{i = 7}^{8} p_{2ju_{i}} p_{3j}\Big)=0 \\
    \forall (u_{i})_{i = 1}^{8} \in [d]^8,\\
    \sum_{j=1}^{m}p_{5j}^2\Big(\frac{1}{3!2!} \prod_{i = 1}^{3} p_{1ju_{i}} p_{3j}^2+\frac{1}{3!} \prod_{i = 4}^{6} p_{1ju_{i}} p_{4j}+\frac{1}{2!} p_{1ju_{7}} p_{2ju_{8}} p_{3j}+\frac{1}{2!} p_{1ju_{9}} \prod_{i = 10}^{11} p_{2ju_{i}}\Big)=0\\
    \forall (u_{i})_{i = 1}^{11} \in [d]^{11}.
\end{align*}
We can verify that the following is a non-trivial solution of this system: 
\begin{align*}
    p_{5j}=1,\quad p_{1j} = p_{2j} = \zerod, \quad \forall j\in[m],\\
    p_{31}=\frac{\sqrt{3}}{3}, \quad p_{32}=-\frac{\sqrt{3}}{3}, \quad p_{33}=0,\\
    p_{41}=p_{42}=-\frac{1}{6}, \quad p_{43}=0.
\end{align*}
Hence, we conclude that $\Bar{r}(m)=6$.

\section{Experiments}
\label{sec_experiments}
In this appendix, we conduct a simulation study to empirically validate our theoretical results on the convergence rates of maximum likelihood estimation (MLE) in the softmax gating Gaussian mixture of experts established in Theorem~\ref{theorem:exact_fit} and Theorem~\ref{theorem:over_fit}.
\subsection{Numerical Schemes}

We illustrate the heterogeneity convergence rates of the MLE under the softmax gating Gaussian mixture of experts via exact-fitted and over-fitted models,
which correspond to the exact-fitted and over-fitted settings described in Sections~\ref{subsec:exact_fit} and \ref{subsec:over_fit}, respectively.
For each case, we let $X$ be uniformly distributed over $[0, 1]$ and we generate observations $Y$ from the conditional density $g_{G_{*}}(Y | X)$ of softmax gating Gaussian mixture of experts model in equation~\eqref{eq:mixture_expert}.
Here, the true mixing measure $G_{*} = \sum_{i = 1}^{k_{*}} \exp(\beta_{0i}^{*}) \delta_{(\beta_{1i}^{*}, a_{i}^{*}, b_{i}^{*}, \sigma_{i}^{*})}$, where $k_* = 2$, is specified as follows:
\begin{align*}
    \beta_{01}^{*} &= -8, \quad \beta_{11}^{*} = 25, \quad a_{1}^{*} = -20, \quad b_{1}^{*} = 15, \quad \sigma_{1}^{*} = 0.3, \\
    \beta_{02}^{*} &= 0, \quad \beta_{12}^{*} = 0, \quad a_{2}^{*} = 20, \quad b_{2}^{*} = -5, \quad \sigma_{2}^{*} = 0.4. 
\end{align*}
%
Then, we compute the MLE $\widehat{G}_n$ \wrt~a number of components $k$ for each sample. For both of these settings, we choose $k \in \left\{k_{*}, k_{*}+1, k_{*}+2\right\}$.
In order to perform the MLE, we use a numerical scheme based on the EM algorithm \cite{dempster_maximum_1977} similar to the one used by Chamroukhi et al.~\cite{chamroukhi_time_2009,chamroukhi_hidden_2010}.
Note that the main difference with a classical EM is in the maximization step, as there are no closed formulas for updating the softmax gating parameters $(\beta_{0i},\beta_{1i}), \forall i\in[k]$. For this purpose, following the results of Chamroukhi et al.~\cite{chamroukhi_time_2009,chamroukhi_hidden_2010}, see also \cite{krishnapuram_sparse_2005,chen_improved_1999,green_iteratively_1984}, we use a multi-class iterative reweighted least-squares algorithm.
All code for our simulation study below was written in Python 3.9.13 on a standard Unix machine.

We choose the convergence criteria $\epsilon = 10^{-6}$ and $2000$ maximum EM iterations. Our goal is to illustrate the theoretical properties of the estimator $\widehat{G}_n$. Therefore, we have initialized the EM algorithm in a favourable way. 
More specifically, we first randomly partitioned the set $\{1,\ldots,k\}$ into $k_*$ index sets $J_1,\ldots,J_{k_*}$, each containing at least one point, for any given $k$ and $k_*$ and for each replication.
Finally, we sampled $\beta_{1j}^{*}$ (\resp $a_{j}^{*}, b_{j}^{*}, \sigma_{j}^{*}$) from a unique Gaussian distribution centered on $\beta_{1t}^{*}$ (\resp $a_{t}^{*}, b_{t}^{*}, \sigma_{t}^{*}$), with vanishing covariance so that $j \in J_t$.

\subsection{Empirical Convergence Rates}
The empirical mean of discrepancies $\mathcal{D}_1$ and $\mathcal{D}_2$ between $\widehat{G}_n$ and $G_*$, and the choice of $k$ for exact-fitted and over-fitted models are reported in Figures~\ref{fig_plot_model_exact_fitted} and \ref{fig_plot_model_over_fitted}, respectively.
It can be observed that the average discrepancies from $\widehat{G}_n$ to $G_0$ vanish at a rate of $\mathcal{O}(n^{-1/2})$ up to a logarithmic factor, as envisaged by Theorems~\ref{theorem:exact_fit} and~\ref{theorem:over_fit}.
Although these empirical rates of convergence are similar for the two models, they imply that the convergence behaviour of the individual fitted parameters is very different in an over-fitted setting, which was already discussed in more detail in the Section \ref{subsec:over_fit}.

\subsubsection{Exact-fitted Model} \label{sec_exact_experiments}

We generate $40$ samples of size $n$ for each setting, given $200$ different choices of sample size $n$ between $10^2$ and $10^5$.
The empirical parametric convergence rate of the MLE to $G_{*}$ under the metric $\mathcal{D}_{1}$ in Figure \ref{fig_plot_model_exact_fitted} is consistent with the theoretical rates of estimating the true parameters $\exp(\beta_{0i}^{*}), \beta_{1i}^{*}$ (up to translation), $a_{i}^{*}$, $b_{i}^{*}, \sigma_{i}^{*}$ for $i \in [k_{*}]$, which are of order $\mathcal{O}(n^{-1/2})$ up to logarithmic factors. 

\begin{figure}[!ht]
    \centering
    \includegraphics[scale = .8]{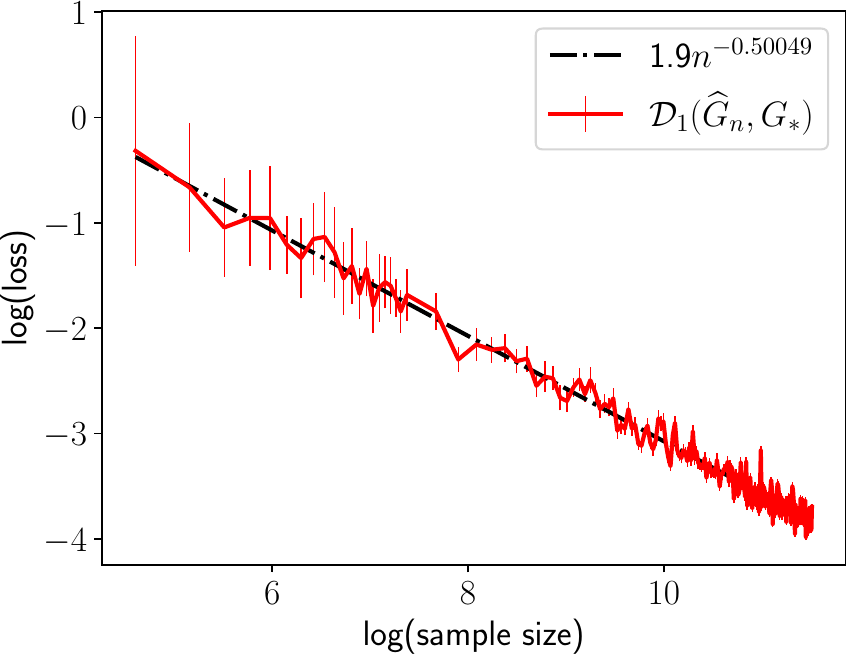}
    \caption{Log-log scaled plots of the simulation results for exact-fitted setting.
    We compute the estimator $\widehat{G}_n$ on $40$ independent samples of size $n$ between $10^2$ and $10^5$. 
    We plot its mean discrepancy from the true mixing measure in red, with error bars representing two empirical standard deviations. We also plot the least-squares fitted linear regression line of these points in a black dash-dotted line.}\label{fig_plot_model_exact_fitted}
\end{figure}

\subsubsection{Over-fitted Model} \label{sec_over_experiments}

In the over-fitted setting, we generate $40$ samples of size $n$ for each setting, given $200$ different choices of sample size $n$ between $n_{\min} \approx 14*10^3$ for $k= k_*+1$, $n_{\min} \approx 27*10^3$ for $k= k_*+2$ and $n_{\max} = 10^5$.
To the best of our knowledge, there is still a lack of theoretical understanding of EM performance, in particular an established algorithm that enjoys global convergence for the parameter estimation of the over-fitted softmax gating Gaussian mixture of experts. The most related theoretical results are only for the mixture of expert with covariate-free gating networks in \cite{Kwon_minimax_EM,kwon_em_2020,kwon_global_2019}.
This explains why in Figure \ref{fig_plot_model_over_fitted} for over-fitted setting, we have not plotted the error bar due to the instability of the EM algorithm for finding the global solution. Moreover, the sample size must be large enough so that the empirical behaviour of the MLE from the EM algorithm matches the theoretical rate of order $\mathcal{O}(n^{-1/2})$ up to a logarithmic term.

\begin{figure}[!ht]
    \centering
    
    %
    \begin{subfigure}{.47\textwidth}
    \centering
    \includegraphics[scale = .49]{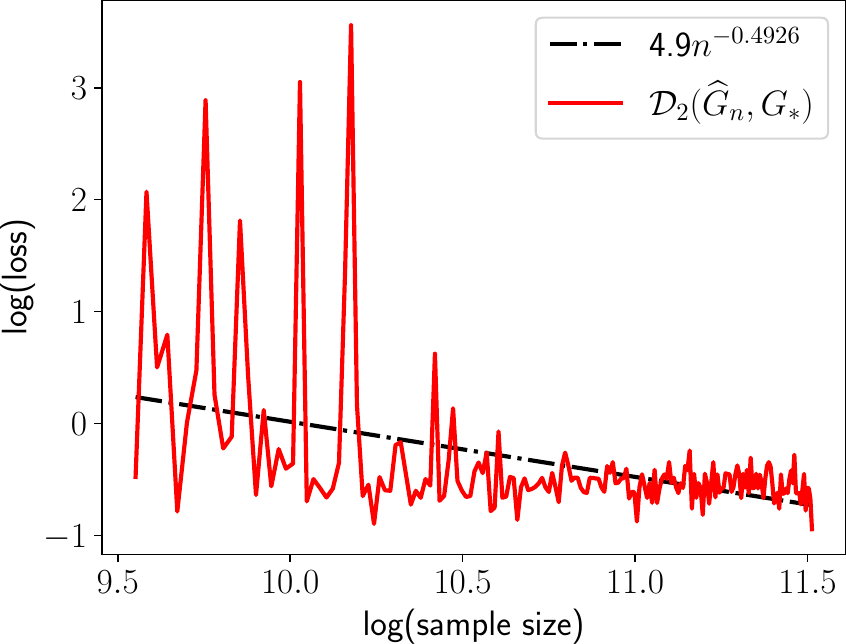}
    \caption{Over-fitted, $k = 3$, $n_{\min} = 14070$}	
    \label{subfig_model_over_fitted1_2}
\end{subfigure}
    \hspace{0.6cm}
    \begin{subfigure}{.47\textwidth}
        \centering
        \includegraphics[scale = .49]{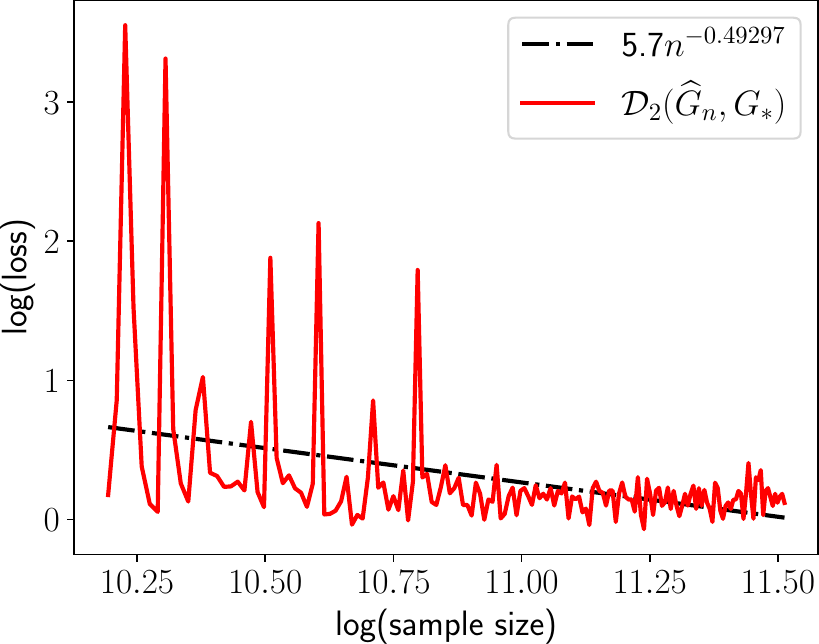}
        \caption{Over-fitted, $k = 4$, $n_{\min} = 26733$}	
        \label{subfig_model_over_fitted2_2}
    \end{subfigure}%
    \caption{Log-log scaled plots of the empirical mean of the discrepancy $\mathcal{D}_2$ between $\widehat{G}_n$ and $G_{*}$ (red curves) and least-squares fitted linear regression (black dash-dotted lines) are shown using 40 independent sample sizes $n$ between $n_{\min}$ and $10^5$. \label{fig_plot_model_over_fitted}}
\end{figure}

\end{document}